\documentclass{article}

\PassOptionsToPackage{numbers, compress}{natbib}
\usepackage{natbib}

\usepackage[utf8]{inputenc} %
\usepackage[T1]{fontenc}    %
\usepackage{lmodern}
\usepackage{fullpage}
\usepackage{hyperref}       %
\usepackage{url}            %
\usepackage{booktabs}       %
\usepackage{amsfonts}       %
\usepackage{nicefrac}       %
\usepackage{microtype}      %
\usepackage{tablefootnote}
\usepackage{hyperref}       %
\usepackage{url}            %
\usepackage{amsfonts}       %
\usepackage{nicefrac}       %
\usepackage{amsmath}
\usepackage{color}
\usepackage{standalone}
\usepackage{tikz}
\usepackage{setspace}
\usepackage{amssymb}
\usepackage{bm}
\usepackage{enumitem}

\usetikzlibrary{decorations.pathreplacing}

\usepackage{amsthm}

\usepackage{thm-restate}
\declaretheorem[name=Theorem,refname={Theorem,Theorems},Refname={Theorem,Theorems}]{theorem}
\declaretheorem[name=Lemma,refname={Lemma,Lemmas},Refname={Lemma,Lemmas},sibling=theorem]{lemma}
\declaretheorem[name=Proposition,refname={Proposition,Propositions},Refname={Proposition,Propositions},sibling=theorem]{proposition}

\newtheorem{assumption}{Assumption}

\DeclareMathOperator*{\argmax}{arg\,max}

\newcommand{\one}{\mathbb{I}}
 
\newcommand{\Reg}{\mathfrak{R}}

\newcommand{\E}{\mathbb{E}}
\newcommand{\EE}[1]{\mathbb{E}[#1]}

\newcommand{\Var}{\mathbb{V}}
\newcommand{\cA}{\mathcal{A}}

\newcommand{\cZ}{\mathcal{Z}}

\newcommand{\cS}{\mathcal{S}}

\newcommand{\cN}{\mathcal{N}}

\newcommand{\cQ}{\mathcal{Q}}

\newcommand{\cP}{\mathcal{P}}

\newcommand{\cC}{\mathcal{C}}
\newcommand{\cL}{\mathcal{L}}

\newcommand{\bR}{\mathbb{R}}

\renewcommand{\>}{\rangle}

\usepackage{array,xspace}
\usepackage[disable]{todonotes}

\newcommand{\eleanor}{\textsc{Eleanor}\xspace}

\newcommand{\val}{V}
\newcommand{\V}{\mathcal{V}}
\newcommand{\M}{\mathcal{M}}

\newcommand{\A}{\mathcal{A}}

\newcommand{\real}{\mathbb{R}}

\newcommand{\Sw}{\mathcal{S}}

\newcommand{\II}[1]{\mathbb{I}_{\left\{#1\right\}}}

\newcommand{\EEpi}[1]{\mathbb{E}_{\bpi}\left[#1\right]}
\newcommand{\PPpi}[1]{\mathbb{P}_{\bpi}\left[#1\right]}

\newcommand{\EEcc}[2]{\mathbb{E}\left[\left.#1\right|#2\right]}

\renewcommand{\th}{\ensuremath{^{\mathrm{th}}}}

\def\argmax{\mathop{\mbox{ arg\,max}}}
\newcommand{\ra}{\rightarrow}
\newcommand{\be}{\mathbf{e}}
\newcommand{\bq}{\mathbf{q}}

\newcommand{\iprod}[2]{\left\langle#1,#2\right\rangle}
\newcommand{\siprod}[2]{\langle#1,#2\rangle}
\newcommand{\biprod}[2]{\bigl\langle#1,#2\bigr\rangle}

\newcommand{\norm}[1]{\left\|#1\right\|}
\newcommand{\bnorm}[1]{\bigl\|#1\bigr\|}

\newcommand{\onenorm}[1]{\norm{#1}_1}
\newcommand{\twonorm}[1]{\norm{#1}_2}
\newcommand{\infnorm}[1]{\norm{#1}_\infty}
\newcommand{\opnorm}[1]{\norm{#1}_{\text{op}}}

\newcommand{\ev}[1]{\left\{#1\right\}}
\newcommand{\bev}[1]{\bigl\{#1\bigr\}}
\newcommand{\pa}[1]{\left(#1\right)}
\newcommand{\bpa}[1]{\bigl(#1\bigr)}

\newcommand{\wh}{\widehat}
\newcommand{\wt}{\widetilde}

\newcommand{\bpi}{\bm{\pi}}

\newcommand{\tV}{\wt{V}}
\newcommand{\tg}{\wt{g}}
\newcommand{\tP}{\wt{P}}
\newcommand{\hP}{\wh{P}}
\newcommand{\hM}{\wh{M}}
\newcommand{\tM}{\wt{M}}
\newcommand{\hp}{\wh{p}}

\newcommand{\CB}{\textup{CB}}

\newcommand{\ttheta}{\wt{\theta}}

\newcommand{\transpose}{^\mathsf{\scriptscriptstyle T}}

\definecolor{PalePurp}{rgb}{0.66,0.57,0.66}

\definecolor{pinkish}{rgb}{0.8,0.2,0.5}
\newcommand{\redd}[1]{\textcolor{pinkish}{#1}}

\newcommand{\todoc}[1]{\textcolor{blue!60!white}{\bf [[Ciara: #1]]}}

\title{A Unifying View of Optimism in Episodic Reinforcement Learning}

\author{%
   Gergely Neu \\
  Universitat Pompeu Fabra\\
  Barcelona, Spain \\
  \texttt{gergely.neu@gmail.com} \\
 \and
  Ciara Pike-Burke\footnote{This work was done while CPB was at Universitat Pompeu Fabra and Barcelona Graduate School of Economics.} \\
 Imperial College London\\
 London, UK \\
   \texttt{c.pikeburke@gmail.com} \\
}

\begin{document}

\maketitle

\begin{abstract}%
The principle of ``optimism in the face of uncertainty'' underpins many theoretically successful reinforcement learning algorithms.
In this paper we provide a general framework for designing, analyzing and implementing such algorithms in the episodic reinforcement learning problem. 
This framework is built upon Lagrangian duality, and demonstrates that every \emph{model-optimistic} algorithm that constructs an optimistic MDP has an equivalent representation as a \emph{value-optimistic} dynamic programming algorithm. 
Typically, it was thought that these two classes of algorithms were distinct, with model-optimistic algorithms 
benefiting from a cleaner probabilistic analysis while value-optimistic algorithms 
are easier to implement and thus more practical.
With the framework developed in this paper, we show that it is possible to get the best of both worlds 
by providing a class of algorithms which have a computationally efficient dynamic-programming implementation and also a simple probabilistic analysis.
Besides being able to capture many existing algorithms in the tabular setting, our framework 
can also address large-scale problems under realizable function approximation, where it enables a simple 
model-based analysis of some recently proposed methods.

\end{abstract}

\section{Introduction}
Reinforcement learning (RL) is a key framework for sequential decision-making under uncertainty \citep{SB18,Sze10}.
In an RL problem, a learning agent interacts with a reactive environment by taking a series of actions. 
Each action provides the agent with some reward, but also takes them to a new state which determines their future rewards. 
The aim of the agent is to pick actions to maximize their total reward in the long run. %
The learning problem is typically modeled by a Markov Decision Process (MDP, \citep{Puterman1994}) where the agent does not know the rewards or transition probabilities. 
Dealing with this lack of knowledge %
 is a crucial challenge in reinforcement learning: %
the agent must maximize their rewards \emph{while simultaneously} learning about the environment. 
One class of algorithms that have been successful at balancing this \emph{exploration versus exploitation} trade-off %
are \emph{optimistic reinforcement learning algorithms}. 
In this paper, we provide a new framework for studying these algorithms.

Optimistic algorithms are built upon the principle of  ``optimism in the face of uncertainty'' (OFU).
They operate by maintaining a set of statistically plausible models of the world, and selecting actions to maximize the returns in the best plausible world. 
Such algorithms were first studied in the context of  %
multi-armed bandit problems \citep{LaiRo85,Agr95,BuKa96,auer2002finite,LSz19book}, and went on to inspire numerous algorithms for reinforcement learning. 
A closer look at the literature reveals two main approaches to incorporate optimism in RL. %
In the first, optimism is introduced through estimates of the MDP: these approaches build a set of plausible MDPs by constructing confidence bounds around the empirical transition and reward functions, and select the policy that generates the highest total expected reward in the best feasible MDP. 
We refer to this family of methods as \emph{model-optimistic}.
Examples of model-optimistic methods include RMAX \cite{BraTen02,KakadeThesis:2003,SziSze10} and UCRL2 \cite{auer2007logarithmic,jaksch2010near,SL08}. 
While conceptually appealing, model-optimistic methods tend to be difficult to implement due to the complexity of jointly optimizing over models and policies. %
Another approach to incorporating optimism into RL is to construct optimistic upper bounds on the \emph{optimal value functions} which are (informally) 
the total expected reward of the optimal policy in the true MDP.
The optimistic policy
greedily picks actions to maximize the optimistic values. 
We refer to this class of methods as \emph{value-optimistic}.
Examples of algorithms in this class are MBIE-EB \cite{SL08}, UCB-VI \cite{azar2017minimax} and UBEV \cite{DLB17}. %
These algorithms compute the optimistic value functions via dynamic programming (cf.~\citealp{Ber07:DPbookVol2}), 
making them computationally efficient and compatible with empirically successful RL algorithms  that are typically based on value functions. %
One downside of these approaches is that their probabilistic analysis is often excessively complex. %

While these two approaches may look very different on the surface, we show in this paper that there is in fact a very strong connection between them. 
Our first contribution is to show that the optimization problems associated with these two problems exhibit strong duality. This implies that that for every model-optimistic approach, there exists an equivalent value-optimistic approach. %
This bridges the gap between the conceptually simple model-optimistic approaches and the computationally efficient value-optimistic approaches. 
This result enables us to develop a general framework for designing, analyzing and implementing optimistic algorithms in the episodic reinforcement learning problem. 
Our framework is broad enough to capture many existing algorithms for tabular MDPs, and for these we provide a simple analysis and computationally efficient implementation. 
The framework can also be extended to incorporate realizable linear function approximation, where it leads to 
a new model-based analysis of two %
value-optimistic algorithms. %
Our analysis involves constructing a new model-optimistic formulation for factored linear MDPs which may be of independent interest.

 \if0
The paper proceeds as follows. In Section~\ref{sec:prob}, we formally define the episodic RL problems.
In Section~\ref{sec:lit}, we discuss related work. In Section~\ref{sec:lps} we present our framework for optimism in tabular MDPs and show how this leads to improved understanding of existing algorithms.
 In Section~\ref{sec:lin}, we present our results for factored linear MDPs.
We conclude in Section~\ref{sec:conc}.
\fi

\vspace{-5pt}
\section{Background on Markov Decision Processes}\label{sec:background}
\vspace{-5pt}
\paragraph{Finite-horizon episodic MDPs.} %
A finite episodic Markov decision process (MDP) is a tuple $(\cS, \cA, H, \alpha, P, r)$ 
where $\cS$ and $\cA$ are the finite sets of states and actions with $S=|\cS|, A=|\cA|$, $H$ is the (fixed) 
episode length and $\alpha$ is the initial state distribution. The transition functions, $P = \{P_h(\cdot|x,a)\}_{h,x,a}$, 
give the probability $P_h(x'| x,a)$ of reaching state $x' \in \cS$ after playing action $a\in \cA$ from state $x \in \cS$ 
at stage $h$ of an episode, and the reward function, $r:\cS \times \cA \to [0,1]$, assigns a 
reward to each state-action pair. 
For simplicity, we assume $r$ is known and deterministic\footnote{The extension to unknown rewards is fairly straightforward using upper confidence bounds on $r$}, and each episode $t$ begins from state $x_{1,t} \sim \alpha$. 
If no further structure is assumed, we call the MDP \emph{tabular}. 
We define a stationary \emph{policy} $\pi: \cS \to \cA$ as a mapping from 
states to actions, and a nonstationary policy as a collection $\bpi = \{\pi_h\}_{h=1}^H$ 
of stationary policies for each stage $h$ of an episode,
and note that these 
are sufficient %
for maximizing reward in an episode. %
We denote by $\PPpi{\cdot}$ and $\EEpi{\cdot}$ a probability or expectation 
with respect to the distribution of state-action sequences under policy $\bpi$ in the MDP, and let $[H]=\{1,\dots, H\}$ and $\mathcal{Z}=\Sw\times\A$.

\vspace{-5pt}
\paragraph{Value functions and dynamic programming.} 
For any policy $\bpi$, we define the \emph{value function} at each state $x \in \cS$ and stage $h$ as the expected total reward  from running policy $\bpi$ from that point on:
\[
V_h^{\pi}(x) = \E_{\bpi}\bigg[ \sum_{l=h}^H r_l(x_l,\pi_l(x_l)) \bigg| x_h = x \bigg].
\]
We denote by $\bpi^*$ an \emph{optimal policy} satisfying $V_h^{\pi^*}(x) = \max_{\pi} V_h^{\pi}(x)$ for all $x\in \cS,h \in [H]$, and the %
 \emph{optimal value function} by $V_h^*(x) = V_h^{\pi^*}(x)$.  
The total expected reward of $\bpi^*$ in an episode starting from state $x_1$ is $V_1^*(x_1)$.
We define the \emph{(optimal) action-value function} for each $x,a,h$ as
\[
 Q_h^{\pi}(x,a) = \E_{\bpi}\bigg[ \sum_{l=h}^H r_l(x_l,\pi_l(x_l)) \bigg| x_h = x , a_h = a\bigg] 
 \qquad\mbox{and}\qquad Q_h^{*}(x,a) = \max_{\pi} Q_h^{\pi}(x,a).
\]
It is easily shown that the value functions satisfy the \emph{Bellman equations} for all $x,a,h$:
\begin{align*}
\begin{aligned}
&V _h^{\pi}(x) = Q_h^{\pi}(x,\pi(x)) , \quad V_{H+1}^{\pi}(x) = 0
\\ &Q_h^{\pi}(x,a) = r_h(x,a) + \sum_{y \in \cS} P_h(y|x,a)V_{h+1}^{\pi}(y)
\end{aligned}
\, \text{ and } \,
\begin{aligned}
&V _h^*(x) = \max_{a \in \cA} Q_h^*(x,a), \quad V_{H+1}^*(x) = 0
\\ &Q_h^*(x,a) = r_h(x,a) + \sum_{y \in \cS} P_h(y|x,a)V_{h+1}^*(y).
\end{aligned}
\end{align*}
In a fixed MDP, an optimal policy can be found by solving the above system of equations by backward recursion through the stages $H,H-1,\dots,1$, a method known as \emph{dynamic programming} \citep{bellman57,howard60,Ber07:DPbookVol2}.
 
\paragraph{Optimal control in MDPs by linear programming.}
A key technical tool underlying our results is a classic linear-programming (LP) formulation for solving MDPs \citep{Man60,Ghe60,Den70}.
To state this formulation, we will represent value functions by $S$-dimensional vectors and
define the $S\times S$ \emph{transition matrix} $P_{h,a}$ for each $h,a$, acting on a value function $V$ as 
$\pa{P_{h,a} V}(x) = \sum_{x'}P_{h,a}(x'|x,a) V(x')$. Then, the following LP can be seen to be equivalent to the Bellman optimality equations:
\begin{equation}\label{eq:V_LP}
 \underset{V}{\text{minimize}} \quad V_1(x_1) 
 \left| 
 \hspace{-10pt}
 \begin{array}{lll}
&\text{subject to} 
\\& V_h \geq r_a + P_{h,a} V_{h+1} & \forall a \in \cA, h \in [H],
 \end{array} \right.
\end{equation}
where the inequality is to be understood to hold entrywise.
Defining the vector $q_{h,a} = (q_h(x_1,a), \dots, q_h(x_S,a))\transpose$, the dual of the above LP is given as
\begin{equation}\label{eq:primal_full}
\underset{q \in \cQ(x_1)}{\text{maximize}} \iprod{q_{h,a}}{r_a}
\left|
\begin{array}{lll}
&\text{subject to} 
\\ & \sum_a q_{h+1,a} = \sum_a P_{a,h}\transpose q_{h,a} \qquad &\forall x \in \cS, h\in [H], %
\end{array} \right.
\end{equation}
for $\cQ(x_1) = \{ q\in\real_+^{\Sw\times\A \times H}: \sum_a q_{1}(x,a) = \one \{x=x_1\},  q_{h}(x,a)\ge 0 \, \forall (x,a) \in \cZ, h \in [H] \}$.
Feasible points of the above LP can be interpreted as \emph{occupancy measures}. %
For a fixed policy $\bpi$, the occupancy measure $q^{\bpi}$ of policy $\bpi$ at the state-action pair $x,a$ is defined as
$q^{\bpi}_h(x,a) = \PPpi{x_h =x, a_h=a}$.
It can be shown that the set of occupancy measures is uniquely characterized by  $\cQ(x_1)$ and the constraint in~\eqref{eq:primal_full}. 
Each feasible $q$ induces a stochastic policy $\pi^q$ defined as $\pi^q_h(a|x) = \frac{ q_h(x,a)}{\sum_{a' \in \cA}  q_h(x,a')}$ if the denominator is nonzero, and defined arbitrarily otherwise. The optimal solution $q^*$ to the LP in \eqref{eq:primal_full} can be shown to induce an optimal policy $\bpi^{*}$ which satisfies the Bellman optimality equations.
For proofs and further details of this formulation, see \citet{Puterman1994}.

\paragraph{Linear function approximation in MDPs.}
In most practical problems, %
the state space is too large to use the above results and it is common to  
work with parameterized estimates of the quantities of interest.
We focus on the classic idea of 
\emph{linear function approximation} to represent the action-value functions as linear functions of some fixed $d$-dimensional 
feature map $\varphi:\Sw\ra\real^d$, so $Q_h^\theta(x,a) = \iprod{\theta_{h,a}}{\varphi(x)}$ for some $\theta_{h,a}\in\real^d$ for each action $a$ and stage $h$. 
To avoid technicalities, we assume that the state space $\Sw$ is still finite, although potentially very large.
This allows us to define the $\Sw \times d$ \emph{feature matrix} $\Phi$ with its $x$\th row being $\varphi\transpose(x)$, and
represent the action-value function as $Q_{h,a} = \Phi\theta_{h,a}$.
We make the following assumption:
\begin{assumption}[Factored linear MDP \citep{factlin,PS16,jin2020provably}]\label{assn:real}
For each action $a$ and stage $h$, there exists a $d\times\Sw$ matrix $M_{h,a}$ and a vector $\rho_a$ such that the transition matrix can be written 
as $P_{h,a} = \Phi M_{h,a}$, and the reward function as $r_a = \Phi \rho_a$. Furthermore, the rows of $M_{h,a}$, $m_{h,a}(x)$, satisfy 
$\onenorm{m_{h,a}(x)}\le C_P$ for all $(x,a,h)$, $\rho$ satisfies $\twonorm{\rho_a} \le C_r$, and %
$\twonorm{\varphi(x)}\le R$ for some positive constants $C_P, C_r, R$.
\end{assumption}
As shown by \citet{jin2020provably}, this assumption implies that %
for every policy $\bpi$,
there exists a $\theta^{\bpi}$ such that $Q^{\bpi}_h(x,a) = \siprod{\theta^{\bpi}_{h,a}}{\varphi(x)}$. 
We now show that factored linear MDPs also enjoy a strong dual realizability property. %
Let $W_{h,a}$ be an arbitrary symmetric  $\Sw\times\Sw$ \emph{weight matrix}
for each action $a$ such that $\Phi\transpose W_{h,a}\Phi$ is full rank, and notice that, due to the 
realizability of the action-value functions,
the optimal value functions 
 can be written as
 the solution to the following LP:
\begin{equation*}
 \underset{V,\theta}{\text{minimize}} \quad V_1(x_1)
 \left| 
 \begin{array}{rll}
 \text{subject to}\\
 \quad\theta_{h,a} &= \pa{\Phi\transpose W_a\Phi}^{-1} \Phi\transpose W_{h,a} \pa{r_a + P_{h,a} V_{h+1}} &\forall  h\in[H],
 \\
 \quad V_{h} &\geq \Phi\theta_{h,a} &\forall a \in \cA, h \in [H].
 \end{array}
 \right.
\end{equation*}
Under Assumption~\ref{assn:real}, this LP is feasible and has a finite solution. 
It also holds that
parameter vectors $\theta_{h,a}$ are independent of the choice of the weight matrix $W_{h,a}$.
The dual of this LP can be written as
\begin{equation}\label{eq:linprimal}
\underset{q \in \cQ(x_1), \omega}{\text{maximize}} \quad  \sum_{h=1}^H \sum_a\iprod{\Phi \omega_{h,a}}{r_a} 
 \left| 
 \hspace{-10pt}
 \begin{array}{rll}
 \text{subject to}
 \\
 \quad \sum_a {\bf q}_{h+1,a} &= \sum_a P_{h,a}\transpose W_{h,a} \Phi \omega_{h,a} \qquad &\forall h\in[H]
 \\
 \quad \Phi\transpose {\bf q}_{h,a} &= \Phi\transpose W_{h,a} \Phi \omega_{h,a} \qquad &\forall a \in \cA, h\in[H] 
 \end{array}\right.
\end{equation}
Due to the boundedness and feasibility of the primal LP, the dual is also feasible and bounded. 
Moreover, any vector ${\bf q}$ that is feasible for \eqref{eq:linprimal} 
is also a feasible solution to the full 
LP~\eqref{eq:primal_full}, since %
\[
\sum_a {\bf q}_{h+1,a} = \sum_a P_{h,a}\transpose \Phi W_{h,a} \omega_{h,a} =  \sum_a M_{h,a}\Phi\transpose W_{h,a} \Phi \omega_{a,h} = 
\sum_a M_{h,a}\Phi\transpose \bq_{h,a} = \sum_a P_{h,a} \bq_{h,a}.
\]
Thus, for factored linear MDPs, the set of occupancy measures is \emph{exactly} characterized by the constraints in~\eqref{eq:linprimal}. 
To the best of our knowledge, these LP formulations and results are novel and may have other uses beyond the setting of factored linear MDPs. 
For instance, MDPs exhibiting zero inherent Bellman error \citep{zanette2020learning} can be also seen to 
yield a feasible and finite solution for both LPs, although the above dual realizability property is not guaranteed to hold for all occupancy measures.

\section{Regret Minimization in  Episodic Reinforcement Learning} \label{sec:lit}
We consider algorithms that sequentially interact with a fixed but \emph{unknown} MDP over $K$ episodes.
In each episode, $t$, the algorithm selects a policy $\bpi_t$ with the aim of maximizing the cumulative reward in that episode.
We assume that the learner has no prior knowledge of the transition function, and can only learn about the MDP through interaction. 
 The performance is measured in terms of the \emph{regret}, %
\[
\Reg_T = \sum_{t=1}^K ( V^*_1(x_{1,t}) - V_1^{\bpi_k}(x_{1,t}))
\]
where $T=KH$ is the total number of rounds and $x_{1,t} \sim \alpha$ is the initial state in episode $t$. 
 
 In tabular MDPs, the lower bound on the regret is $\Omega(H\sqrt{SAT})$ \citep{jaksch2010near,osband2016lower,jin2018q}\footnote{The extra $\sqrt{H}$ due to having a different $P_h$ per stage. We use $\wt{O}(\cdot)$ to denote order up to logarithmic terms.}.%
 Most optimistic algorithms are either model-optimistic or value-optimistic.
 Some notable model-optimistic approaches %
 are UCRL2 \citep{jaksch2010near} and REGAL \citep{bartlett2012regal} which have regret $\wt{O}(S\sqrt{H^3AT})$, %
and KL-UCRL \citep{filippi2010optimism,talebi2018variance} %
and UCRL2-B \citep{improved_analysis_UCRL2B}, which have regret $\wt{O}(H\sqrt{S\Gamma AT})$ where $\Gamma \leq S$ is the maximal number of reachable states from any $(x,a) \in \cZ$ and stage $h \in [H]$. %
 These algorithms differ predominantly in the choice of distance and concentration bounds defining the set of feasible transition functions.
Value-optimistic approaches often enjoy low regret at a cost of a more complex analysis. Some examples of these include UBEV \citep{DLB17} which has regret $\wt{O}(\sqrt{H^5SAT})$, and UCB-VI \citep{azar2017minimax} which has regret $\wt{O}(H\sqrt{SAT})$, matching the lower bound. %
We note that %
optimism has also been used in the model free setting (e.g. \cite{jin2018q}), and that other non-optimistic approaches have also been successful at regret minimization (see e.g. \cite{osband2013more,agrawal2017optimistic}).
Other related works include
\cite{zimin2013online,rosenberg2019online} which also use occupancy measures,  \cite{tewari2008optimistic} where optimistic linear programs are used, and \cite{maillard2014hard,talebi2018variance} which exploit duality in specific cases.

For factored linear MDPs, all optimistic algorithms we are aware of
 are value-based, without a clear model-based interpretation: LSVI-UCB \cite{jin2020provably} uses dynamic programming and has regret $\wt{O}(\sqrt{d^3H^3T})$, while \textsc{Eleanor} \cite{zanette2020learning} has regret $\wt{O}(Hd\sqrt{T})$ but requires solving a complex optimization problem in each episode. 
The UC-MatrixRL algorithm \cite{yang2019reinforcement} considers a %
 different problem %
 with two feature maps but is model-based with regret $\wt{O}(H^2d\sqrt{T})$. %
Non-optimistic approaches %
 include \cite{russo2019worst,zanette2019frequentist}.

\section{Optimism in Tabular Reinforcement Learning} \label{sec:tab}
 We now present our main contribution: a general framework for designing, analyzing and implementing 
 optimistic RL algorithms in episodic tabular MDPs. Our framework naturally extends the LPs in~\eqref{eq:V_LP} %
 and~\eqref{eq:primal_full} %
  to account for \emph{uncertainty} %
 about the transition function. 
 We use confidence intervals for the transition functions to express uncertainty in the space of occupancy measures %
 and maximize the expected reward over this  set.
 Our key result shows that the dual of this optimization problem can be written in dynamic-programming form with added
 exploration bonuses, the size of which are
determined by the shape of the primal confidence sets. %
 
We define the uncertainty sets using confidence intervals around a reference transition function $\wh{P}$. 
For a divergence measure $D(p,p')$ %
between probability distributions $p,p'$, define the confidence sets %
\begin{align}%
\cP = \ev{ \wt{P} \in \Delta: D\pa{\tP_h(\cdot|x,a) , \wh{P}_h(\cdot|x,a)} \leq \epsilon(x,a) \quad \forall (x,a) \in \cS\times\cA, h \in [H]},
\label{eq:Pdef}
\end{align}
where $\Delta$ is the set of valid transition functions. %
 We assume that the divergence measure $D$ is jointly convex in its arguments so that $\cP$ is convex, and that $D$ is positive homogeneous
so for any $\alpha\ge 0$, $D(\alpha p,\alpha p') = \alpha D(p,p')$.
Note that the distance 
$\norm{p - p'}$ for any norm and all $f$-divergences satisfy these conditions \citep{LV06}. %
Using %
$\cP$, we modify %
\eqref{eq:primal_full} %
to get the optimistic primal optimization problem, %
\begin{equation}\label{eq:primalMDPopt}
\underset{\substack{q \in \cQ(x_1) \\ \wt P \in \Delta}}{\text{maximize}}  \quad \sum_{h=1}^H \iprod{q_{h,a}}{r} 
 \left| 
 \hspace{-10pt}
 \begin{array}{lll}
& \text{subject to} & 
 \\ 
 &\sum_{a} q_{h+1,a} = \sum_a \wt{P}_{h,a}\transpose q_{h,a} \,  &\forall \,h\in [H] 
 \\
 & D\pa{\wt P_h(\cdot|x,a), \wh{P}_h(\cdot|x,a)} \leq \epsilon(x,a) \, &\forall (x,a) \in \cZ, \, h \in [H]
 \end{array}\right.
\end{equation}

We pick $\epsilon$ such that $P \in \cP$ with high probability. In this case, the above optimization problem 
returns an ``optimistic'' occupancy measure with higher expected reward %
than the true optimal policy.
Unfortunately, the optimization problem in \eqref{eq:primalMDPopt} is not convex due to the bilinear constraint $q_{h+1,a} = \sum_a \wt{P}_{h,a}\transpose q_{h,a}$. 
Our main result below shows that it is %
still possible to obtain an equivalent %
value-optimistic formulation via Lagrangian duality and an appropriate reparametrization. %
We make use of the \emph{conjugate}
of the divergence $D$ defined for any function $z$, distribution $p'$ and threshold $\epsilon$ as
\[
 D_*\pa{z\middle| \epsilon, p'} = \max_{p \in \Delta} \ev{\iprod{z}{p-p'} \middle| D(p,p') \leq \epsilon}.
\]
\vspace{-5pt}
\begin{restatable}{proposition}{dual}\label{prop:dual}
Let $ \CB_h(x,a) = D_*(\val_{h+1}| \epsilon_h(x,a),\wh{P}_h(\cdot|x,a))$ and denote its vector representation by $\CB_{h,a}$. 
The optimization problem in \eqref{eq:primalMDPopt} can be equivalently written as
\begin{equation} \label{eq:dualtab}
\underset{\val}{\text{minimize }}   \val_1(x_1)
\left| 
 \begin{array}{lll}
&\text{subject to } 
\\& \val_h \geq r_a + \hP_{h,a} \val_{h+1} + \CB_{h,a} & \forall a \in \cA, h \in [H]
\end{array} \right.
\end{equation}
\end{restatable}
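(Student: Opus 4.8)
The plan is to follow the route suggested by the paper: introduce the Lagrangian of \eqref{eq:primalMDPopt}, dualize only the flow (bilinear) constraints, and read off \eqref{eq:dualtab} as the resulting dual, deferring the non-convexity to a separate strong-duality argument. Concretely, I would attach a multiplier vector $V_{h+1}\in\real^S$ to each constraint $\sum_a q_{h+1,a} = \sum_a \tP_{h,a}\transpose q_{h,a}$, forming $L(q,\tP,V) = \sum_{h,a}\iprod{q_{h,a}}{r_a} + \sum_h \iprod{V_{h+1}}{\sum_a \tP_{h,a}\transpose q_{h,a} - \sum_a q_{h+1,a}}$, while keeping $q\in\cQ(x_1)$ and $\tP\in\cP$ as hard constraints. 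Minimizing $L$ over the free multipliers $V$ returns the summed reward $\sum_{h,a}\iprod{q_{h,a}}{r_a}$ when $(q,\tP)$ is flow-feasible and $-\infty$ otherwise, so that the primal value equals $\max_{q,\tP}\min_V L$; the dual $\min_V\max_{q,\tP}L$ therefore upper-bounds it, and the whole task is to compute this dual and show the bound is tight.

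To compute $\max_{q,\tP}L$, I would first regroup $L$ so that each $q_{h,a}$ multiplies $r_a + \tP_{h,a}V_{h+1} - V_h$ (with $V_1$ absent and $V_{H+1}=0$), and maximize over $\tP$ before $q$. Because $q\ge 0$, the maximization over $\tP$ decouples across $(h,x,a)$ and the nonnegative weight $q_h(x,a)$ pulls out of each inner problem $\max_{p:D(p,\hP_h(\cdot|x,a))\le\epsilon_h(x,a)}\iprod{p}{V_{h+1}}$; adding and subtracting $\hP_h(\cdot|x,a)$ turns this maximum into $(\hP_{h,a}V_{h+1})(x) + D_*(V_{h+1}\mid\epsilon_h(x,a),\hP_h(\cdot|x,a))$, which is exactly $(\hP_{h,a}V_{h+1})(x)+\CB_h(x,a)$. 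This is the step that manufactures the exploration bonus. Maximizing the resulting linear form over $q$ then splits in two: for $h\ge 2$ the variables $q_{h,a}$ are unconstrained nonnegatives, so finiteness forces $V_h\ge r_a+\hP_{h,a}V_{h+1}+\CB_{h,a}$ for every $a$; at $h=1$ the constraint $\sum_a q_1(x,a)=\one\{x=x_1\}$ makes the inner maximum equal $\max_a[r_1(x_1,a)+(\hP_{1,a}V_2)(x_1)+\CB_1(x_1,a)]$, which is the minimal value of $V_1(x_1)$ compatible with the stage-$1$ constraints. Collecting these observations identifies $\min_V\max_{q,\tP}L$ with \eqref{eq:dualtab}, and hence establishes weak duality.

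The hard part is strong duality, since the bilinear constraint makes \eqref{eq:primalMDPopt} non-convex and a blunt appeal to a minimax theorem fails: $L$ is only bilinear, not jointly concave, in $(q,\tP)$. I would circumvent this in two stages. First, for each fixed $\tP\in\cP$ the inner maximization over $q$ is precisely the occupancy-measure LP \eqref{eq:primal_full} of the MDP with transition kernel $\tP$, which is finite and feasible, so by the LP strong duality recalled in the background its value equals the optimal value of that MDP; consequently the primal value is the best achievable value over all plausible models, $\max_{\tP\in\cP}\min\{V_1(x_1): V_h\ge r_a+\tP_{h,a}V_{h+1}\}$. Second, I would show this best-over-models quantity coincides with the least feasible solution of \eqref{eq:dualtab}, namely the fixed point of the optimistic Bellman recursion $V_h(x)=\max_a[r_h(x,a)+(\hP_{h,a}V_{h+1})(x)+\CB_h(x,a)]$, by a backward induction: the product structure of $\cP$ lets the maximization over $\tP$ be performed stage by stage and state-action by state-action, and monotonicity of the recursion in the continuation value $V_{h+1}$ guarantees that greedily choosing the most favorable transition from each confidence ball is globally optimal. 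The crux is exactly this interchange of $\max_{\tP\in\cP}$ with the dynamic-programming minimization over $V$; an alternative is to establish it through Sion's theorem applied to the partially maximized Lagrangian $\max_{\tP\in\cP}L(q,\tP,V)$, which is convex in $V$ and linear in $q$, using the compactness of $\cP$ and the fact that flow-feasible occupancy measures carry unit mass at each stage and hence live in a compact polytope.
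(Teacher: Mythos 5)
Your proposal is correct, and while your Lagrangian computation (dualize only the flow constraints, use $q\ge 0$ to pull the weights out of the inner maximization over $\tP$, and recognize the conjugate $D_*$ as the bonus) is essentially identical to the paper's, your argument for \emph{tightness} takes a genuinely different route. The paper handles the non-convexity by the reparametrization $J_h(x,a,x') = \tP_h(x'|x,a)q_h(x,a)$, which uses positive homogeneity of $D$ to convexify the constraint set, verifies the Slater condition, and invokes generic convex strong duality; everything else then follows from the standard Lagrangian machinery. You instead sidestep convexification entirely: you decompose the primal \eqref{eq:primalMDPopt} as $\sup_{\tP\in\cP}$ of classical occupancy-measure LPs \eqref{eq:primal_full}, apply finite-MDP LP duality for each fixed $\tP$, and then prove $\sup_{\tP\in\cP}V_1^{\tP,*}(x_1)$ equals the optimistic Bellman recursion value (hence the value of \eqref{eq:dualtab}) by a two-sided backward induction -- monotonicity of the recursion gives ``$\le$'' for every $\tP\in\cP$, and the per-$(x,a,h)$ product structure of $\cP$ lets you assemble a greedy model $\tP^+$ attaining ``$\ge$''. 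Your route is more elementary and in one respect more general: it never uses convexity or positive homogeneity of $D$, only the product structure of the confidence sets and attainment of the supremum in $D_*$ (which requires the confidence balls to be closed, an assumption the paper also makes implicitly). It also makes the optimistic-MDP interpretation concrete by exhibiting $\tP^+$. What the paper's route buys in exchange is the KKT conditions and complementary slackness (Lemma 5 in the appendix), which are not delivered by your value-equality argument but are needed downstream in Proposition 2 to extract the optimistic policy from primal solutions, and the same convexification technique is reused verbatim for the factored linear MDP setting.

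Two small caveats. First, your Sion-based ``alternative'' has a compactness gap as stated: once the flow constraints are dualized, $q$ ranges over $\cQ(x_1)$, which is unbounded (only $q_1$ is constrained to a simplex), so neither player's set is compact. The fix is the one you gesture at -- add the redundant unit-mass-per-stage constraints (valid by the flow equations) to the primal \emph{before} dualizing, so that $q$ lives in a compact polytope and $\max_{\tP\in\cP}L$ is linear in $q$ on it -- but this needs to be said explicitly. Second, in identifying the value of \eqref{eq:dualtab} with the fixed point of the optimistic recursion, you should note that the map $\val_{h+1}\mapsto \hP_{h,a}\val_{h+1}+D_*(\val_{h+1}\mid\epsilon,\hP)$ is a supremum of linear functionals with nonnegative coefficients and hence monotone; this is exactly what makes the least feasible solution computable by backward induction.
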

\vspace{-10pt}
\begin{proof}[Proof sketch]
The full proof is in Appendix~\ref{app:dual}. Here we outline the key ideas.
To show strong duality, we reparameterize the problem as follows: 
define $J_h(x,a,x')=\wt{P}_h(x'|x,a)q_h(x,a)$
and note that 
due to homogeneity of $D$, the %
constraint on $\wt{P}$ is equivalent to $D(J_h(x,a,\cdot), \wh{P}_h(\cdot|x,a)q_h(x,a)) \leq \epsilon_h(x,a) q_h(x,a)$, which is convex in 
 $q$ and $J$. %
It is straightforward to verify the Slater condition for the resulting convex program, and thus strong duality holds for both parametrizations.

Letting $\cL(q, \wt{P};\val)$ be the Lagrangian of \eqref{eq:primalMDPopt} and using the non-negativity of $q$, 
the maximum of \eqref{eq:primalMDPopt} is 
\begin{align}
\min_\val \max_{\substack{q\geq 0 \\ \wt P \in \cP}} \cL(q, \wt P;\val)=& \min_\val \max_{q \geq 0} \bigg \{ \sum_{x,a,h} q_h(x,a) \bigg( \sum_y \wh P_h(y|x,a) \val_{h+1}(y) +r(x,a) - \val_h(x) \nonumber
\\ &   + \max_{\wt P_h(\cdot|x,a) \in \cP_h(x,a)} \sum_y \pa{ \tP_h(y|x,a) - \hP_h(y|x,a)} \val_{h+1}(y)  \bigg) \bigg \}.  \label{eq:Lmax}
\end{align}
Then, letting $\hp = \wh{P}_h(\cdot|x,a)), \wt{p}(x') = \wt{P}_h(\cdot|x,a)$, and using the definition of $D$ and $D_*$, the inner maximum can be written as
$\max_{\wt{p} \in \Delta} \{\iprod{\val_{h+1}}{\wt{p} - \hp} ; D(\wt{p},\hp) \leq \epsilon_h(x,a)\}  = D_*(\val_{h+1} |  \epsilon_h(x,a), \hp ).$
We then substitute this into \eqref{eq:Lmax} and use standard techniques to get the dual from the Lagrangian.%
\end{proof}
\vspace{-5pt}This result enables us to establish a number of important properties 
of the optimal solutions of the optimistic optimization problem~\eqref{eq:primalMDPopt}.
The following two propositions (proved in in Appendix~\ref{app:properties}) highlight that optimal 
solutions to ~\eqref{eq:primalMDPopt} are optimistic, bounded, and can be found by a dynamic-programming procedure.
This implies that any model-optimistic algorithm that solves \eqref{eq:primalMDPopt} in each episode 
is equivalent to value-optimistic algorithm using an appropriate choice of \emph{exploration bonuses}.
\vspace{-5pt}
\begin{restatable}{proposition}{pols}%
 \label{prop:pols}
Let $\val^+$ be the optimal solution to \eqref{eq:dualtab} and $\textup{CB}^{+}_{h}(x,a)=  D_*(\val_{h+1}^+|\epsilon(x,a),\wh{P}_h)$. Then, the optimal policy $\bpi^+$ extracted from any optimal solution $q^+$ of the primal LP in \eqref{eq:primalMDPopt} satisfies 
\begin{equation}\label{eq:optpolicy}
 \val^+_{h}(x) = r(x,\pi_h^+(x))+ \textup{CB}_{h}^{+}(x,\pi_h^+(x))  + \sum_{y \in \cS} \hP_h(y|x,\pi_h^+(x)) \val_{h+1}^+(y)  \quad \forall x\in\cS, h\in[H].
\end{equation}
\end{restatable}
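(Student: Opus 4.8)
The plan is to read the identity off the strong-duality/KKT structure already set up in the proof of Proposition~\ref{prop:dual}. Because the reparametrized program in the variables $q\ge 0$ and $J_h(x,a,x')=\tP_h(x'|x,a)q_h(x,a)$ is convex and satisfies Slater's condition, an optimal primal--dual pair $(q^+,\val^+)$ exists and the complementary slackness conditions hold. The whole argument is then to identify these conditions with the pointwise tightness of the constraints of \eqref{eq:dualtab} along the extracted policy.

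First I would revisit the Lagrangian in the form \eqref{eq:Lmax}: after performing the inner maximization over $\tP$, it is linear in $q$ and the coefficient of each $q_h(x,a)$ is
\[
 g_h(x,a)=r(x,a)+\bigl(\hP_{h,a}\val_{h+1}\bigr)(x)+\CB_h(x,a)-\val_h(x).
\]
For the maximum over $q\ge 0$ to be finite at the dual optimum we need $g_h^+(x,a)\le 0$ for every $(x,a,h)$, which is exactly dual feasibility — i.e.\ the constraints of \eqref{eq:dualtab} with $\CB^+$ evaluated at $\val^+$ — giving the ``$\ge$'' direction of \eqref{eq:optpolicy} for all actions. Complementary slackness then gives $q_h^+(x,a)\,g_h^+(x,a)=0$; since $q^+\ge 0$ and $g^+\le 0$, any $(x,a)$ with $q_h^+(x,a)>0$ makes the corresponding constraint tight. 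By construction the extracted policy $\pi^+$ is supported on $\{a:q_h^+(x,a)>0\}$ at every state $x$ with $\sum_{a'}q_h^+(x,a')>0$, so \eqref{eq:optpolicy} holds with equality at all such reachable states (and, since every action in the support yields equality, it holds both for a deterministic choice of $\pi_h^+(x)$ and for the induced mixture by linearity).

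The step I expect to be the main obstacle is handling the states $x$ that are unreachable under $q^+$: there $\sum_a q_h^+(x,a)=0$, the policy is defined arbitrarily, and complementary slackness is vacuous. To close this I would use that $D_*(\cdot\mid\epsilon,\hp)$ is monotone nondecreasing in its first argument and that $\hP_{h,a}$ is a nonnegative matrix, so the right-hand side of \eqref{eq:optpolicy} is monotone in $\val_{h+1}$. A backward induction on $h$ then shows that the minimal feasible solution of \eqref{eq:dualtab} — the canonical optimum $\val^+$ — satisfies $\val_h^+(x)=\max_a\{r(x,a)+(\hP_{h,a}\val_{h+1}^+)(x)+\CB_h^+(x,a)\}$ at every $x,h$ (the dynamic-programming characterization established in the companion proposition). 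Extending $\pi^+$ to act greedily with respect to $\val^+$ on the unreachable states — consistently with the reachable ones, where the occupancy-supported and greedy actions coincide by the tightness above — then delivers \eqref{eq:optpolicy} for all $x\in\cS$, $h\in[H]$.
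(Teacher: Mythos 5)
Your proposal is correct and follows essentially the same route as the paper: strong duality and complementary slackness (the paper's Lemma~\ref{lem:strongcomp}) force $q_h^+(x,a)=0$ for every action whose dual constraint is slack, and the backward-induction dynamic-programming characterization of $\val^+$ (the paper's Lemma~\ref{lem:dualform}, proved exactly via the monotonicity you invoke) identifies the tight constraints with the maximizers of the right-hand side of \eqref{eq:optpolicy}. You are in fact slightly more careful than the paper on one point: at states with $\sum_a q_h^+(x,a)=0$ the extracted policy is defined arbitrarily and complementary slackness is vacuous, a case the paper's proof silently passes over and which your greedy extension of $\pi^+$ at unreachable states explicitly repairs.
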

\vspace{-5pt}
\begin{restatable}{proposition}{valbounds} 
\label{prop:valbounds}
If the true transition function $P$ satisfies the constraint in Equation~\eqref{eq:primalMDPopt}, the optimal solution $\val^+$ of the dual LP satisfies $V^*_h(x) \le \val^+_h(x) \le H-h+1$ for all $x\in\cS$.
\end{restatable}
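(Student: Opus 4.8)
The plan is to prove both inequalities by backward induction on $h$, anchored at the terminal condition $\val^+_{H+1} = V^*_{H+1} = 0$ and descending to $h=1$. The lower and upper bounds use different ingredients, so I would treat them separately within the same induction.

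For the lower bound (optimism), I would exploit feasibility of $\val^+$ for the dual LP~\eqref{eq:dualtab} together with the hypothesis that the true $P$ lies in the confidence set. Feasibility gives, for every action $a$,
\[
\val^+_h(x) \ge r(x,a) + \iprod{\hP_h(\cdot|x,a)}{\val^+_{h+1}} + \CB^+_h(x,a).
\]
Since $D(P_h(\cdot|x,a),\hP_h(\cdot|x,a)) \le \epsilon(x,a)$ by assumption, the distribution $P_h(\cdot|x,a)$ is feasible in the maximization defining the conjugate, so $\CB^+_h(x,a) = D_*(\val^+_{h+1}|\epsilon(x,a),\hP_h) \ge \iprod{\val^+_{h+1}}{P_h(\cdot|x,a) - \hP_h(\cdot|x,a)}$. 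Substituting this in collapses the reference transition into the true one, yielding $\val^+_h(x) \ge r(x,a) + \iprod{P_h(\cdot|x,a)}{\val^+_{h+1}}$ for every $a$. Using the induction hypothesis $\val^+_{h+1} \ge V^*_{h+1}$ (entrywise) and nonnegativity of $P_h$, the right-hand side is at least $r(x,a) + \iprod{P_h(\cdot|x,a)}{V^*_{h+1}} = Q^*_h(x,a)$; maximizing over $a$ and invoking the Bellman optimality equation gives $\val^+_h(x) \ge V^*_h(x)$.

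For the upper bound, I would start from the dynamic-programming characterization in Proposition~\ref{prop:pols}, which gives $\val^+_h(x) = r(x,\pi^+_h(x)) + \CB^+_h(x,\pi^+_h(x)) + \iprod{\hP_h(\cdot|x,\pi^+_h(x))}{\val^+_{h+1}}$ for all $x\in\cS$. The key observation is that the bonus and the expected-value term recombine: by definition of $D_*$,
\[
\CB^+_h(x,a) + \iprod{\hP_h(\cdot|x,a)}{\val^+_{h+1}} = \max_{p \in \Delta:\, D(p,\hP_h(\cdot|x,a)) \le \epsilon(x,a)} \iprod{p}{\val^+_{h+1}} \le \max_{y \in \cS} \val^+_{h+1}(y),
\]
where the last inequality holds because $p$ is a probability distribution, so $\iprod{p}{\val^+_{h+1}}$ is a convex combination of the entries of $\val^+_{h+1}$. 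By the induction hypothesis $\val^+_{h+1}(y) \le H - h$ for all $y$, and with $r \le 1$, this gives $\val^+_h(x) \le 1 + (H-h) = H-h+1$.

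The main obstacle is the upper bound, and specifically resisting the temptation to bound the bonus $\CB^+_h$ on its own: a worst-case bonus can itself be of order $H-h$, so bounding it separately and adding the expected value would double-count and produce a vacuous estimate. Recognizing that $\CB^+_h(x,a) + \iprod{\hP_h(\cdot|x,a)}{\val^+_{h+1}}$ telescopes into a single maximization of $\iprod{p}{\val^+_{h+1}}$ over a set of \emph{probability distributions} $p$ is exactly what makes the clean $H-h+1$ bound go through. By contrast, the lower bound is routine once feasibility of $P$ is used to lower-bound the conjugate.
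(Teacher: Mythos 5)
Your proof is correct, but it takes a genuinely different route from the paper's. The paper argues entirely in the primal (occupancy-measure) space via strong duality: the lower bound follows because, when $P\in\cP$, the occupancy measure of $\bpi^*$ under the true $P$ is a feasible point of \eqref{eq:primalMDPopt}, so its reward is dominated by the optimal primal value, which equals $\val^+_1(x_1)$ by Lemma~\ref{lem:strongcomp}; to cover general $(x,h)$ the paper introduces a family of modified LPs started at state $x$ in stage $h$, and the upper bound follows because any feasible primal solution is an occupancy measure with $\sum_{x,a}q_l(x,a)=1$ per stage (Lemma~\ref{lem:primalsum}) and $r\in[0,1]$. You instead work purely in the dual (value-function) space by backward induction: optimism comes from dual feasibility plus the observation that the true $P_h(\cdot|x,a)$ is a feasible point in the maximization defining $D_*$, so the bonus absorbs the model error; the upper bound comes from recombining $\CB^+_h + \siprod{\hP_h}{\val^+_{h+1}}$ into a single maximum of $\siprod{p}{\val^+_{h+1}}$ over probability distributions. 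Your argument is more elementary and self-contained for this particular proposition --- it needs neither the auxiliary restarted LPs nor Lemma~\ref{lem:primalsum}, and the only nontrivial input is the dynamic-programming characterization (you could even cite Lemma~\ref{lem:dualform} rather than Proposition~\ref{prop:pols}, which would remove the reliance on complementary slackness entirely). What the paper's route buys in exchange is conceptual: it shows directly that $\val^+_h(x)$ equals the optimal value of the optimistic planning problem restarted at $(x,h)$, which both makes the model-optimistic interpretation of optimism explicit (the true MDP is one of the plausible models) and justifies the stagewise dynamic-programming decomposition of the dual, a fact the paper reuses. Your identification of the double-counting pitfall in bounding $\CB^+_h$ separately is exactly right and is what makes the inductive upper bound work.
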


\if0
The constraints on $\val_h$ in \eqref{eq:dualtab} depend only on the values in stages $h+1,\dots, H$, so the optimal solution can be found efficiently via dynamic programming. 
By Propositions~\ref{prop:pols}~and~\ref{prop:valbounds}, %
this solution %
is an optimal solution to the primal in \eqref{eq:primalMDPopt} and is optimistic.
This implies that any model-optimistic algorithm that solves \eqref{eq:primalMDPopt} in each episode %
is equivalent to an optimistic dynamic programming algorithm.
\fi

\subsection{Regret bounds for optimistic algorithms}
We consider algorithms that, in each episode $t$, define the confidence sets $\cP_t$ in \eqref{eq:Pdef} using some divergence measure $D$ and the reference model $\hP_{h,t}(x'|x,a) = \frac{N_{h,t}(x,a,x')}{N_{h,t}(x,a)}$ $\forall x,x' \in \cS, a \in \cA$.
Here $N_{h,t}(x,a,x')$ is the total number of times that we have played action $a$ from state $x$ in stage $h$ and landed in state $x'$ up to the beginning of episode $t$, and $N_{h,t}(x,a) = \max\{\sum_{x'} N_{h,t}(x,a,x'),1\}$. 
In episode $t$, the algorithm follows the optimistic policy $\bpi_t$ extracted from the solution of the primal optimistic 
 problem in~\eqref{eq:primalMDPopt}, or equivalently, the optimistic dynamic programming procedure in \eqref{eq:dualtab}.
The following theorem establishes a regret guarantee of the resulting algorithm:
\begin{restatable}{theorem}{reggen} 
\label{thm:reggen}
On the event $\cap_{t=1}^K \{P \in \cP_t\}$, the regret is bounded with probability at least $1-\delta$ as
\[
\Reg_T \leq  \sum_{t=1}^K \sum_{h=1}^H \bigg(\textup{CB}_{h,t}(x_{h,t},\pi_{h,t}(x_{h,t})) + \CB_{h,t}^-(x_{h,t},\pi_t(x_{h,t})) \bigg)+H\sqrt{2T\log(1/\delta)} 
\]
where $\CB_{h,t}^-(x,a)= D_*(-\val_{h+1,t}^+| \epsilon_{h,t}(x,a),\wh{P}_{h,t})$ and $\CB_{h,t}(x,a)= D_*(\val_{h+1,t}^+|\epsilon_{h,t}(x,a),\wh{P}_{h,t})$.
\end{restatable}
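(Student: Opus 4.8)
The plan is to combine the optimism guarantee of Proposition~\ref{prop:valbounds} with a telescoping Bellman-style decomposition, then control the resulting model-estimation error using the conjugate-duality structure that defines the bonuses, and finally close the loop with a martingale concentration argument. First I would use that we are working on the event $\cap_t\{P\in\cP_t\}$, so Proposition~\ref{prop:valbounds} gives $V^*_{h}(x)\le\val^+_{h,t}(x)$ for every $h,t$, and hence the regret is dominated by the gap between the optimistic value and the realized value of the played policy:
\[
\Reg_T \le \sum_{t=1}^K\bpa{\val^+_{1,t}(x_{1,t}) - V^{\bpi_t}_1(x_{1,t})}.
\]
It therefore suffices to bound each per-stage gap $\delta_{h,t}(x):=\val^+_{h,t}(x)-V^{\bpi_t}_h(x)$ recursively.

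For the recursion I would subtract the Bellman equation for $V^{\bpi_t}$ from the fixed-point identity for $\val^+$ supplied by Proposition~\ref{prop:pols}. Writing $a=\pi_{h,t}(x)$ and adding and subtracting $P_h(\cdot|x,a)$ acting on $\val^+_{h+1,t}$ yields
\[
\delta_{h,t}(x) = \CB_{h,t}(x,a) + \iprod{\val^+_{h+1,t}}{\hP_{h,t}(\cdot|x,a)-P_h(\cdot|x,a)} + \sum_y P_h(y|x,a)\,\delta_{h+1,t}(y).
\]
The middle term is the only nontrivial piece: rewriting it as $\iprod{-\val^+_{h+1,t}}{P_h(\cdot|x,a)-\hP_{h,t}(\cdot|x,a)}$ and using that on the good event the true $P_h(\cdot|x,a)$ is feasible for the maximization defining $D_*$ (since $D(P_h,\hP_{h,t})\le\epsilon_{h,t}(x,a)$), it is bounded above by $D_*(-\val^+_{h+1,t}|\epsilon_{h,t}(x,a),\hP_{h,t})=\CB^-_{h,t}(x,a)$. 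This gives the one-step inequality $\delta_{h,t}(x)\le \CB_{h,t}(x,a)+\CB^-_{h,t}(x,a)+\sum_y P_h(y|x,a)\,\delta_{h+1,t}(y)$.

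Finally I would unroll this inequality along the trajectory actually observed in episode $t$. Replacing the expected continuation $\sum_y P_h(y|x_{h,t},a_{h,t})\delta_{h+1,t}(y)$ by the realized $\delta_{h+1,t}(x_{h+1,t})$ introduces the martingale-difference sequence $\xi_{h,t}=\sum_y P_h(y|x_{h,t},a_{h,t})\delta_{h+1,t}(y)-\delta_{h+1,t}(x_{h+1,t})$ adapted to the trajectory filtration; telescoping and using $\delta_{H+1,t}\equiv 0$ gives $\delta_{1,t}(x_{1,t})\le\sum_h(\CB_{h,t}+\CB^-_{h,t})(x_{h,t},a_{h,t})+\sum_h\xi_{h,t}$. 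Summing over $t$ and applying Azuma--Hoeffding to the $T=KH$ terms $\xi_{h,t}$, each bounded in magnitude by $H$ because $0\le\delta_{h,t}\le H-h+1$ by Proposition~\ref{prop:valbounds}, produces the additive $H\sqrt{2T\log(1/\delta)}$ term and completes the bound. The hard part will be the sign bookkeeping in the middle step: one must ensure the two bonuses enter correctly, with $\CB_{h,t}$ coming directly from the optimistic fixed point and $\CB^-_{h,t}$ arising from the model error evaluated against $-\val^+_{h+1,t}$, and recognize that feasibility of the true $P$ under the divergence constraint is exactly what converts the inner product into the conjugate $D_*$.
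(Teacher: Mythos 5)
Your proposal is correct and follows essentially the same route as the paper's proof: the same optimistic decomposition $\Delta_{h,t}=\val^+_{h,t}-V^{\bpi_t}_h$ via Propositions~\ref{prop:pols} and~\ref{prop:valbounds}, the same conversion of the model-error term $\siprod{\hP_{h,t}-P_h}{\val^+_{h+1,t}}$ into $\CB^-_{h,t}$ through feasibility of the true $P$ in the divergence ball, and the same telescoping plus Azuma--Hoeffding step yielding $H\sqrt{2T\log(1/\delta)}$. The only cosmetic difference is that you introduce the martingale correction when unrolling the recursion rather than inside the one-step decomposition, which changes nothing mathematically.
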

The proof is in Appendix~\ref{app:regtab}. While similar results are commonly used in the analysis of value-based algorithms \cite{azar2017minimax,DLB17}, the merit of 
Theorem~\ref{thm:reggen} is that it is derived from a model-optimistic perspective, and thus cleanly separates the probabilistic and algebraic parts of the regret analysis. 
Indeed, proving the probabilistic statement that $P$ is in the confidence set is very simple in the primal space where our constraints are specified. 
Once this is established, the regret can be bounded in terms of the dual exploration bonuses. 
This simplicity of analysis is to be contrasted with the analyses of other value-optimistic 
methods that often interleave probabilistic and algebraic steps in a complex manner. %

\vspace{-5pt}
\paragraph{Inflating the exploration bonus.} 
The downside of the optimistic dynamic-programming algorithm derived above is that the exploration bonuses may sometimes be difficult to 
calculate explicitly. Luckily, it is easy to show that the regret guarantees are preserved if we replace the bonuses by 
an easily-computed upper bound. This is helpful for instance when $D$ is defined as $D(p,p') = \norm{p-p'}$,
whence the conjugate can be simply bounded by the dual norm $\norm{V}_*$. Formally, we can consider
an \emph{inflated conjugate} $D_*^\dag$ satisfying $D_*^\dag(f|\epsilon',\wh{P})\geq D_*(f|\epsilon,\wh{P})$ for every function $f:\cS \to [0,H]$, 
and obtain an optimistic value function by the following dynamic-programming procedure:
\begin{equation}\label{eq:valbound}
\val^\dag_{h}(x) = \max_{a} \bigg\{ \min \bigg\{ H-h+1, r(x,a) + \wh{P}_{h}(\cdot|x,a)\val^\dag_{h+1} + D_*^\dag(\val^\dag_{h+1}| \epsilon'(x,a),\wh{P}_{h}) \bigg \} \bigg \},
\end{equation} 
with $\val^\dag_{H+1}(x)=0 \, \forall x \in \cS$. In this case, we need to clip the value functions since we can no 
longer use Proposition~\ref{prop:valbounds} to show they are bounded.
The resulting value-estimates then satisfy $V^*_1(x_1) \leq \val^+_1(x_1) \leq \val^\dag_1(x_1)$ with high probability, so 
we can bound the regret of this algorithm in the following theorem, whose proof is in Appendix~\ref{app:ubound}:
\begin{restatable}{theorem}{regub}
 \label{thm:regub}
Let $D_*^\dag(f|\epsilon', \wh{P})$ be an upper bound on $D_*(f|\epsilon, \wh{P})$ and $D_*(-f|\epsilon, \wh{P})$ for every $f:\cS \to [0,H]$, 
and, $\CB_{h,t}^\dag(x,a) = D_*^\dag(\val^\dag_{h+1,t}|\epsilon'_{h,t}(x,a),\wh{P}_{h,t})$.
 Then, on the event $\cap_{t=1}^K \{P \in \cP_t\}$, with probability greater than $1-\delta$, the policy returned by the 
 procedure in \eqref{eq:valbound} incurs regret
\[
\Reg_T \leq  2\sum_{t=1}^K \sum_{h=1}^H \textup{CB}_{h,t}^\dag(x_{h,t},\pi_{h,t}(x_{h,t})) +4H\sqrt{2T\log(1/\delta)}. 
\]
\end{restatable}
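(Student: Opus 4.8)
The plan is to run the standard optimism-based regret decomposition, but with two modifications forced by the \emph{inflated} bonus $D_*^\dag$ and the clipping in~\eqref{eq:valbound}. I would work throughout on the event $\cap_{t=1}^K\{P\in\cP_t\}$, on which, for every $(x,a,h)$, the true kernel $P_h(\cdot|x,a)$ is a feasible point of the maximization defining $D_*$. This yields the pointwise inequalities $P_h V \le \hP_{h,t}V + D_*(V|\epsilon,\hP_{h,t}) \le \hP_{h,t}V + D_*^\dag(V|\epsilon',\hP_{h,t})$ and, with $-V$, $(\hP_{h,t}-P_h)V \le D_*(-V|\epsilon,\hP_{h,t}) \le D_*^\dag(V|\epsilon',\hP_{h,t})$, valid for every $V:\cS\to[0,H]$. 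These two one-sided statements, which are exactly what the two-sided inflation hypothesis $D_*^\dag(f)\ge\max\{D_*(f),D_*(-f)\}$ provides, are the only places where the confidence sets enter.

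\emph{Step 1 (optimism of the clipped value).} I would first show $V^*_h(x)\le\val^\dag_{h,t}(x)$ for all $x,h$ by backward induction, with base case $V^*_{H+1}=\val^\dag_{H+1}=0$. Assuming $V^*_{h+1}\le\val^\dag_{h+1,t}$ and taking the action $a^\star$ optimal for $V^*_h(x)$, the monotonicity of $P_h$ gives $P_h V^*_{h+1}\le P_h\val^\dag_{h+1,t}\le \hP_{h,t}\val^\dag_{h+1,t}+D_*^\dag(\val^\dag_{h+1,t}|\epsilon',\hP_{h,t})$; combining this with the trivial bound $V^*_h(x)\le H-h+1$ shows $V^*_h(x)$ is dominated by the clipped maximum $\val^\dag_{h,t}(x)$. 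Notice this argument uses only feasibility of $P$ and never requires $D_*$ to be monotone in its value argument. This gives $\Reg_T\le\sum_{t=1}^K\bpa{\val^\dag_{1,t}(x_{1,t})-V^{\bpi_t}_1(x_{1,t})}$.

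\emph{Step 2 (per-episode telescoping).} Fix $t$ and write $a_h=\pi_{h,t}(x_{h,t})$. Since clipping only decreases the value, $\val^\dag_{h,t}(x_{h,t})\le r(x_{h,t},a_h)+\hP_{h,t}\val^\dag_{h+1,t}+\CB^\dag_{h,t}(x_{h,t},a_h)$, whereas $V^{\bpi_t}_h(x_{h,t})=r(x_{h,t},a_h)+P_h V^{\bpi_t}_{h+1}$. Subtracting and inserting $\pm P_h\val^\dag_{h+1,t}$ splits the transition term into $(\hP_{h,t}-P_h)\val^\dag_{h+1,t}$, bounded by $\CB^\dag_{h,t}$ via the second one-sided inequality above, plus $P_h\bpa{\val^\dag_{h+1,t}-V^{\bpi_t}_{h+1}}$. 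This produces the factor $2$:
\[
\val^\dag_{h,t}(x_{h,t})-V^{\bpi_t}_h(x_{h,t}) \le 2\,\CB^\dag_{h,t}(x_{h,t},a_h) + \EEcct{\val^\dag_{h+1,t}(x_{h+1,t})-V^{\bpi_t}_{h+1}(x_{h+1,t})}{x_{h,t},a_h}.
\]
Writing the last term as the next-stage difference plus a martingale increment $\xi_{h,t}$ and telescoping over $h=1,\dots,H$ gives $\val^\dag_{1,t}(x_{1,t})-V^{\bpi_t}_1(x_{1,t})\le 2\sum_{h=1}^H\CB^\dag_{h,t}(x_{h,t},a_h)+\sum_{h=1}^H\xi_{h,t}$.

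\emph{Step 3 (concentration).} Summing over episodes, $\{\xi_{h,t}\}$ is a martingale-difference sequence for the natural filtration whose increments are bounded by a constant multiple of $H$, since $\val^\dag$ and $V^{\bpi_t}$ both lie in $[0,H]$. Azuma--Hoeffding over the $T=KH$ increments then bounds $\sum_{t,h}\xi_{h,t}$ by $4H\sqrt{2T\log(1/\delta)}$ with probability at least $1-\delta$, giving the claim. The main obstacle is Step~1: the clipping blocks the naive route of deducing optimism from Proposition~\ref{prop:valbounds} through an inequality $\val^+\le\val^\dag$, which would appear to need monotonicity of the conjugate $D_*$ in its value argument, a property that fails in general. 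The fix is to prove optimism of $\val^\dag$ directly against $V^*$, exploiting that $V^*_h\le H-h+1$ so the $\min$ in~\eqref{eq:valbound} never clips below $V^*$; once this is in place, the factor $2$ on the bonus and the sub-Gaussian tail are routine.
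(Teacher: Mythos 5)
Your proof is correct, and your Steps 2--3 (the telescoping decomposition in which the bonus is paid twice---once through the planning recursion and once to control $(\hP_{h,t}-P_h)\val^\dag_{h+1,t}$ via $D_*(-\val^\dag_{h+1,t}|\epsilon,\hP_{h,t})\le D_*^\dag(\val^\dag_{h+1,t}|\epsilon',\hP_{h,t})$---followed by Azuma--Hoeffding) are exactly the paper's argument for Theorem~\ref{thm:regub}, which mirrors Theorem~\ref{thm:reggen}. Where you genuinely depart from the paper is Step~1, the optimism of the clipped iterates. The paper's Lemma~\ref{lem:valbound} stays inside the LP formalism: it appends the constraint $\val_h\le H-h+1$ to the dual \eqref{eq:dualtab} (harmless by Proposition~\ref{prop:valbounds}), replaces the conjugate constraint by the clipped, inflated one---which shrinks the feasible set and so can only raise the minimum---and then identifies the recursion \eqref{eq:valbound} with the modified program, concluding $V^*_1\le\val^+_1\le\val^\dag_1$. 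You instead prove $V^*_h\le\val^\dag_{h,t}$ for every $h$ by direct backward induction, using only feasibility of $P$ in the maximization defining $D_*$ and the two-sided inflation hypothesis. Your route is more elementary and self-contained: it never invokes $\val^+$, strong duality, or the correspondence between the modified LP and the recursion \eqref{eq:valbound} (a step the paper only sketches ``by an argument similar to Lemma~\ref{lem:dualform}'', and which, strictly speaking, requires only feasibility, not optimality, of the DP iterate). What the paper's route buys in exchange is the intermediate inequality $\val^+_1\le\val^\dag_1$ relating the exact and inflated procedures, which your argument does not produce and the theorem does not need.

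One correction to your closing remark: the route through $\val^+\le\val^\dag$ is not actually blocked. What that induction needs is not monotonicity of $D_*$ in isolation but monotonicity of the map $V\mapsto \hP V + D_*(V|\epsilon,\hP) = \max\ev{\iprod{p}{V} : p\in\Delta,\ D(p,\hP)\le\epsilon}$, which is a support function over a set of componentwise-nonnegative vectors and is therefore always monotone in $V$; combined with Proposition~\ref{prop:valbounds} (so that $\val^+_{h+1}\in[0,H-h]$) and the inflation hypothesis applied to $\val^\dag_{h+1,t}$, the induction $\val^+_h\le\val^\dag_h$ goes through. The genuine fragility lies only in treating $D_*^\dag$ itself as if it defined a well-behaved Bellman operator---an arbitrary upper bound need not be monotone---and your direct proof rightly never does so.
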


\begin{table} \label{tab:ex}
\begin{center}
\begin{tabular}{l|c|c|c|c}
\textbf{Algorithm} & \textbf{Distance $D(p,\hp)$} & $\epsilon$ & \textbf{Conjugate $D_*^\dag(\val|\epsilon,\hp)$} & \textbf{Regret} %
\\
\hline
UCRL2 
\cite{jaksch2010near}:& $\onenorm{p-\hp}$ & $\sqrt{S/N}$ & $\epsilon\cdot\text{span}\pa{V}$  & $S\sqrt{H^3AT}$ \\
UCRL2B 
\cite{improved_analysis_UCRL2B}: & $\max_x \frac{(p(x)-\hp(x))^2}{\hp(x)}$ & $1/N$ &  $\sum_x \sqrt{\epsilon \hp(x)} |V(x) - \hp V|$  & $H\sqrt{S\Gamma AT}$\\
KL-UCRL\tablefootnote{In the original KL-UCRL algorithm, \cite{filippi2010optimism,talebi2018variance} consider the reverse KL-divergence. This also fits into our framework. See Appendix~\ref{app:kl} for details.}: & $\sum_x p(x)\log\frac{p(x)}{\hp(x)} + \sum_x(\hp(x)-p(x))$ & $S/N$  &   
$\sqrt{(\epsilon + (1-\sum_y \hp(y))) \wh{\Var}(\val)}$
& $HS \sqrt{ AT}$\\ %
$\chi^2$-UCRL\tablefootnote{\cite{maillard2014hard} also use a $\chi^2$-divergence but require $\wt{P}(x)>p_0$ for some $p_0$ if $\wt{P}(x)>0$ making $\cP$ non-convex.}
& $\sum_x \frac{ (p(y) - \hp(y))^2}{\hp(y)}$ & $S/N$ & $\sqrt{\epsilon \wh{\Var}(\val)}$  & $HS\sqrt{AT}$
\end{tabular}
\end{center}
\caption{Various algorithms in our framework. For all algorithms except UCRL2, we use $\hP^+_{h,t}(y|x,a) = \frac{\max{1,N_{h,t}(x,a,y)}}{N_{h,t}(x,a)}$ as the base measure to avoid division by 0, for UCRL2, we use $\hP(y|x,a)$. We denote $\wh{\Var}(\val) = \sum_x \hp(x)\pa{V(x) - \iprod{\hp}{V}}^2$.
The third column gives scaling of the confidence width in terms of $S$ and the number of sample transitions $N$.
The fourth column gives a tractable upper bound on the value of the conjugate.
The last column gives the %
the regret bound derived from Theorem~\ref{thm:regub} (up to logarithmic factors) with exploration bonus defined 
from the inflated conjugate and the smallest value of $\epsilon$ that guarantees $ \cap_{t=1}^K \{P\in\cP_t\}$ w.h.p.}
\vspace{-10pt}
\end{table}

\vspace{-10pt}
\paragraph{Examples.} Theorems~\ref{thm:reggen}~and~\ref{thm:regub} show that the key quantities governing the size of the 
regret are the conjugate distance and the confidence width $\epsilon$. This explicitly quantifies the impact of the choice of primal confidence set. 
We provide some example choices of the divergences along with their conjugates, the best known confidence widths, and the resulting regret bounds 
in Table~\ref{tab:ex}, with derivations in Appendix~\ref{app:examples}. 
Many of these correspond to existing methods for which
our framework suggests their first dynamic-programming implementation in the original state space $\Sw$, 
rather than the extended state-space which was traditionally used
\citep{jaksch2010near,filippi2010optimism,maillard2014hard}.
More generally, our framework captures any algorithm that defines 
confidence sets in terms of a norm or $f$-divergence, along with many others.
It may also be possible to derive model-optimistic forms of value-optimistic methods, however, in this case care needs to be taken to show that the primal confidence sets are valid. For example, a variant of UCB-VI \citep{azar2017minimax} can be derived from the divergence measure $\<P-\hP,V_{h+1}^+\>$, but the probabilistic analysis here is complicated due to the dependence between $\hP$ and $V^+_{h+1}$.

\section{Optimism with realizable linear function approximation}
We now extend our framework to factored linear MDPs, where all currently known algorithms
are value-optimistic. We provide the first model-optimistic formulation by modeling 
uncertainty about the MDP in the primal LP involving occupancy 
measures in~\eqref{eq:linprimal}.
All proofs are in Appendix~\ref{app:linproofs}.

A key challenge in this setting is that the uncertainty %
can no longer be expressed using
distance metrics %
in the state space, since this could lead 
to trivially large confidence sets\footnote{E.g., for the total variation distance, concentration bounds scale with $\sqrt{S}$ which is potentially unbounded.}.
Instead, we define confidence sets in terms of a distance that takes 
the linear structure
into account. 
These are centered around a reference model $\wh{P}$ defined for 
each $h,a$ as $\hP_{h,a} = \Phi \wh{M}_{h,a}$ for some $d\times \Sw$ matrix $\wh{M}_{h,a}$. %
We consider reference models implicitly defined by the LSTD algorithm \cite{BraBa96,lagoudakis2003least,PalITaPWLi08}. %
In episode $t$, let
$\Sigma_{h,a,t} = \sum_{k=1}^t \II{a_{h,k} = a} \varphi(x_{h,k})\varphi\transpose(x_{h,k}) + \lambda I$ 
for some $\lambda \ge 0$, 
and $\be_x$ be the unit vector in $\real^\Sw$ corresponding to 
state $x$. Then, our reference model in episode $t$ is defined for each action $a$ as
\begin{equation}\label{eq:Mhat}
 \wh{M}_{h,a,t} = \Sigma_{h,a,t-1}^{-1} \sum_{k=1}^{t-1} \II{a_{h,k} = a} \varphi(x_{h,k}) \be_{x_{h+1,k}}.
\end{equation}
Finally, the weight matrix in the LP formulation~\eqref{eq:linprimal} is chosen as 
$W_{h,a,t} = \sum_{k=1}^t \II{a_{h,k}=a} \be_{x_{h,k}} \be_{x_{h,k}}\transpose$, so that
 $\Phi\transpose W_{h,a,t} \Phi = \Sigma_{h,a,t} - \lambda I$.
We establish the following important technical result: %
\begin{restatable}{proposition}{conc_lin}\label{prop:conc_lin}
Consider the reference model $\wh{P}_{h,a,t} = \Phi \wh{M}_{h,a,t}$ with $\hM_{h,a,t}$ defined in Equation~\eqref{eq:Mhat}.
Then, for any fixed function $g:\Sw\ra [-H,H]$, the following holds with probability at least $1-\delta$:
\[
\bigl\|\bpa{M_{h,a,t} - \hM_{h,a,t}}g\bigr\|_{\Sigma_{h,a,t-1}} \le H\sqrt{d \log\pa{\frac{1 + t R^2/\lambda}{\delta}}} + C_P H \sqrt{\lambda d}.
\] 
\end{restatable}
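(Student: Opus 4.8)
The plan is to recognize that $\hM_{h,a,t}\,g$ is precisely the ridge-regression estimate of the vector $\theta^\star := M_{h,a}\,g\in\real^d$ from the data $\{(\varphi(x_{h,k}),\,g(x_{h+1,k}))\}_{k<t,\,a_{h,k}=a}$, and then to invoke a self-normalized martingale tail bound of the kind underlying the analysis of \linucb. The crucial structural input, supplied by Assumption~\ref{assn:real}, is that this regression is \emph{well-specified}: writing $\varphi_k:=\varphi(x_{h,k})$ and letting $\cF_k$ be the history up to and including the choice of $a_{h,k}$ (so that $\varphi_k$ and $\II{a_{h,k}=a}$ are $\cF_k$-measurable but $x_{h+1,k}$ is not), the factorization $P_{h,a}=\Phi M_{h,a}$ gives
\[
\EEcc{g(x_{h+1,k})}{\cF_{k}} = (P_{h,a}\,g)(x_{h,k}) = \varphi\transpose(x_{h,k})\,M_{h,a}\,g = \varphi\transpose(x_{h,k})\,\theta^\star .
\]
Hence $\eta_k := g(x_{h+1,k}) - \varphi\transpose(x_{h,k})\theta^\star$ is a martingale-difference sequence with respect to $\{\cF_k\}$, and since $g\in[-H,H]$ its conditional range is at most $2H$, making each $\eta_k$ conditionally $H$-sub-Gaussian.

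With this identification I decompose the error. Setting $\Sigma:=\Sigma_{h,a,t-1}$ and $\wh\theta:=\hM_{h,a,t}\,g = \Sigma^{-1}\sum_{k<t}\II{a_{h,k}=a}\,\varphi_k\,g(x_{h+1,k})$, I substitute $g(x_{h+1,k})=\varphi_k\transpose\theta^\star+\eta_k$ and use $\sum_k\II{a_{h,k}=a}\varphi_k\varphi_k\transpose=\Sigma-\lambda I$ to obtain
\[
\wh\theta - \theta^\star = \Sigma^{-1}\Bpa{\sum_{k<t}\II{a_{h,k}=a}\,\varphi_k\,\eta_k} - \lambda\,\Sigma^{-1}\theta^\star .
\]
Taking the $\Sigma$-norm (which equals the quantity to be bounded, since $\norm{\wh\theta-\theta^\star}_\Sigma=\bnorm{(M_{h,a,t}-\hM_{h,a,t})g}_{\Sigma}$) and applying the triangle inequality splits the target into a \emph{noise} term $\Bnorm{\sum_{k<t}\II{a_{h,k}=a}\varphi_k\eta_k}_{\Sigma^{-1}}$ and a \emph{regularization} term $\lambda\,\norm{\theta^\star}_{\Sigma^{-1}}\le\sqrt{\lambda}\,\twonorm{\theta^\star}$, where the last inequality uses $\Sigma\succeq\lambda I$. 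For the regularization term I appeal directly to Assumption~\ref{assn:real}: each coordinate of $\theta^\star=M_{h,a}g$ is an inner product of a row of $M_{h,a}$ with $g$, so $|\theta^\star_i|\le C_P\,\infnorm{g}\le C_P H$, whence $\twonorm{\theta^\star}\le C_P H\sqrt{d}$ and the regularization term is at most $C_P H\sqrt{\lambda d}$, exactly matching the second summand in the statement.

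For the noise term I apply the standard determinant-based self-normalized inequality for vector-valued martingales: with probability at least $1-\delta$,
\[
\Bnorm{\sum_{k<t}\II{a_{h,k}=a}\,\varphi_k\,\eta_k}_{\Sigma^{-1}}^{2} \le 2H^{2}\log\!\pa{\frac{\det(\Sigma)^{1/2}\,\det(\lambda I)^{-1/2}}{\delta}} .
\]
I then bound the determinant ratio by the trace: since $\twonorm{\varphi(x)}\le R$, every eigenvalue of $\Sigma$ is at most $\lambda+tR^2$, so $\det(\Sigma)/\lambda^{d}\le(1+tR^{2}/\lambda)^{d}$, and after simplification the noise term is at most $H\sqrt{d\log\pa{(1+tR^{2}/\lambda)/\delta}}$. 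Combining the two bounds via the triangle inequality yields the claimed inequality.

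The only place the specific structure of the problem enters—and hence the main obstacle—is the well-specifiedness step: one must set up the filtration so that $\II{a_{h,k}=a}\varphi_k$ is predictable while the transition noise $\eta_k$ is the fresh martingale increment, and then use the linear factorization of Assumption~\ref{assn:real} to certify that the conditional mean carries \emph{no} approximation error, so that $\eta_k$ is genuinely centered and bounded. Once this is secured, the remainder is a routine instance of the self-normalized least-squares analysis, and the two constants fall out as stated. I also note that the hypothesis that $g$ is \emph{fixed} is what keeps this clean—no union bound or covering over value functions is needed at this stage.
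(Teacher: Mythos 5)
Your proof is correct and takes essentially the same approach as the paper's: the identical decomposition of $\Sigma_{h,a,t-1}\bpa{M_{h,a}-\hM_{h,a,t}}g$ into a self-normalized vector-valued martingale term plus the ridge-regularization bias $\lambda M_{h,a}g$, the same appeal to the Abbasi--Yadkori self-normalized concentration bound with sub-Gaussian constant $H$, and the same use of Assumption~\ref{assn:real} to get $\twonorm{M_{h,a}g}\le C_P H\sqrt{d}$ for the bias term. The only (negligible) discrepancy is in constants: your final folding of $\delta$ inside the $d\log(\cdot)$ factor requires $d\ge 2$, while the paper's appendix proof keeps the form $2H\sqrt{d\log\pa{1+tR^2/\lambda}+\log(1/\delta)}$, which itself differs from the main-text statement by constant factors.
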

\vspace{-5pt}
The proof is based on the fact that for a fixed $g$, $\bpa{M_{h,a,t} - \hM_{h,a,t}}g$ is 
essentially a vector-valued martingale. %
Our main contribution in this %
 setting is to use this result to identify two distinct ways of 
deriving tight confidence sets that incorporate optimism into~\eqref{eq:linprimal}. 
Both approaches use the optimistic parametric Bellman (OPB) equations with some exploration bonus $\CB_{h,t}(x,a)$ (defined later):%
\begin{equation}\label{eq:bellman_opt_lin}
\begin{split}
 \theta^+_{h,a,t} &= \rho_a + \Sigma_{h,a,t-1}^{-1} \sum_{k=1}^{t-1} \II{a_{h,k} = a} \varphi(x_{h,k}) \val^+_{h+1,t} \pa{x_{h+1,k}}
 \\
 \val^+_{h,t}(x) &= \max_{a} \ev{\pa{\Phi\theta^+_{h,a,t}}\pa{x} + \CB_{h,t}(x,a)} 
\end{split}
\end{equation}
Both bonuses we derive can be upper-bounded by $ \CB^\dag_{h,t}(x,a) = C(d) \norm{\varphi(x)}_{\Sigma_{h,a,t-1}^{-1}}$ for some %
$C(d)>0$.
Then, one can apply a variant Theorem~\ref{thm:regub} to bound the regret of both algorithms %
in terms of the sum of these inflated exploration bonuses, amounting to a total regret of $\wt{O}(C(d)\sqrt{dHT})$. 

\subsection{Optimism in state space through local confidence sets}
Our first approach %
models the uncertainty locally in each 
state-action pair $x,a$ using some distance metric $D$ between transition functions.
We consider the following optimization problem:
\begin{gather} \label{eq:linopt1}
\begin{array}{ll}
&\underset{q \in \cQ(x_1), \omega}{\text{maximize}}  \vspace{5pt}
\\  \vspace{15pt} &\sum_{h=1}^H \sum_a\iprod{ W_{h,a}\Phi \omega_{a,h}}{r_a}
\end{array}
 \left| 
 \hspace{-10pt}
 \begin{array}{rll}
 &\text{subject to}\qquad \qquad\quad 
 \\
 &\sum_a q_{h+1,a} = \sum_a \wt{P}_{h,a} W_{h,a} \Phi \omega_{h,a} \qquad &\forall h\in[H]
 \\
 &\Phi\transpose q_{h,1} = \Phi\transpose W_{h,a} \Phi \omega_{h,a} \qquad &\forall a \in \cA, h\in[H] 
 \\
 &D\pa{\wt{P}_{h}(\cdot|x,a), \wh{P}_{h}(\cdot|x,a)} \leq \epsilon_h(x,a) \, &\forall (x,a) \in \cZ, h \in [H]
 \end{array}\right.\raisetag{1.8cm}
\end{gather}
As in the tabular case, %
\eqref{eq:linopt1} can be reparametrized so that the constraint set is convex,  %
allowing us to appeal to Lagrangian duality
to get an equivalent formulation as shown in the following proposition.
\begin{restatable}{proposition}{dual_approx_1}\label{prop:dual_approx_1}
The optimization problem~\eqref{eq:linopt1} is equivalent to solving the optimistic Bellman equations~\eqref{eq:bellman_opt_lin} 
with the exploration bonus defined as
$ \CB_{h}(x,a) = D^*(\val^+_{h+1} | \epsilon_h(x,a),\wh{P}_h(\cdot|x,a))$.
\end{restatable}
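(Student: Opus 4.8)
The plan is to reuse the two-stage template of Proposition~\ref{prop:dual}: first convexify~\eqref{eq:linopt1} through a change of variables that exploits the positive homogeneity of $D$, establish strong duality via Slater's condition, and then compute the Lagrangian dual and simplify it using the linear structure of the reference model $\wh{P}_{h,a}=\Phi\wh{M}_{h,a,t}$. The genuinely new ingredients relative to the tabular argument are the auxiliary variables $\omega_{h,a}$ and the projection constraints $\Phi\transpose q_{h,a}=\Phi\transpose W_{h,a}\Phi\omega_{h,a}$; as I explain below, dualizing these, together with the LSTD definition~\eqref{eq:Mhat} of $\wh{M}_{h,a,t}$, is exactly what turns the flat tabular Bellman backup into the ridge/LSTD parametric update~\eqref{eq:bellman_opt_lin}.

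First I would reparameterize the bilinear flow constraint. Writing $\nu_{h,a}=W_{h,a}\Phi\omega_{h,a}$ and introducing $J_{h,a}(x,x')=\wt{P}_h(x'|x,a)\,\nu_{h,a}(x)$, the flow constraint $\sum_a q_{h+1,a}=\sum_a \wt{P}_{h,a}\transpose\nu_{h,a}$ becomes linear in $(J,q)$, while the positive homogeneity of $D$ rewrites each confidence constraint as $D\bpa{J_{h,a}(x,\cdot),\,\nu_{h,a}(x)\wh{P}_h(\cdot|x,a)}\le\epsilon_h(x,a)\,\nu_{h,a}(x)$, which is jointly convex in the new variables. The projection constraints and the objective $\sum_{h,a}\iprod{\nu_{h,a}}{r_a}$ are already linear, so the reparameterized program is convex; strict feasibility of a model strictly inside the confidence set (inherited from the tabular argument) supplies a Slater point, so strong duality holds for both parametrizations and the optimal values of~\eqref{eq:linopt1} and its dual coincide.

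Next I would attach the value function $\val_{h+1}$ as multiplier of the flow constraint and a $d$-dimensional multiplier $\mu_{h,a}$ to each projection constraint, and form the Lagrangian $\cL(q,\omega,\wt{P};\val,\mu)$. With the sign convention of~\eqref{eq:Lmax}, collecting the $\wt{P}$-dependent terms gives, for each $(h,a)$, the inner maximization of $\sum_x \nu_{h,a}(x)\,\iprod{\wt{P}_h(\cdot|x,a)-\wh{P}_h(\cdot|x,a)}{\val_{h+1}}$ subject to the confidence constraints, which by the definition of the conjugate equals $\sum_x \nu_{h,a}(x)\,D_*(\val_{h+1}| \epsilon_h(x,a),\wh{P}_h(\cdot|x,a))$, i.e. the $\nu$-weighted exploration bonus $\CB_h(x,a)$. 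Using $r_a=\Phi\rho_a$, the identity $\Phi\transpose W_{h,a}\Phi=\Sigma_{h,a}-\lambda I$ established before~\eqref{eq:Mhat}, and $\wh{P}_{h,a}\val_{h+1}=\Phi\wh{M}_{h,a,t}\val_{h+1}$, the stationarity conditions in $\omega_{h,a}$ identify the projection multiplier $\mu_{h,a}$ with the parameter $\theta^+_{h,a}$ and force it to take the ridge/LSTD form in~\eqref{eq:bellman_opt_lin}; minimizing over $\val$ then reinstates the outer $\max_a$, yielding precisely $\val^+_h(x)=\max_a\{(\Phi\theta^+_{h,a,t})(x)+\CB_h(x,a)\}$.

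The step I expect to be the main obstacle is controlling the sign of the coefficient $\nu_{h,a}=W_{h,a}\Phi\omega_{h,a}$ that multiplies $\wt{P}$. In Proposition~\ref{prop:dual} this coefficient was the occupancy measure $q\ge0$ enforced by $\cQ(x_1)$, so both the homogeneity rewrite and the extraction of the \emph{positive} conjugate $D_*(\val_{h+1}|\cdots)$ were immediate; here $\nu_{h,a}(x)$ is not manifestly nonnegative, and a wrong sign would instead surface the reversed bonus $D_*(-\val_{h+1}|\cdots)$, the $\CB^-$ term of Theorem~\ref{thm:reggen}. The projection constraint $\Phi\transpose q_{h,a}=\Phi\transpose\nu_{h,a}$ lets me launder the reward and reference-transition terms, since $r_a$ and $\wh{P}_{h,a}\val_{h+1}$ lie in the column space of $\Phi$ and can therefore be evaluated against the nonnegative $q_{h,a}$ instead of $\nu_{h,a}$; but it does \emph{not} apply to the bonus term, where $\wt{P}_{h,a}\val_{h+1}$ leaves that column space, so the sign of $\nu$ genuinely matters there. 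I would resolve this either by arguing that only the nonnegative part of $\nu_{h,a}$ is active at the optimum (using that $W_{h,a}$ is a nonnegative diagonal visit-count matrix), or, for the purposes of the subsequent regret analysis, by passing to the symmetric inflated bonus $D_*^\dag$ of Theorem~\ref{thm:regub} that dominates both $D_*(\val_{h+1}|\cdots)$ and $D_*(-\val_{h+1}|\cdots)$; reconciling this sign bookkeeping with the $\max_a$ emerging from the minimization over $\val$ is the delicate part of the argument.
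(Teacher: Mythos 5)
Your overall template (convexify via positive homogeneity, verify Slater, form the partial Lagrangian, extract the conjugate, and read the LSTD/ridge equation for the parameter off the $\omega$-stationarity, with the minimization over $\val$ reinstating the $\max_a$) is the same as the paper's, and those parts of your outline are sound. The genuine gap is exactly at the point you yourself flag as the main obstacle: you attach the model deviation to $\nu_{h,a}=W_{h,a}\Phi\omega_{h,a}$, whose sign you cannot control, and neither of your two escape routes closes the proof. The claim that only the nonnegative part of $\nu_{h,a}$ is active at the optimum is unsubstantiated --- $\nu_{h,a}(x)=N_{h,a}(x)\iprod{\varphi(x)}{\omega_{h,a}}$ can well be negative at visited states and nothing in the program forces otherwise --- and retreating to the symmetric inflated bonus $D_*^\dag$ of Theorem~\ref{thm:regub} proves a regret bound for a different procedure, not the equivalence with bonus $D_*(\val^+_{h+1}|\epsilon_h(x,a),\wh{P}_h(\cdot|x,a))$ that Proposition~\ref{prop:dual_approx_1} asserts.

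The paper resolves this by never letting the deviation touch $\nu$: it splits $\wt{P}_{h,a}=\wh{P}_{h,a}+(\wt{P}_{h,a}-\wh{P}_{h,a})$ and works with the flow constraint in the form $\sum_a q_{h+1,a}=\sum_a \wh{M}_{h,a}\transpose\Phi\transpose W_{h,a}\Phi\omega_{h,a}+\sum_a(\wt{P}_{h,a}-\wh{P}_{h,a})\transpose q_{h,a}$, i.e.\ only the \emph{reference} part of the kernel is routed through $\omega$ (this is where $\wh{P}_{h,a}=\Phi\wh{M}_{h,a}$ and the projection constraint are used), while the unknown deviation multiplies the occupancy measure $q_{h,a}\ge 0$. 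With that form, the tabular argument goes through verbatim: reparametrize $J_h(x,a,\cdot)=q_h(x,a)\wt{P}_h(\cdot|x,a)$ as in Lemma~\ref{lem:strongcomp} to get convexity and Slater; in the Lagrangian the deviation appears as $q_h(x,a)\iprod{\wt{P}_h(\cdot|x,a)-\wh{P}_h(\cdot|x,a)}{\val_{h+1}}$, whose inner maximum is the \emph{positive} conjugate precisely because $q\ge 0$; and the $\omega$-block $\iprod{W_{h,a}\Phi\omega_{h,a}}{r_a+\wh{P}_{h,a}\val_{h+1}-\Phi\theta_{h,a}}$ yields the constraint identifying $\theta_{h,a}$ with the LSTD solution, exactly as you anticipated for that step. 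You are right --- and it is a sharp observation --- that this rewrite is \emph{not} an identity for \eqref{eq:linopt1} as literally printed, since the projection constraint only equates $\Phi\transpose q_{h,a}$ with $\Phi\transpose\nu_{h,a}$ while $(\wt{P}_{h,a}-\wh{P}_{h,a})\val_{h+1}$ leaves the column space of $\Phi$; the paper is implicitly taking the deviation-on-$q$ form as the meaning of the program, and the proposition is proved for that problem. What your proposal is missing is precisely this move: reformulate the constraint so that the uncertain part of the transition kernel propagates $q$ rather than $\nu$, after which the sign obstruction never arises.
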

Taking the form of %
$\val^+_h$ into account, in episode $t$,
we define our confidence sets as in \eqref{eq:Pdef} with %
\begin{equation}\label{eq:CBonus_lin}
 D\pa{\wt{P}_{h,t}(\cdot|x,a), \wh{P}_{h,t}(\cdot|x,a)} = \sup_{g\in\V_{h+1,t}} \sum_{x'}\pa{\tP_{h,t}(x'|x,a) - \hP_{h,t}(x'|x,a)} g(x') %
\end{equation}

\vspace{-10pt}
and %
$\wh{P}_{h,a,t}=\Phi\wh{M}_{h,a,t}$ %
where $\V_{h+1,t}$ is the set of value functions that can be produced 
by solving the OPB equations~\eqref{eq:bellman_opt_lin}. 
For any choice of $\epsilon_t$, %
$\CB_{h,t}(x,a) \le \epsilon_{h,t}(x,a)$, 
so one can simply use the bonus $\CB_{h,t}^\dag(x,a) = \epsilon_{h,t}(x,a)$.
The following theorem 
bounds the regret for an appropriate choice of $\epsilon_t$
\begin{restatable}{theorem}{lem:bonus_lin}\label{lem:bonus_lin}
The choice $\epsilon_{h,t}(x,a) = C \norm{\varphi(x)}_{\Sigma_{h,a,t-1}^{-1}}$ with $C = \wt{O}(Hd)$ guarantees that 
the transition model $P$ is feasible for~\eqref{eq:linopt1} in every episode $t$ with probability $1-\delta$. The resulting 
optimistic algorithm with exploration bonus $\CB^\dag_{h,t}(x,a) = \epsilon_{h,t}(x,a)$ has regret bounded by $\wt{O}(\sqrt{H^3d^3T})$.
\end{restatable}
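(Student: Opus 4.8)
The plan is to split the argument into a probabilistic part---showing that the true model $P$ is feasible for~\eqref{eq:linopt1} in every episode, which by the analogue of Proposition~\ref{prop:valbounds} makes the optimistic value an upper bound on $V^*$---and an algebraic part summing the exploration bonuses along the realized trajectory. For the feasibility claim I would first use the factored structure $P_{h,a}=\Phi M_{h,a}$ and $\hP_{h,a,t}=\Phi\hM_{h,a,t}$ to rewrite the divergence in~\eqref{eq:CBonus_lin}: for any $g$, $\sum_{x'}\pa{P_h(x'|x,a)-\hP_{h,t}(x'|x,a)}g(x')=\iprod{\varphi(x)}{(M_{h,a}-\hM_{h,a,t})g}$. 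A Cauchy--Schwarz step in the $\Sigma_{h,a,t-1}$ geometry then gives $D\pa{P_h(\cdot|x,a),\hP_{h,t}(\cdot|x,a)}\le\norm{\varphi(x)}_{\Sigma_{h,a,t-1}^{-1}}\sup_{g\in\V_{h+1,t}}\norm{(M_{h,a}-\hM_{h,a,t})g}_{\Sigma_{h,a,t-1}}$. Since $\epsilon_{h,t}(x,a)=C\norm{\varphi(x)}_{\Sigma_{h,a,t-1}^{-1}}$, it suffices to show $\sup_{g\in\V_{h+1,t}}\norm{(M_{h,a}-\hM_{h,a,t})g}_{\Sigma_{h,a,t-1}}\le C=\wt{O}(Hd)$ uniformly over $h,a$ and every episode $t$, with probability $1-\delta$.

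The main obstacle is precisely this uniform bound. Proposition~\ref{prop:conc_lin} controls $\norm{(M-\hM)g}_{\Sigma}$ only for a \emph{fixed} bounded $g$, whereas $g=\val^+_{h+1,t}$ is itself a function of the data through the LSTD estimates in the optimistic Bellman equations~\eqref{eq:bellman_opt_lin}, hence statistically coupled with the martingale noise. I would resolve this by a covering argument over the class $\V_{h+1,t}$, whose members---after clipping to $[0,H]$ as in~\eqref{eq:valbound}---have the parametric form $V(x)=\min\ev{H,\max_a\bpa{\iprod{\varphi(x)}{\theta_a}+C\norm{\varphi(x)}_{\Sigma_{h+1,a,t-1}^{-1}}}}$ with bounded parameters: build a finite net over these parameters, apply Proposition~\ref{prop:conc_lin} with a union bound over the net and over the $\wt{O}(HAK)$ triples $(h,a,t)$, and absorb the discretization error using the Lipschitz dependence of $V$ on its parameters. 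The log-covering penalty contributes the second factor of $\sqrt{d}$, lifting the per-$g$ rate $\wt{O}(H\sqrt d)$ of Proposition~\ref{prop:conc_lin} to $C=\wt{O}(Hd)$; keeping the covering number small enough that $C$ stays at order $Hd$ is the delicate point. Once $P$ is feasible, optimism $V^*_h\le\val^+_h$ follows, and by Proposition~\ref{prop:dual_approx_1} together with $\CB_{h,t}\le\epsilon_{h,t}$ the algorithm reduces to the optimistic dynamic program~\eqref{eq:bellman_opt_lin} run with bonus $\CB^\dag_{h,t}(x,a)=\epsilon_{h,t}(x,a)$.

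It then remains to sum the bonuses. On the feasibility event I would invoke the factored-linear analogue of Theorem~\ref{thm:regub} to obtain $\Reg_T\le 2\sum_{t=1}^K\sum_{h=1}^H\CB^\dag_{h,t}(x_{h,t},\pi_{h,t}(x_{h,t}))+\wt{O}(H\sqrt T)$ with $\CB^\dag_{h,t}(x,a)=C\norm{\varphi(x)}_{\Sigma_{h,a,t-1}^{-1}}$. Applying Cauchy--Schwarz across the $KH$ rounds and then the elliptical-potential (log-determinant) lemma stage by stage bounds $\sum_{t,h}\norm{\varphi(x_{h,t})}_{\Sigma_{h,a_{h,t},t-1}^{-1}}$ by $\wt{O}(H\sqrt{Kd})$. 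Multiplying by $C=\wt{O}(Hd)$ and substituting $T=KH$ yields $\Reg_T\le\wt{O}\bpa{Hd\cdot H\sqrt{Kd}}=\wt{O}\bpa{H^{3/2}d^{3/2}\sqrt T}=\wt{O}(\sqrt{H^3d^3T})$, as claimed.
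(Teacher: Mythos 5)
Your proposal is correct and follows essentially the same route as the paper's proof: the paper likewise reduces feasibility to a uniform bound on $\sup_{g\in\V_{h+1,t}}\bnorm{\bpa{M_{h,a}-\hM_{h,a,t}}g}_{\Sigma_{h,a,t-1}}$ via the factored structure and Cauchy--Schwarz, handles the data-dependence of $\val^+_{h+1,t}$ by a covering argument over the parametric class $\min\bigl\{H-h,\max_a\bigl(\siprod{\varphi(x)}{\theta_a}+\alpha\norm{\varphi(x)}_{\Sigma_a^{-1}}\bigr)\bigr\}$ (union bound with Proposition~\ref{prop:conc_lin}, discretization error absorbed at scale $\varepsilon=\lambda/(tR)$), which inflates the per-function rate $\wt O(H\sqrt d)$ to $\wt O(Hd)$, and then concludes via the Theorem~\ref{thm:regub}-type bound together with the elliptical-potential lemma (Lemma~\ref{lem:sumbound}). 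The only cosmetic difference is bookkeeping of the action-set dependence ($A$ factors), which the paper's main-text statement also suppresses.
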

\vspace{-3pt}
This algorithm coincides with the LSVI-UCB method of \cite{jin2020provably} and our performance guarantee 
matches theirs. %
The advantage of our result is a %
simpler analysis allowed by our model-optimistic perspective.

\vspace{-5pt}
\subsection{Optimism in feature space through global constraints}\label{sec:global_feat_opt}
\vspace{-5pt}
Our second approach %
exploits the structure of the reference model~\eqref{eq:Mhat}, 
and constrains %
$\wt{P}_a$ through global conditions on $\tM_a$. 
We define $\cP_t$ using %
the distance metric suggested by Proposition~\ref{prop:conc_lin} as
\begin{equation}\label{eq:global_const}
 D(\wt{M}_{h,a},\wh{M}_{h,a}) = \sup_{f\in\V_{h+1}} \bnorm{\bpa{\tM_{h,a} - \hM_{h,a}}f}_{\Sigma_{h,a}} \le \epsilon_{h,a}
\end{equation}
\vspace{-5pt}
for $\V_{h+1}$ as in \eqref{eq:CBonus_lin} and some $\epsilon_{h,a} >0$.
We then consider the following %
optimization problem:
\begin{gather}\label{eq:featOP}
\underset{\substack{q \in \cQ(x_1),\\ \omega ,\wt{M}}}{\text{maximize}} \quad \sum_{h=1}^H \sum_a\iprod{ W_{h,a}\Phi 
\omega_{h,a}}{r_a}
 \left| 
 \begin{array}{rll}
& \text{subject to}\qquad \qquad\quad 
 \\
 &\sum_a q_{h+1,a} = \sum_a \wt{P}_{h,a} \transpose W_{h,a} \Phi \omega_{h,a} \qquad &\forall h\in[H]
 \\
 &\Phi\transpose q_{h,a} = \Phi\transpose W_{h,a} \Phi \omega_{h,a} \qquad &\forall a \in \cA, h\in[H] 
 \\
& D(\wt{M}_{h,a},\wh{M}_{h,a}) \le \epsilon_{h,a} \qquad & \forall a \in \cA, h \in [H].
 \end{array}\right.\raisetag{1.8cm}
 \vspace{-5pt}
\end{gather}

Unfortunately, directly constraining $M$ leads to an optimization problem that, unlike in the other settings, cannot easily be re-written as an convex problem exhibiting strong duality. 
Nevertheless, %
for a fixed $\tM$, the value of \eqref{eq:featOP} %
is equivalent to
$G(\tM) = V_1^+(x_1)$ where $V^+$ solves the OPB equations~\eqref{eq:bellman_opt_lin} with %
 $ \CB_{h}(x,a) = \biprod{\varphi(x)}{\bpa{\tM_{h,a} - \hM_{h,a}} V_{h+1}}$.
Let $\M = \{ \wt{M} \in \bR^{d \times S}: D(\wt{M},\wh{M}) \le \epsilon \}$,
then, we can re-write %
\eqref{eq:featOP} as maximizing $G(\tM)$ over $\tM\in\M$.
Exploiting this %
we provide a more tractable version of the optimization problem, and bound the regret of the resulting algorithm, 
below: %

\begin{theorem}\label{prop:regfeat}
Define the function 
$G'(B) = V_1^+(x_1)$ with $V^+$ the solution
 of the OPB equations~\eqref{eq:bellman_opt_lin} with exploration bonus
  $\CB_{h}(x,a) = \iprod{\varphi(x)}{B_{h,a}}$
and let $\mathcal{B}_t = \bigl\{B: \norm{B_{h,a}}_{\Sigma_{h,a,t-1}} \le \epsilon_{h,a,t}\bigr\}$ for all episodes $t \in [K]$.
Then, %
$\max_{B\in\mathcal{B}_t} G'(B) \ge \max_{\tM \in \M_t} G(\tM)$ and the
optimistic algorithm with exploration bonuses corresponding to the optimal solutions $B^\dag_t$ 
has regret bounded by $\wt{O}(d\sqrt{H^3 T})$.
\end{theorem}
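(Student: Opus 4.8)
The plan is to prove the two assertions separately: the relaxation inequality $\max_{B\in\mathcal{B}_t}G'(B)\ge\max_{\tM\in\M_t}G(\tM)$, and then the regret bound via a linear-MDP analogue of Theorem~\ref{thm:regub}. For the relaxation, fix any $\tM\in\M_t$ and let $\val^+$ solve the OPB equations~\eqref{eq:bellman_opt_lin} with the bonus $\CB_h(x,a)=\biprod{\varphi(x)}{(\tM_{h,a}-\hM_{h,a})\val^+_{h+1}}$, so that $G(\tM)=\val^+_1(x_1)$ by the identity stated just before the theorem. Setting $B_{h,a}=(\tM_{h,a}-\hM_{h,a})\val^+_{h+1}$ makes the linear bonus $\iprod{\varphi(x)}{B_{h,a}}$ coincide with $\CB_h(x,a)$, so $G'(B)=G(\tM)$. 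Since $\val^+_{h+1}\in\V_{h+1}$, the definition of the distance in~\eqref{eq:global_const} gives $\norm{B_{h,a}}_{\Sigma_{h,a,t-1}}\le\sup_{f\in\V_{h+1}}\norm{(\tM_{h,a}-\hM_{h,a})f}_{\Sigma_{h,a,t-1}}=D(\tM_{h,a},\hM_{h,a})\le\epsilon_{h,a,t}$, hence $B\in\mathcal{B}_t$; maximizing over $\tM$ proves the inequality.

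Next I would show the algorithm is optimistic. Taking $\tM=M$ (the true model) cancels the reference-model error, so the OPB equations reduce to the exact Bellman optimality equations of the true MDP; a backward induction from $\val^+_{H+1}=0$ then gives $G(M)=V^*_1(x_1)$. It remains to verify $M\in\M_t$ with high probability, i.e.\ $\sup_{f\in\V_{h+1,t}}\norm{(M-\hM_t)f}_{\Sigma_{h,a,t-1}}\le\epsilon_{h,a,t}$. I would establish this by covering: Proposition~\ref{prop:conc_lin} bounds $\norm{(M-\hM_t)g}_{\Sigma_{h,a,t-1}}$ for each \emph{fixed} $g$, and a union bound over a minimal $\eta$-net of $\V_{h+1,t}$, plus a discretization estimate, upgrades this to the supremum. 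The key structural point — and the origin of the improved $d$-dependence — is that every $f\in\V_{h+1,t}$ has the form $f(x)=\max_a\iprod{\varphi(x)}{\theta^+_{h+1,a}+B_{h+1,a}}$, a maximum of $d$-dimensional linear functions with \emph{no} data-dependent $d\times d$ bonus matrix (in contrast to the local bonus $\norm{\varphi(x)}_{\Sigma_{h,a,t-1}^{-1}}$ underlying Theorem~\ref{lem:bonus_lin}). Thus $\log N_\eta(\V_{h+1,t})=\wt{O}(d)$ rather than $\wt{O}(d^2)$, the net contributes only $\sqrt{d}$ inside the logarithm, and $\epsilon_{h,a,t}=\wt{O}(H\sqrt{d})$ suffices. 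Combined with the relaxation this yields $\val^+_{1,t}(x_{1,t})=\max_{B\in\mathcal{B}_t}G'(B)\ge\max_{\tM\in\M_t}G(\tM)\ge V^*_1(x_{1,t})$.

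For the regret I would use the optimism just proved to write $\Reg_T\le\sum_{t=1}^K(\val^+_{1,t}(x_{1,t})-V^{\bpi_t}_1(x_{1,t}))$ and expand each term along the trajectory generated by $\bpi_t$ in the true MDP, obtaining $\sum_{t,h}\E_{\bpi_t}\bigl[\CB_{h,t}(x_h,a_h)+((\hP_{h,t}-P_h)\val^+_{h+1,t})(x_h)\bigr]$ together with an $O(H\sqrt{T\log(1/\delta)})$ Azuma term from replacing these conditional expectations by the realized states. The optimistic bonus is bounded by Cauchy--Schwarz, $\CB_{h,t}(x,a)=\iprod{\varphi(x)}{B^\dag_{h,a,t}}\le\norm{B^\dag_{h,a,t}}_{\Sigma_{h,a,t-1}}\norm{\varphi(x)}_{\Sigma_{h,a,t-1}^{-1}}\le\epsilon_{h,a,t}\norm{\varphi(x)}_{\Sigma_{h,a,t-1}^{-1}}$. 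The model-error term is handled identically: $((\hP_{h,t}-P_h)\val^+_{h+1,t})(x)=\iprod{\varphi(x)}{(\hM_t-M)\val^+_{h+1,t}}$, and since $\val^+_{h+1,t}\in\V_{h+1,t}$, on the feasibility event $\norm{(\hM_t-M)\val^+_{h+1,t}}_{\Sigma_{h,a,t-1}}\le\epsilon_{h,a,t}$. Both quantities are therefore at most $\wt{O}(H\sqrt{d})\norm{\varphi(x)}_{\Sigma_{h,a,t-1}^{-1}}$, which plays the role of the inflated bonus $\CB^\dag_{h,t}$ in the linear-MDP version of Theorem~\ref{thm:regub}.

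Finally I would sum these bonuses with the elliptical potential lemma: for each $h$, $\sum_t\norm{\varphi(x_{h,t})}_{\Sigma_{h,a_{h,t},t-1}^{-1}}\le\sqrt{K\sum_t\norm{\varphi(x_{h,t})}^2_{\Sigma_{h,a_{h,t},t-1}^{-1}}}=\wt{O}(\sqrt{Kd})$ via the log-determinant bound, so the double sum is $\wt{O}(H\sqrt{Kd})$. Multiplying by $\wt{O}(H\sqrt{d})$ and using $T=KH$ gives $\Reg_T=\wt{O}(H^2 d\sqrt{K})=\wt{O}(d\sqrt{H^3T})$, the Azuma term being lower order. The main obstacle is the covering step of the second paragraph: one must pin down the self-referential class $\V_{h+1,t}$ (the value function entering the bonus must lie in the class over which the distance in~\eqref{eq:global_const} takes its supremum, and the optimizer's value $\val^+_{h+1,t}$ must itself lie in $\V_{h+1,t}$), and then show its metric entropy is only $\wt{O}(d)$. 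This last fact is precisely what distinguishes the $\sqrt{d}$ rate obtained here from the $d$ rate of the local approach, and it hinges on the bonus being linear in $\varphi$ so that no $d\times d$ matrix enters the value-function parametrization.
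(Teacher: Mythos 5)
Your proposal is correct and follows essentially the same route as the paper: the relaxation inequality via the substitution $B_{h,a}=(\tM_{h,a}-\hM_{h,a})\val^+_{h+1}$, optimism by covering the linearly-parametrized value class and union-bounding the fixed-function martingale concentration bound (Proposition~\ref{prop:conc_lin}), and the regret bound via the inflated-bonus decomposition together with the elliptical potential lemma. One bookkeeping note: the value class has one parameter vector $\theta_{h,a}\in\real^d$ per action, so its metric entropy is $\wt{O}(dA)$ rather than $\wt{O}(d)$ — the paper's detailed appendix statement accordingly reads $\wt{O}(dA\sqrt{H^3T})$ — but since the theorem statement itself suppresses the $A$-dependence, this does not affect the substance of your argument.
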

The algorithm suggested in this theorem essentially coincides with the \eleanor method proposed recently in 
\cite{zanette2020learning}, and our guarantees match theirs under our realizability assumption.
Our model-based perspective suggests that the problem of implementing \eleanor is inherently hard:
the form of the primal optimization problem reveals that $G'(B)$ is a convex function of $B$, 
and thus its maximization over a convex set is intractable in general. 
Note that the celebrated LinUCB algorithm for linear bandits
must solve the a similar convex maximization 
problem~\citep{DHK08,abbasi2011improved}. 
As in linear bandits,
it remains an open question to get regret $\wt{O}(Hd\sqrt{T})$ 
with a computationally efficient algorithm. %

\vspace{-5pt}

\section{Conclusion}
We have provided a new framework unifying model-optimistic and value-optimistic approaches for episodic reinforcement learning, thus demonstrating that many desirable features are enjoyed by both approaches. 
In the tabular setting, we provided improved implementations and analyses of a general class of model-optimistic algorithms. 
While these results demonstrate the strength and flexibility of the model-based perspective, our regret bounds feature an additional factor 
of $\sqrt{S}$ on top of the minimax optimal bounds, which has been eliminated by value-optimistic methods \citep{azar2017minimax,DLB17}. 
However, our bounds for factored linear MDPs match the best existing results, which gives us hope that model-based approaches may also eventually prove to be optimal in the tabular case.
Finally, we note that it is straightforward to extend our framework for infinite-horizon MDPs, although we leave the challenge of analyzing the regret of the resulting algorithms for future work.

\bibliographystyle{abbrvnat}
\bibliography{lprl_refs}

\newpage
\appendix
\begin{center}
{\Large \textbf{Appendix}}
\end{center}

\section{Proofs of Results for Tabular Setting}
We prove here the results of Section~\ref{sec:tab}. For ease of exposition, we restate the results before proving them. 
For convenience, we introduce the confidence set for every state $x \in \cS$, action $a \in \cA$ and stage $h \in [H]$,
\begin{align}%
\cP_h(x,a) = \ev{ \wt{P}_h(\cdot|x,a) \in \Delta: D\pa{\tP_h(\cdot|x,a) , \wh{P}_h(\cdot|x,a)} \leq \epsilon(x,a) }
\label{eq:Pxadef}
\end{align}
and note that $\wt{P} \in \cP$ if $\wt{P}_h(\cdot|x,a) \in \cP_h(x,a)$ for all $x,a,h$ %

The following lemma will be useful in several of the proofs.
\begin{lemma} \label{lem:strongcomp}
The primal and dual optimization problems in \eqref{eq:primalMDPopt} and \eqref{eq:dualtab} exhibit strong duality. Consequently the Karush-Kuhn-Tucker (KKT) conditions hold, and in particular, complementary slackness holds.
\end{lemma}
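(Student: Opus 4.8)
The plan is to reduce the claim to a standard strong-duality statement for convex programs, circumventing the non-convexity of \eqref{eq:primalMDPopt} through the reparametrization already used in the proof of Proposition~\ref{prop:dual}. Concretely, I would introduce the variables $J_h(x,a,x') = \tP_h(x'|x,a)\,q_h(x,a)$ and rewrite \eqref{eq:primalMDPopt} entirely in terms of $(q,J)$. The objective $\sum_h \iprod{q_{h,a}}{r}$ is linear in $q$; the normalization $\sum_{x'} J_h(x,a,x') = q_h(x,a)$ together with $J \ge 0$ and $q \in \cQ(x_1)$ are affine; and the bilinear flow constraint $\sum_a q_{h+1,a} = \sum_a \tP_{h,a}\transpose q_{h,a}$ becomes the affine identity $\sum_a q_{h+1,a}(x') = \sum_a \sum_x J_h(x,a,x')$. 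Finally, positive homogeneity of $D$ turns each divergence constraint into $D\pa{J_h(x,a,\cdot),\, q_h(x,a)\hP_h(\cdot|x,a)} \le \epsilon_h(x,a)\,q_h(x,a)$, which is jointly convex in $(q,J)$ since $D$ is jointly convex and the right-hand side is linear. Thus the reparametrized program is convex with a linear objective.

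Next I would verify Slater's condition for this convex program. Choosing $\tP = \hP$ gives $D(\hP,\hP)=0 < \epsilon_h(x,a)$, and taking $q = q^{\bpi}$ to be the occupancy measure of a policy $\bpi$ assigning positive probability to every reachable state-action-stage triple (for instance a uniformly random policy in the reference MDP $\hP$ started from $x_1$) makes each divergence constraint strict, $0 < \epsilon_h(x,a)\,q_h(x,a)$, wherever $q_h(x,a)>0$. The only triples with $q_h(x,a)=0$ are those unreachable under $\hP$; for these the affine constraints force $J_h(x,a,\cdot)=0$ and the associated inequality reduces to $0 \le 0$, so those variables can be eliminated without affecting feasibility or the dual. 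Hence a strictly feasible point exists, Slater's condition holds, and strong duality follows for the convex reparametrization.

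Finally I would transfer this back to the original pair. Forming the Lagrangian $\cL(q,\tP;\val)$ with multipliers $\val_h$ for the flow constraints and performing the inner maximization exactly as in the proof of Proposition~\ref{prop:dual} identifies the dual of the convex program with \eqref{eq:dualtab}; since the map $(q,\tP)\mapsto(q,J)$ is a bijection on the feasible set (with $\tP$ immaterial wherever $q=0$), the optimal values of \eqref{eq:primalMDPopt} and \eqref{eq:dualtab} coincide, which is strong duality for the stated pair. Strong duality for a convex program satisfying Slater makes the KKT conditions necessary and sufficient for optimality, so in particular complementary slackness between the divergence (and nonnegativity) constraints and their multipliers holds at any optimal primal-dual pair.

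The main obstacle is the non-convexity of \eqref{eq:primalMDPopt}: everything hinges on the homogeneity-based reparametrization restoring convexity, and on checking that the Slater point survives the boundary behaviour at $q_h(x,a)=0$, which is the step I would treat most carefully.
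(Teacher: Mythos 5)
Your proposal follows essentially the same route as the paper's own proof: the identical reparametrization $J_h(x,a,x') = \tP_h(x'|x,a)\,q_h(x,a)$ to restore convexity via positive homogeneity, a Slater point built from $\hP$ together with an occupancy measure in the MDP with transitions $\hP$, transfer back to the original variables to identify the Lagrangian dual with \eqref{eq:dualtab}, and KKT/complementary slackness as a standard consequence of strong duality for a convex program satisfying Slater. If anything, you handle the boundary case $q_h(x,a)=0$ in the Slater verification more explicitly than the paper, which only exhibits a feasible (not strictly feasible) point and does not discuss this issue at all.
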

\begin{proof}
We first show that the optimization problem in~\eqref{eq:primalMDPopt} exhibits strong duality.
For this, it is helpful to consider a reparameterization where we introduce the variables $J_h(x,a,x') = \wt{P}_h(x'|x,a)q_h(x,a)$, so that the non-convex constraint $D\bpa{\wt P_h(\cdot|x,a), \wh{P}_h(\cdot|x,a)} \leq \epsilon(x,a)$ can be rewritten as $D\bpa{J_h(x,a,\cdot), \wh{P}_h(\cdot|x,a)q_h(x,a)} \leq \epsilon_h(x,a)q_h(x,a)$, which is convex in $J$ and $q$. The two constraints are clearly equivalent due to positive homogeneity of $D$. This implies that the optimization problem in~\eqref{eq:primalMDPopt} can be equivalently written as %
\begin{align}
 \underset{q\in \cQ(x_1), J}{\text{maximize}}  \;& \sum_{x,a,h} q_h(x,a) r(x,a) \label{eq:pseudoprimal}
\\ \text{Subject to} \; & \sum_a q_h(x,a) = \sum_{x',a'} J_{h-1}(x',a',x) \quad & \forall x \in \cS, h \in [H]  \nonumber
\\ & D\pa{J_h(x,a,\cdot), \wh{P}_h(\cdot|x,a)q_h(x,a)} \leq \epsilon_h(x,a)q_h(x,a) \quad & \forall (x,a) \in \cZ, h \in [H] \nonumber
\\ & \sum_{x'} J_h(x,a,x') =q_h(x,a) \quad & \forall (x,a) \in \cZ, h \in [H] \nonumber
\\& J_h(x,a,x') \geq 0 \quad & \forall x,x' \in \cS, a \in \cA, h \in [H]\nonumber.
\end{align}
 In this formulation, there is only one non-linear constraint, and by our assumption that $D$ is convex in both of its arguments, 
 this constraint is convex in $J$ and $q$. Moreover, $\wh{J}_h(x,a,x') = \wh{P}_h(x'|x,a)q_h(x,a)$ satisfies this constraint for any $q_h(x,a)$, and in particular, if $q_h(x,a)$ is the occupancy measure induced by any policy $\bpi$ in the MDP with transition function $\wh{P}$, then $q_h(x,a)$ and $J_h(x,a,a')$ are feasible solutions to the primal. 
 Hence, the Slater conditions are satisfied, and thus the optimization problem exhibits strong duality (see e.g. \cite{boyd2004convex}).
 We can then write the dual of the optimization problem in \eqref{eq:pseudoprimal} as
 \begin{align}
\max_{(q,M) \in \cC_1} \min_{\val,\gamma}& \bigg \{ \sum_{x,a,h} q_h(x,a) (\val_{h}(x) - \gamma_h(x,a)+r(x,a) + \val_1(x_1)\label{eq:pseudodual}
	\\ & \hspace{120pt} + \sum_{x,a,x',h} J_h(x,a,x') (\val_{h+1}(x') + \gamma_h(x,a)) \bigg \}, \nonumber
 \end{align}
 where $\cC_1 = \bev{ q,J :  D(J_h(x,a,\cdot), \wh{P}_h(\cdot|x,a)q_h(x,a)) \leq \epsilon_h(x,a)q_h(x,a) \ \ (\forall x,a)}$.
 Then, we can use the reverse reparameterization to rewrite this in terms of $\wt{P}_h(x'|x,a) = J_h(x,a,x')/q_h(x,a)$, noting that $\wt P_h(\cdot|x,a)$ is a valid probability density by constraints on $J,q$. We get,  %
 \begin{align}
&\max_{q,\wt{P} \in \cP} \min_{\val,\gamma} \bigg \{ \sum_{x,a,h} q_h(x,a) (-\val_{h}(x) - \gamma_h(x,a)+r(x,a)) + \val_1(x_1) \nonumber
	\\ & \hspace{120pt} + \sum_{x,a,x',h} \wt{P}_h(x'|x,a) q_h(x,a) (\val_{h+1}(x') + \gamma_h(x,a)) \bigg \} \nonumber
\\ & = \max_{q,\wt{P} \in \cP} \min_{\val} \bigg \{ \sum_{x,a,h} q_h(x,a) \bigg(-\val_{h}(x) +r(x,a) + \sum_{x'} \wt{P}_h(x'|x,a)\val_{h+1}(x')  \bigg) + \val_1(x_1) \bigg \}, \label{eq:duallag} 
 \end{align}
 where $\cP = \bev{ \wt{P} \in \Delta: D(\wt{P}_h(\cdot|x,a), \wh{P}_h(\cdot|x,a)) \leq \epsilon_h(x,a) \ \ (\forall x,a,h)}$, and the last equality follows since $\sum_y \wt P_h(y|x,a)=1$. This is the Lagrangian dual form of the original optimization problem we considered. 
 Let $\text{OBJ}(a)$ denote the objective function of the optimization problem in equation $(a)$. It then follows that,
 \[ \text{OBJ}\eqref{eq:primalMDPopt} = \text{OBJ}\eqref{eq:pseudoprimal} = \text{OBJ} \eqref{eq:pseudodual} = \text{OBJ}\eqref{eq:duallag} \]
 and so strong duality holds for the problem in \eqref{eq:primalMDPopt}.
 Thus, by standard results (e.g., \cite[Section 5.5.3]{boyd2004convex}), we conclude that the KKT conditions are satisfied by $(q^+, \wt{P}^+,\val^+)$, the optimal solutions to the primal and dual. As a consequence, complementary slackness also holds. This concludes the proof.
\end{proof}

\subsection{Duality Result} \label{app:dual}
\dual*
\begin{proof}
It will be helpful to write the primal optimization problem as
\begin{align*}
& \underset{q\in\cQ(x_1), \tilde P, \kappa}{\text{maximize}}  \; \sum_{x,a,h} q_h(x,a) r(x,a) 
\\ &\text{Subject to}
\\ & \sum_a q_h(x,a) = \sum_{x',a'} \hat P_h(x|x',a')q_h(x',a') + \sum_{x',a'} \kappa_h(x',a',x) q_h(x',a')  \hspace{-20pt}& \forall x \in \cS, h \in [H]
\\& \kappa_h(x,a,x') = \tilde P_h(x'|x,a)- \wh{P}_h(x'|x,a) & \forall x,x' \in \cS, a \in \cA, h \in [H]
\\ & D\pa{\wh{P}_h(\cdot|x,a), \wh{P}_h(\cdot|x,a)} \leq \epsilon_h(x,a)  & \forall (x,a) \in \cZ, h \in [H]
\\ & \sum_{x'} \kappa_h(x,a,x') =0 & \forall (x,a) \in \cZ, h \in [H].
\end{align*}
By Lemma~\ref{lem:strongcomp}, we know that this problem exhibits strong duality.
We then consider the partial Lagrangian of the above problem without the constraints on $\wt{P}$, which yields
\[ \cL(q,\kappa; \val) = \sum_{x,a,h} q_h(x,a) \bigg( \sum_y \hat P_h(y|x,a) \val_{h+1}(y) + \sum_y \kappa_h(x,a,y) \val_{h+1}(y) +r(x,a) - \val_h(x) \bigg) + \val_1(x_1)\]
For $\cP$ defined in \eqref{eq:Pdef}, we know that the optimal value of the objective function of the primal optimization problem is given by the Lagrangian relaxation,
\[ \min_{\val} \max_{q \geq 0,\kappa,\wt{P} \in \cP} \cL(q,\kappa;\val).\]
To proceed, we fix a $\val$ and consider the inner maximization problem. By definition of $\kappa_h(x,a,x') = \wt{P}_h(x'|x,a) - \wh{P}_h(x'|x,a))$, we can write
\begin{align}
&\max_{q \geq 0,\kappa, \wt{P} \in \cP} \cL(q,\kappa;\val)  \nonumber
\\ &= \max_{q \geq 0,\kappa, \wt{P} \in \cP} \sum_{x,a,h} q_h(x,a) \bigg( \sum_y \wh P_h(y|x,a) \val_{h+1}(y) + \sum_y \kappa_h(x,a,y) \val_{h+1}(y) +r(x,a) - \val_h(x) \bigg) \nonumber
	\\ & \hspace{300pt} + \val_1(x_1) \nonumber
\\ &= \max_{q\geq 0} \sum_{x,a,h} q_h(x,a) \bigg( \sum_y \wh P_h(y|x,a) \val_{h+1}(y) + \max_{\substack{\kappa_h(x,a,\cdot) \\ \wt{P}_h(\cdot|x,a) \in \cP_h(x,a)}} \sum_y \kappa(x,a,y) \val_{h+1}(y) +r(x,a) \nonumber
	\\ & \hspace{300pt} - \val_h(x) \bigg) + \val_1(x_1) \nonumber
\\ & = \max_{q\geq 0} \sum_{x,a,h} q_h(x,a) \bigg( \sum_y \wh P_h(y|x,a) \val_{h+1}(y) + D_*(\val_{h+1} |\epsilon_h(x,a) , \wh{P}_h(\cdot|x,a)) +r(x,a) - \val_h(x) \bigg)  \nonumber
	\\ & \hspace{300pt}+ \val_1(x_1)\label{eqn:Lmax},
\end{align}
where $\cP_h(x,a)$ is the set in \eqref{eq:Pxadef}. The second equality crucially uses that $q_h(x,a)\ge 0$ and the last equality follows from the definition of the conjugate $D_*$:
\begin{align*}
&\max_{\kappa_h(x,a,\cdot),\wt{P}_h(\cdot|x,a)) \in \cP_h(x,a)} \sum_y \kappa_h(x,a,y) \val_{h+1}(y)
\\ &\qquad\qquad = \max_{\wt{ P}_h(\cdot|x,a) \in \Delta} \bigg\{ \biprod{\wt{P}_h(\cdot|x,a) - \wh{P}_h(\cdot|x,a)}{\val_{h+1}}; D(\wt{P}_h(\cdot|x,a),\wh{P}_h(\cdot|x,a)) \leq \epsilon_h(x,a) \} %
\\ &\qquad\qquad= D_*(\val_{h+1}|\epsilon_h(x,a), \wh{P}_h(\cdot|x,a)).
\end{align*}

We then optimize the expression in \eqref{eqn:Lmax} with respect to $q$ and $\val$ using an adaptation of techniques used for establishing LP duality between the original problems~\eqref{eq:V_LP} and~\eqref{eq:primal_full}. Specifically, let $g(\val) = \max_q \cL(q; \val)$ and note that by~\eqref{eqn:Lmax}, the Lagrangian no longer depends on $\kappa$ or $\wt{P}$. Then, define $\eta_h(x,a) = \sum_y \wh{P}_h(y|x,a) \val_{h+1}(y) + D_*(\val_{h+1}|\epsilon_h(x,a), \wh{P}_h(\cdot|x,a)) +r(x,a) - \val_h(x) $ for all $x,a,h$ and observe that
\begin{align*}
 g(\val) = \val_1(x_1) + \max_q \sum_{x,a,h} q_h(x,a) \eta_h(x,a) = 
 \begin{cases} \val_1(x_1)  & \text{ if } \eta_h(x,a) \leq 0 \quad \forall x,a,h
 \\ \infty & \text{ otherwise}.
 \end{cases}
 \end{align*}
Thus, we can then write the dual optimization problem of minimizing $g(\val)$ with respect to $\val$ as
\begin{align*}
\underset{\val}{\text{minimize }}  \;& \val_1(x_1)
\\ \text{Subject to } \;& \val_h(x) \geq r(x,a) + \sum_y \wh{P}_h(y|x,a) \val_{h+1}(y) +  D_*(\val_{h+1}|\epsilon_h(x,a), \wh{P}_h(\cdot|x,a)).
\end{align*}
This proves the proposition.
\end{proof}

\subsection{Properties of the Optimal Solutions} \label{app:properties}
In this section we prove Propositions~\ref{prop:pols}~and~\ref{prop:valbounds}.
In order to prove Proposition~\ref{prop:pols}, we first need the following result which gives the form of the optimal solution to the dual in Equation~\eqref{eq:dualtab}.
\begin{lemma} \label{lem:dualform}
 The solution to the dual in \eqref{eq:dualtab} is given by
\begin{align}
\val^+_h(x) = \max_{a \in \cA} \bigg\{ r(x,a)+ \CB_{h}(x,a)  + \sum_{y \in \cS} \hP_h(y|x,a) \val_{h+1}^*(y) \bigg \}\label{eqn:betastardef2}
\end{align}
where we use the notation $\CB_{h}(x,a)=D_*(\val_{h+1}|\epsilon_h(x,a), \wh{P}_h(\cdot|x,a)))$.
\end{lemma}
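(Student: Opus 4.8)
The plan is to prove this by a standard backward dynamic-programming argument, resting on one structural observation that makes the bonus-augmented Bellman operator monotone. Define the backward recursion $\tV_{H+1}(x) = 0$ and, for $h = H, H-1, \dots, 1$,
\[
\tV_h(x) = \max_{a \in \cA}\ev{ r(x,a) + \sum_{y}\hP_h(y|x,a)\tV_{h+1}(y) + D_*\pa{\tV_{h+1} \mid \epsilon_h(x,a),\hP_h(\cdot|x,a)}}.
\]
I would show that $\tV$ is exactly the minimizer $\val^+$ of the LP in \eqref{eq:dualtab}, which is the claimed formula \eqref{eqn:betastardef2}.

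The key identity is that the transition term and the exploration bonus combine into a single optimistic expectation. Writing $\hp = \hP_h(\cdot|x,a)$ and using the definition of the conjugate $D_*$ together with the confidence set $\cP_h(x,a)$ from \eqref{eq:Pxadef},
\[
\iprod{\hp}{\val} + D_*\pa{\val \mid \epsilon_h(x,a),\hp} = \iprod{\hp}{\val} + \max_{\wt p\in\cP_h(x,a)}\iprod{\wt p - \hp}{\val} = \max_{\wt p\in\cP_h(x,a)}\iprod{\wt p}{\val}.
\]
Since every $\wt p\in\cP_h(x,a)$ is a probability vector, hence entrywise nonnegative, the right-hand side is monotone nondecreasing in $\val$: if $\val \ge \val'$ entrywise then $\iprod{\wt p}{\val}\ge\iprod{\wt p}{\val'}$ for each feasible $\wt p$, and taking the maximum preserves the inequality. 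This monotonicity is the engine of the whole argument.

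With this in hand I would argue in two steps. First, $\tV$ is feasible for \eqref{eq:dualtab}: by construction $\tV_h(x)$ is the maximum over $a$ of the right-hand sides, so $\tV_h \ge r_a + \hP_{h,a}\tV_{h+1} + \CB_{h,a}$ holds for every $a$. Second, every feasible $\val$ dominates $\tV$ entrywise, which I prove by backward induction on $h$. The base case $h = H+1$ is the convention $\val_{H+1} = \tV_{H+1} = 0$. For the inductive step, assume $\val_{h+1}\ge\tV_{h+1}$; for each $a$, feasibility gives $\val_h(x)\ge r(x,a) + \iprod{\hp}{\val_{h+1}} + D_*\pa{\val_{h+1} \mid \epsilon_h(x,a),\hp}$, and rewriting the last two terms through the key identity and applying monotonicity with $\val_{h+1}\ge\tV_{h+1}$ lower-bounds this by the same quantity evaluated at $\tV_{h+1}$; maximizing over $a$ then yields $\val_h(x)\ge\tV_h(x)$. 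Since $\tV$ is feasible and minorizes every feasible point, it minimizes $\val_1(x_1)$ and therefore equals $\val^+$, proving the lemma.

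The only genuine obstacle is establishing the monotonicity of the bonus-augmented operator, and this is exactly where the conjugate identity does the work: the conjugate $D_*(\val \mid \cdot,\cdot)$ on its own is \emph{not} monotone in $\val$, because its inner maximization ranges over signed differences $\wt p - \hp$. Only after absorbing the transition term to recover $\max_{\wt p\in\cP_h(x,a)}\iprod{\wt p}{\val}$ does monotonicity become transparent. Once that is settled, the remainder is the routine backward-induction dominance argument familiar from ordinary Bellman optimality.
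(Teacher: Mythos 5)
Your proof is correct and follows the same backward-induction strategy as the paper: both arguments show that the bonus-augmented dynamic-programming recursion produces the entrywise-minimal feasible point of \eqref{eq:dualtab}, and hence an optimal solution. Where you differ is in rigor at the one delicate step, and the difference favors you. The paper's inductive step asserts that plugging the stage-$(h+1)$ minimizer $\val^+_{h+1}$ into the right-hand side of the constraint minimizes that right-hand side, which tacitly requires the map $\val_{h+1}\mapsto \iprod{\hP_h(\cdot|x,a)}{\val_{h+1}}+D_*\pa{\val_{h+1}\middle|\epsilon_h(x,a),\hP_h(\cdot|x,a)}$ to be monotone; as you observe, $D_*$ alone is \emph{not} monotone (its inner maximization ranges over signed differences $\wt{p}-\hp$), so this is not immediate. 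Your identity $\iprod{\hp}{\val}+D_*\pa{\val\middle|\epsilon,\hp}=\max_{\wt{p}\in\cP_h(x,a)}\iprod{\wt{p}}{\val}$, combined with nonnegativity of the feasible probability vectors, supplies exactly the missing monotonicity, so your write-up closes a step the paper leaves implicit. One cosmetic caveat applies equally to both proofs: since the LP objective involves only $\val_1(x_1)$, optimal solutions are not literally unique; what is really established is that the recursion yields the pointwise-minimal (hence optimal) solution, which is the object the lemma and its downstream uses in Propositions~\ref{prop:pols} and~\ref{prop:valbounds} refer to.
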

\begin{proof}
The structure of the constraints on $\val_h(x)$ in \eqref{eq:dualtab} and the definition of $\CB_h(x,a)$ mean that $\val^+_h(x)$ can be determined using only the values of $\val^+_{l}$ for $l \geq h+1$.
Hence, we can prove the result by backwards induction on $h=H,\dots, 1$. For the base case, when $h=H$, the constraint in the dual is
\[ \val_H(x) \geq r(x,a) + \CB_H(x,a)  \quad \forall x \in \cS, a \in \cA.\]
In order to minimize $\val_H(x)$, we set $\val^+_H(x) = \max_{a \in \cA}\{ r(x,a) + \CB_H(x,a)\}$ for all $x \in \cS$. Now assume that for stage $h+1$, the optimal value of $\val^+_{h+1}(x)$ is given by \eqref{eqn:betastardef2}. Then, when considering stage $h$, we wish to set $\val^+_h(x)$ as small as possible. By the inductive hypothesis, we know it is optimal to set $\val_{h+1}(x)=\val^+_{h+1}(x)$, and we know that $\text{CB}_{h}(x,a)$ has been defined using only terms from stage $h+1$ and is minimal. Consequently, the RHS of the constraint in \eqref{eq:dualtab} is minimized for any $(x,a,h)$ by setting $\val_{h+1}=\val^+_{h+1}$. This means that the minimal value of $\val_h$ is given by \eqref{eqn:betastardef2}. Hence the result holds for all $h=1,\dots, H$, and so considering $h=1$ and initial state $x_1$, we can conclude that $\val^+$ is the optimal solution to the LP in \eqref{eq:dualtab}.
\end{proof}

We now prove Proposition~\ref{prop:pols}.
\pols*
\begin{proof}
By Lemma~\ref{lem:dualform}, we know that the optimal solution to the dual in \eqref{eq:dualtab} is given by
\begin{align}
\val^+_h(x) = \max_{a \in \cA} \bigg\{ r(x,a)+ \text{CB}_{h}(x,a)  + \sum_{y \in \cS} \hP_h(y|x,a) \val_{h+1}^+(y) \bigg \}\label{eqn:betastardef}.
\end{align}
 
We then proceed by considering the case where the right hand side of the expression in \eqref{eqn:betastardef} has a unique maximizer. In this case, let 
 \[a_h^*(x) = \argmax_{a \in \cA} \bigg\{ r(x,a)+ \text{CB}_{h}(x,a)  + \sum_{y \in \cS} \hP_h(y|x,a) \val_{h+1}^+(y) \bigg \}. \]
 Since $a_h^*(x)$ is the unique maximizer of this expression, it follows that, for a fixed $x,h$, the constraint in \eqref{eq:dualtab} is only binding for one $a \in \cA$, namely $a_h^*(x)$. By Lemma~\ref{lem:strongcomp}, we know that complementary slackness holds for this problem. Then, using complementary slackness, it follows that only one of the primal variables is non-zero. 
In particular, for a fixed state $x$ and stage $h$, $q^+_h(x,a)=0$ for all $a \neq a^*_h(x), x' \in \cS$. Consequently, $\pi^+(x) =a_h^*(x)$ and so the policy induced by $q^+$, $\pi^+$, will only have non-zero probability of playing the action which maximize the right hand side of \eqref{eqn:betastardef}.
 
We now consider the case where there are multiple maximizers of the right hand side of \eqref{eqn:betastardef}. Let $a_h^1(x), \dots, a_h^m(x)$ denote the $m$ maximizers. By a similar argument to the previous case, we know that for a fixed $x \in \cS$ and $h \in [H]$, the constraint in \eqref{eq:dualtab} is only binding for $a=a_h^i(x)$ for some $i \in [m]$. Then, by complementary slackness, it follows that $q^+_h(x,a)=0$ for all $a \neq a^*_i(x)$ for $i \in [m]$, and so the only non-zero values of $q^+_h(x,a)$ can occur for $a=a_h^i(x)$ for some $i \in [m]$. The action chosen from state $x$ by policy $\pi^+$ must be one of the actions for which $q^+_h(x,a)>0$ by properties of the relationship between occupancy measures and policies. Hence, $\pi^+(x) = a_h^i(x)$ for some $i \in [m]$, and so equation~\eqref{eq:optpolicy} must hold.
\end{proof}

\valbounds*
\begin{proof}
We begin by proving that if $P \in \cP$, then $V^*_h(x) \le \val^+_h(x) $.

Let $q^*$ be the occupancy measure corresponding to the optimal policy $\pi^*$ under $P$. Then, if $P \in \cP$, then $P$ must feasible for the primal in \eqref{eq:primalMDPopt}, and so it must be the case that 
\[ \sum_{x,a} \sum_{h=1}^H r(x,a) q_h^* (x,a) \leq \sum_{x,a} \sum_{h=1}^H r(x,a) q^+_h(x,a), \]
where $q^+$ is the optimal solution to the LP in \eqref{eq:primalMDPopt}. Considering the LHS of this expression, and the fact that $ q^*$ is the occupancy of the optimal policy $\pi^*$ under the true transition function, it follows that 
\[ \sum_{x,a} \sum_{h=1}^H r(x,a) q_h^*(x,a)= \E \bigg[ \sum_{h=1}^H r(X_h, \pi^*(X_h)) \bigg| X_1=x_1 \bigg] = V^*_1(x_1) \] %
Hence, when $P \in \cP$,
\[ V^*_1(x_1) \leq  \sum_{x,a} \sum_{h=1}^H r(x,a) q^+_h(x,a) = \val_{1}^+(x_1).  \]
for the initial state $x_1$, where we have used the fact that the value of the optimal objective functions are equal due to strong duality (Lemma~\ref{lem:strongcomp}).

In order to prove the result for $x\neq x_1$ and $h\neq 1$, we consider modified linear programs defined by starting the problem at stage $h$ with all prior mass in state $x$. In this case, define the initial state as $x_h=x$, the we write the modified primal optimization problem as 
\begin{align}
 \underset{q \in  \cQ(x), \wt{P} \in \Delta}{\text{maximize}}  &\sum_{l=h}^H \sum_{x,a} q_lx,a)r(x,a) \label{eqn:pseudoprimal}
 \\ \text{Subject to } & \sum_{a \in \cA} q_l(x,a) = \sum_{x' \in \cS, a' \in \cA} \wt{P}_l(x|x'a') q_{l-1}(x',a') \qquad &\forall x \in \cS,\,l=h+1,\dots,H  \nonumber
 \\ & D\pa{\wt{P}_l(\cdot|x,a), \wh{P}_l(\cdot|x,a)} \leq \epsilon_l(x,a), \qquad &\forall (x,a) \in \cS, l =h+1, \dots, H \nonumber
\end{align}
where $\cQ(x)$ has been modified to account for the new initial state. 
Observe that this problem is analogous to the primal optimization problem in \eqref{eq:primalMDPopt}, and hence we can apply the same techniques as used to prove Proposition~\ref{prop:dual} to show that the dual can be written as
\begin{align}
\underset{\val}{\text{minimize}}  \;& \val_h(x)&  \label{eqn:pseudodual}
 \\ \text{ subject to } \;& 
  \val_{l}(x) \geq r(x,a)+ \text{CB}_{l}(x,a)  + \sum_{y \in \cS} \hP_{l}(y|x,a) \val_{l+1}(y)  & \forall (x,a)\in\cS\times\cA, l\in[h:H]\nonumber.
\end{align}
where $\text{CB}_l(x,a) = D_*(\val_{l+1}|\epsilon_l(x,a), \wh{P}_l(\cdot|x,a))$.
Analyzing this dual shows that for $l=h,\dots, H$ and $x \in \cS$, the constraints on $\val_{h}(x)$ here are the same as those in the full dual in \eqref{eq:dualtab}. This means that the dual in \eqref{eq:dualtab} can be broken down per stage and the optimal solution can be found by a dynamic programming style algorithm. In particular, the optimal solution $\val^+_{h}(x)$ in the complete dual in \eqref{eq:dualtab} is given by the optimal value of the objective function in the optimization problem in \eqref{eqn:pseudodual}. 
Note that strong duality also applies in this modified problem since the technique used to prove this in Lemma~\ref{lem:strongcomp} also applies here.
We therefore know that $\val^+_{h}(x) = \sum_{x,a} \sum_{l=h}^H \tilde q^+_l(x,a)r(x,a)$ where $\tilde q^+$ is the optimal solution to the modified LP in \eqref{eqn:pseudoprimal}. On the event that $P$ is in the confidence set, the occupancy measure $\tilde q^*$ defined by the optimal policy $\pi^*$ and $P$ starting from state $x$ in stage $h$ must be a feasible solution to the LP in \eqref{eqn:pseudoprimal}. Consequently, by the same argument as before,
\[ V^*_h(x) =  \sum_{x,a} \sum_{l=h}^H r(x,a) \tilde q_l^* (x,a) \leq \sum_{x,a} \sum_{l=h}^H r(x,a) \tilde q^+_l(x,a) = \val^+_h(x), \]
thus proving the first inequality in the statement of the proposition for all $(x,a) \in \cZ, h=1,\dots, H$.

We now show that $\val^+_h(x) \le H-h+1$ for all $x\in\cS, h \in [H]$. The proof is similar to the previous case and again relies on building a new MDP from each state $x$ in stage $h$ and considering the dual. In particular, for any $x \in \cS, h \in[H]$, in the dual LP in \eqref{eqn:pseudodual}, we see that that the optimal solution to the objective function has value $\val^+_{h}(x)$. By strong duality, this must have the same value as $\sum_{x,a} \sum_{l=h}^H \tilde q^+_l(x,a) r(x,a)$, the optimal value of the objective function of the primal optimization problem in \eqref{eqn:pseudoprimal} started at $x$ in stage $h$. The optimal solution $\tilde q^+$ must be a valid occupancy measure since by the primal constraints $q_l(x,a) \geq 0$ and $\sum_{x,a} q_h(x,a)=1$ are satisfied. It also follows that $\sum_{x,a} q_l(x,a)=1$ for all $l=h+1,\dots, H$ by Lemma~\ref{lem:primalsum}.
From this it follows that $\tilde q_l^+(x,a) \leq1, \, \forall (x,a) \in \cZ, l=h,\dots, H$ so combining this with the fact that $r(x,a) \in [0,1] \, \forall (x,a) \in \cZ$, it must be the case that  $\sum_{x,a} \sum_{l=h}^H \tilde q^+_l(a,x)r(x,a) \leq H-h+1$, and so $\val^+_{h,t}(x) \leq H-h+1$ and the result holds. 
\end{proof}

\begin{lemma} \label{lem:primalsum}
For any feasible solution $q$ to the primal problem in \eqref{eq:primalMDPopt}, it must hold that $\sum_{x,a}q_h(x,a)=1$ for all $h \in [H]$.
\end{lemma}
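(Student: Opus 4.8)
The plan is to prove that for any feasible $q$, the total occupancy mass at each stage equals one, by exploiting the flow constraints in~\eqref{eq:primalMDPopt} together with the fact that each $\wt{P}_h(\cdot|x,a)$ is a valid transition distribution. First I would recall that feasibility for~\eqref{eq:primalMDPopt} requires $q \in \cQ(x_1)$, which directly encodes the base case: by definition of $\cQ(x_1)$, we have $\sum_a q_1(x,a) = \one\{x = x_1\}$, so summing over $x \in \cS$ gives $\sum_{x,a} q_1(x,a) = 1$. This handles $h=1$.

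The main step is an induction on $h$. Assuming $\sum_{x,a} q_h(x,a) = 1$, I would use the flow constraint $\sum_a q_{h+1,a} = \sum_a \wt{P}_{h,a}\transpose q_{h,a}$, which written entrywise says $\sum_a q_{h+1}(x',a) = \sum_{x,a} \wt{P}_h(x'|x,a) q_h(x,a)$ for every $x' \in \cS$. Summing both sides over $x' \in \cS$ and interchanging the order of summation on the right yields
\[
\sum_{x',a} q_{h+1}(x',a) = \sum_{x,a} q_h(x,a) \sum_{x'} \wt{P}_h(x'|x,a).
\]
Since $\wt{P}_h(\cdot|x,a)$ is a probability distribution in $\Delta$, the inner sum $\sum_{x'} \wt{P}_h(x'|x,a)$ equals $1$, so the right-hand side collapses to $\sum_{x,a} q_h(x,a)$, which is $1$ by the inductive hypothesis. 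This closes the induction and shows $\sum_{x,a} q_{h+1}(x,a) = 1$.

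I do not anticipate a genuine obstacle here, as this is a routine normalization argument; the only point requiring mild care is making sure the interchange of summations is justified (it is, since all sums are finite and $q, \wt{P} \ge 0$) and that the constraint $\wt{P} \in \Delta$ is invoked at the right place so that each row sums to one. One subtlety worth flagging is that the statement is about \emph{any} feasible $q$ for~\eqref{eq:primalMDPopt}, not just the optimum, so I would avoid appealing to optimality or complementary slackness and rely purely on the constraint structure of $\cQ(x_1)$ and the flow equation, together with the membership $\wt{P} \in \Delta$. This keeps the argument self-contained and makes it directly applicable within the proof of Proposition~\ref{prop:valbounds}, where it is used to bound the per-stage occupancy mass by one.
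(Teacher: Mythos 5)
Your proposal is correct and follows exactly the paper's own argument: induction on $h$, with the base case coming from the constraint $\sum_a q_1(x,a) = \one\{x=x_1\}$ in $\cQ(x_1)$, and the inductive step from summing the flow constraint over states and using that each $\wt{P}_h(\cdot|x,a) \in \Delta$ has unit mass. No differences worth noting.
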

\begin{proof}
The proof follows by induction on $h$. For the base case, when $h=1$, 
\[ \sum_{x,a} q_1(x,a) = \sum_a q_1(x_1,a) =1 \]
by the constraint $\sum_a q_1(x,a)= \one \{x=x_1\}$ for all $x \in \cS$. Now assume the result holds for $h$, and we prove it for $h+1$. By the flow constraint (first constraint in \eqref{eq:primalMDPopt}), for any feasible $\wt{P} \in \cP$,
\[ \sum_{x,a} q_{h+1}(x,a)= \sum_x \bigg( \sum_{x',a'} \wt{P}_h(x|x',a')q_h(x',a') \bigg) = \sum_{x',a'} q_h(x',a') =1 \]
since $\sum_x \wt{P}_h(x|x',a')=1$. Thus the result holds for all $h=1,\dots,H$.
\end{proof}

\subsection{Regret Bounds} \label{app:regtab}
In this section, we bound the regret of any algorithm that fits into our framework. 
\if0
\begin{lemma}\label{lem:recurse}
For any episode $t$ and stage $h$, on the event $P \in \cP_t$, with probability greater than $1-2\delta$,
\[ (\wh{P}^{\pi_t} - P^{\pi_t})\val_{h+1,t}^* \leq \frac{1}{H}(\val^+_{h+1,t}(x_{h+1,t}) - V^{\pi_t}_{h+1}(x_{h+1,t})) + \sqrt{\frac{\Var(V^{\pi}_{h+1})\log(1/\delta)}{N_t(x,a)}} + \frac{1}{H} \xi_{h+1,t}^{\val} + \frac{8 \sqrt{2} SH^2 \sqrt{\log(1/\delta)}}{3N_t(x,a)} \]
for $\xi_{h+1,t}^{\val} =  P^{\pi_t} \val_{h+1,t}^* - \val_{h+1,t}^*(x_{h+1,t})$ a martingale difference sequence.
\todoc{check constants}
\end{lemma}
\begin{proof}
The proof follows from the proof of Lemma 3 and Lemma 9 in \cite{azar2013minimax}, although we include it here for completeness.
For ease of notation, we drop dependence on $t$ and observe that
\begin{align*}
(\wh{P} - P)^T \val_{h+1}^*  &=(\wh{P} - P)^T (\val_{h+1}^* - V^*_{h+1}) + (\wh{P} - P)^T V_{h+1}^* 
\\ &\leq \underbrace{(\wh{P} - P)^T (\val_{h+1}^* - V^*_{h+1})}_{I}  +\underbrace{\sqrt{\frac{\Var(V^*_{h+1}) \log(1/\delta)}{N(x_h,a_h)}}}_{II} + \frac{\log(1/\delta)}{N(x_h,a_h)}
\end{align*}
with probability greater than $1-\delta$ by Bernstein's inequality \todoc{check constants and lower order term}.
We then bound term $I$ using a technique similar to Lemma 3 in \cite{azar2013minimax} and defining $\cS_1 = \{ y \in \cS: N(x_h,a_h) P(y|x_a,a_h) \leq 8H^3 \log(1/\delta) \}$ and $\cS_2 = \cS \setminus \cS_1$, we see that with probability greater than $1-\delta$,
\begin{align*}
&(\wh{P} - P)^T (\val_{h+1}^* - V^*_{h+1}) \leq  \sum_{y \in \cS} \bigg(2 \sqrt{\frac{P(y|x_h, a_h) (1-P(y|x_h,a_h)) \log(1/\delta)}{N(x_h,a_h)}} + \frac{4\log(1/\delta)}{3N(x_h,a_h)} \bigg) (\val_{h+1}^*(y) - V^*_{h+1}(y)) 
\\ & \leq \sum_{y \in \cS_1} \bigg(2 \sqrt{\frac{P(y|x_h, a_h) N(x_h,a_h)\log(1/\delta)}{N^2(x_h,a_h)}} \bigg) H + \sum_{y \in \cS_2} \bigg(2 \sqrt{\frac{P^2(y|x_h, a_h) \log(1/\delta)}{N(x_h,a_h) P(y|x_h,a_h)}} \bigg) (\val_{h+1}^*(y) - V^{\pi}_{h+1}(y)) 
	\\ & \hspace{100pt}  +\frac{4S\log(1/\delta)}{3N(x_h,a_h)} 
\\ & \leq \frac{\sqrt{16SH^4 \log(1/\delta)}}{N(x_h,a_h)} + \sum_{y \in \cS_2} \sqrt{\frac{P^2(y|x,a)(\val^+_{h+1}(y) - V^{\pi}_{h+1}(y))^2}{4H^2}} + \frac{4S\log(1/\delta)}{3N(x_h,a_h)} 
\\ & \leq \frac{\sqrt{16SH^4 \log(1/\delta)}}{N(x_h,a_h)} + \frac{1}{2H} \xi^{\val-V}_{h+1} + \frac{1}{2H}( \val_{h+1}^*(x_{h+1}) - V^{\pi}_{h+1}(x_{h+1})) + \frac{4S\log(1/\delta)}{3N(x_h,a_h)} 
\end{align*}
for $ \xi^{\val-V}_{h+1} = P^T(\val_{h+1}^* - V^{\pi}_{h+1}) - (\val_{h+1}^*(x_{h+1}) - V^{\pi}_{h+1}(x_{h+1}))$ a martingale difference sequence.

For term $II$, we first note that, on event $P \in \cP_t$,
\begin{align*}
\Var(V^*_{h+1}) - \Var(V^{\pi}_{h+1}) &= P^T (V^*_{h+1})^2 - (P^T(V^*_{h+1}))^2 - P^T(V^{\pi}_{h+1})^2 + (P^TV^{\pi}_{h+1})^2 
\\ &\leq P^T  (V^*_{h+1})^2 - P^T(V^{\pi}_{h+1})^2 \tag*{(Since $V^*(x) \geq V^{\pi}(x)  \forall x$)}
\\ & = P^T((V^*_{h+1} - V^{\pi}_{h+1})(V^*_{h+1} + V^{\pi}_{h+1})) 
\\ &\leq 2H P^T(V^*_{h+1} - V^{\pi}_{h+1}) \tag*{(Since $V^*(x), V^{\pi}(x) \leq H \forall x$)}
\\ & \leq 2H P^T(\val^+_{h+1} - V^{\pi}_{h+1})  \tag*{(By optimism)}
\end{align*}
Then, we see that,
\begin{align*}
\sqrt{\frac{\Var(V^*_{h+1}) \log(1/\delta)}{N(x_h,a_h)}} &\leq \sqrt{\frac{\Var(V^{\pi}_{h+1}) \log(1/\delta)}{N(x_h,a_h)}} + \sqrt{\frac{2H P^T(\val^+_{h+1} - V^{\pi}_{h+1}) \log(1/\delta)}{N(x_h,a_h)}} 
\\ & \leq \sqrt{\frac{\Var(V^{\pi}_{h+1}) \log(1/\delta)}{N(x_h,a_h)}} + \sum_{y \in \cS} \sqrt{\frac{2H P(y|x,a)(\val^+_{h+1}(y) - V^{\pi}_{h+1}(y)) \log(1/\delta)}{N(x_h,a_h)}} 
\end{align*}
To bound this last term, let $\cS_1 = \{ y \in \cS: N(x_h,a_h) P(y|x_a,a_h) \leq \frac{8H^3 \log(1/\delta)}{(\val_{h+1}(y) - V^{\pi}_{h+1}(y))} \}$ and $\cS_2 = \cS \setminus \cS_1$. Then,
\begin{align*}
&\sum_{y \in \cS} \sqrt{\frac{2H P(y|x,a)(\val^+_{h+1}(y) - V^{\pi}_{h+1}(y)) \log(1/\delta)}{N(x_h,a_h)}} 
\\ &= \sum_{y \in \cS_1} \sqrt{\frac{2H P(y|x,a)N(x_h,a_h)(\val^+_{h+1}(y) - V^{\pi}_{h+1}(y)) \log(1/\delta)}{N^2(x_h,a_h)}}  + \sum_{y \in \cS_2} \sqrt{\frac{2H P^2(y|x,a)(\val^+_{h+1}(y) - V^{\pi}_{h+1}(y)) \log(1/\delta)}{N(x_h,a_h) P(y|x_h,a_h)}} 
\\ & \leq \frac{\sqrt{16SH^4 \log(1/\delta)}}{N(x_h,a_h)} 
	\\ & \hspace{100pt}+ \sum_{y \in \cS_2} \sqrt{\frac{P^2(y|x,a)(\val^+_{h+1}(y) - V^{\pi}_{h+1}(y))^2}{4H^2}}
\\ & \leq \frac{\sqrt{16SH^4 \log(1/\delta)}}{N(x_h,a_h)} + \frac{1}{2H}P^T(\val_{h+1}^* - V^{\pi}_{h+1})
\\ & =  \frac{\sqrt{16SH^4 \log(1/\delta)}}{N(x_h,a_h)} + \frac{1}{2H} \xi^{\val-V}_{h+1} + \frac{1}{2H}( \val_{h+1}^*(x_{h+1}) - V^{\pi}_{h+1}(x_{h+1}))
\end{align*}
for $ \xi^{\val-V}_{h+1} = P^T(\val_{h+1}^* - V^{\pi}_{h+1}) - (\val_{h+1}^*(x_{h+1}) - V^{\pi}_{h+1}(x_{h+1}))$ a martingale difference sequence.

Combining the above bounds on term I and II, we see that,
\begin{align*}
(\wh{P} - P)^T \val_{h+1}^* \leq  \frac{8\sqrt{2}H^2 \sqrt{\log(1/\delta)}}{N(x_h,a_h)} + \frac{1}{H} \xi^{\val}_{h+1} + \frac{1}{H}( \val_{h+1}^*(x_{h+1}) - V^{\pi}_{h+1}(x_{h+1})) + \sqrt{\frac{\Var(V^{\pi}_{h+1}) \log(1/\delta)}{N(x_h,a_h)}}
\end{align*}
\end{proof}
\fi

\if0
\begin{lemma}
Let $L(x_{h,t},a_{h,t}) =\sqrt{\frac{\Var(V^*_{h+1})\log(1/\delta)}{N_t(x_{h,t},a_{h,t})}} + \frac{4SH\log(1/\delta)}{3N_t(x_{h,t},a_{h,t})} $, then
\[ \sum_{t=1}^K \sum_{h=1}^H L(x_{h,t},a_{h,t})  \leq \sqrt{HSAT \log(T/\delta)} + \frac{4}{3} S^2AH \log(T/\delta)
\end{lemma}
\begin{proof}
We begin by considering the first term and note that by Cauchy--Schwarz,
\begin{align*}
 \sum_{t=1}^K \sum_{h=1}^H\sqrt{\frac{\Var(V^*_{h+1})\log(1/\delta)}{N_t(x_{h,t},a_{h,t})}} \leq \sqrt{\log(1/\delta)} \sqrt{ \sum_{t=1}^K \sum_{h=1}^H \Var(V^*_{h+1}) \sum_{t=1}^K \sum_{h=1}^H \frac{1}{N_t(x_{h,t},a_{h,t})}}}
\end{align*}
Then, observe that, by a result similar to Lemmas 8 and 9 of \cite{azar2017minimax},
\[  \sum_{t=1}^K \sum_{h=1}^H \Var(V^*_{h+1}) \leq \]
 the pigeon-hole principle to bound,
\[ \sum_{t=1}^K \sum_{h=1}^H L(x_{h,t},a_{h,t}) \leq \sqrt{HSAT \log(T/\delta)} + \frac{4}{3} S^2AH \log(T/\delta).\]
\end{proof}

\fi

\reggen*
\begin{proof}
The proof is similar to standard proofs of regret for episodic reinforcement learning algorithms (e.g. \cite{azar2017minimax,jaksch2010near}) but uses Proposition~\ref{prop:valbounds} to simplify the probabilistic analysis and the definition of the confidence sets to simplify the algebraic analysis.
For the proof, for any $h,t$, define $\Delta_{h,t}(x_{h,t}) = \val^+_{h,t}(x_{h,t}) - V^{\pi_t}_h(x_{h,t})$. %
Then using the optimistic result from Proposition~\ref{prop:valbounds}, on the event $\cap_{t=1}^K \{P \in \cP_t\}$, we can write the regret as
\begin{align*}
\Reg_T & = \sum_{t=1}^K (V^*_1(x_{1,t}) - V^{\pi_t}_1(x_{1,t})) \leq  \sum_{t=1}^K(\val^+_{1,t}(x_{1,t}) - V^{\pi_t}_{1,t}(x_{1,t})) = \sum_{t=1}^K \Delta_{1,t}(x_{1,t}).
\end{align*}
Then, for a fixed $h,t$, we consider $\Delta_{h,t}(x_{h,t})$ and show that this can be bounded in terms of $\Delta_{h+1,t}(x_{h+1,t})$, some confidence terms and some martingales. In particular, using the Bellman equations and the dynamic programming formulation, 
we can write
\begin{align}
&\Delta_{h,t}(x_{h,t}) = \val^+_{h,t}(x_{h,t}) - V^{\pi_t}_h(x_{h,t})  \nonumber
\\ &= \biprod{\wh{P}_{h,t}(\cdot|x_{h,t},a_{h,t}) }{\val^+_{h+1,t}} + r(x_{h,t},a_{h,t})+ \text{CB}_{h,t}(x_{h,t},a_{h,t}) - \biprod{P_h(\cdot|x_{h,t},a_{h,t})}{V^{\pi_t}_{h+1}} - r(x_{h,t},a_{h,t})  \nonumber
\\ & =  \biprod{\wh{P}_{h,t}(\cdot|x_{h,t},a_{h,t})}{\val^+_{h+1,t}}  - \biprod{P_h(\cdot|x_{h,t},a_{h,t}) }{V^{\pi_t}_{h+1}} + \text{CB}_{h,t}(x_{h,t},a_{h,t}) \nonumber
\\ & = \Delta_{h+1,t}(x_{h+1,t}) + \biprod{\wh{P}_{h,t}(\cdot|x_{h,t},a_{h,t}) }{\val_{h+1,t}^+} - \val_{h+1,t}^+(x_{h+1,t})   \nonumber
	\\ & \hspace{100pt} +V^{\pi_t}_{h+1}(x_{h+1,t}) -  \biprod{P_h(\cdot|x_{h,t},a_{h,t})}{V^{\pi_t}_{h+1}} + \text{CB}_{h,t}(x_{h,t},a_{h,t}) \nonumber
\\ & =  \Delta_{h+1,t}(x_{h+1,t}) +  \biprod{\wh{P}_{h,t}(\cdot|x_{h,t},a_{h,t}) - P_h(\cdot|x_{h,t},a_{h,t})}{\val_{h+1,t}^+} + \zeta_{h+1,t}^{\pi} + \text{CB}_{h,t}(x_{h,t},a_{h,t}) \nonumber
\end{align}
where in the last equality, $ \zeta_{h+1,t}^{\pi} $ is a martingale difference sequence defined by 
\[\zeta_{h+1,t}^{\pi} =  \biprod{P_h(\cdot|x_{h,t},a_{h,t})}{\val_{h+1,t}^+ - V^{\pi_t}_{h+1}} - \bpa{\val_{h+1,t}^+(x_{h+1,t})- V^{\pi_t}_{h+1}(x_{h+1,t})}.  \]

Then observe that on the event $P \in \cP_t$, 
\begin{align*}
 &\biprod{\wh{P}_{h,t}(\cdot|x_{h,t},a_{h,t}) - P_h(\cdot|x_{h,t},a_{h,t})}{\val_{h+1,t}^+}
 \\ & \leq \max_{\wt{P} \in \cP_h(x_{h,t},a_{h,t})} \biprod{\wh{P}_{h,t}(\cdot|x_{h,t},a_{h,t}) - \wt{P}_h(\cdot|x_{h,t},a_{h,t})}{\val_{h+1,t}^+}
 \\ &\leq \max_{\wt{P} \in \Delta} \bigg \{ \biprod{\wh{P}_{h,t}(\cdot|x_{h,t},a_{h,t}) - \wt{P}_h(\cdot|x_{h,t},a_{h,t})}{\val_{h+1,t}^+} :
 	\\ & \hspace{150pt}  D(\wt{P}_h(\cdot|x_{h,t},a_{h,t}),\wh{P}_{h,t}(\cdot|x_{h,t},a_{h,t})) \leq \epsilon_{h,t}(x_{h,t},a_{h,t}) \bigg\} 
 \\ & = \max_{\wt{P} \in \Delta} \bigg \{ \biprod{\wt{P}_h(\cdot|x_{h,t},a_{h,t})- \wh{P}_{h,t}(\cdot|x_{h,t},a_{h,t})}{-\val_{h+1,t}^+} : 
 	\\ & \hspace{150pt} D(\wt{P}_h(\cdot|x_{h,t},a_{h,t}),\wh{P}_{h,t}(\cdot|x_{h,t},a_{h,t})) \leq \epsilon_{h,t}(x_{h,t},a_{h,t}) \bigg\} 
 \\ &= D_*(- \val^+_{h+1,t}| \epsilon_{h,t}(x_{h,t},a_{ht}), \wh{P}_t(\cdot|x_{h,t},a_{h,t})) 
 \\ & = \CB_{h,t}^-(x_{h,t},a_{h,t})
 \end{align*}

This gives a recursive expression for $\Delta_{h,t}(x_{h,t})$,
\[ \Delta_{h,t}(x_{h,t})  \leq \Delta_{h+1,t}(x_{h+1,t})  + \zeta_{h+1,t}^{\pi} + \text{CB}_{h,t}(x_{h,t},a_{h,t}) +\text{CB}^-_{h,t}(x_{h,t},a_{h,t})\]
Recursing over $h=1,\dots, H$, we see that,
\begin{align*}
 \Delta_{1,t}(x_{1,t}) \leq \sum_{h=1}^H \text{CB}_{h,t}(x_{h,t}, \pi_t(x_{h,t})) +\sum_{h=1}^H \text{CB}^-_{h,t}(x_{h,t}, \pi_t(x_{h,t})) + \sum_{h=1}^H \zeta_{h+1,t}^{\pi}
\end{align*}
since $\Delta_{H+1,t}(x)=0$. 

By Azuma-Hoeffdings inequality, it follows that 
\[ \sum_{t=1}^K \sum_{h=1}^H \zeta_{h+1,t}^{\pi} \leq H \sqrt{2T \log(1/\delta)}  \]
with probability greater than $1-\delta$, since the sequence has increments bounded in $[-H,H]$.

Consequently, with probability greater than $1-\delta$, we can bound the regret by,
\begin{align*}
\Reg_T &%
 \leq \sum_{t=1}^K \sum_{h=1}^H \text{CB}_{h,t}(x_{h,t}, \pi_t(x_{h,t})) +  + \sum_{t=1}^K \sum_{h=1}^H \text{CB}_{h,t}^-(x_{h,t}, \pi_t(x_{h,t})) + H \sqrt{2T \log(1/\delta)}  
\end{align*}
thus giving the result.
\end{proof}

\subsection{Upper bounding the exploration bonus} \label{app:ubound}
We now prove the regret bound, when we use an upper bound $D_*^\dag$ on the conjugate $D_*$. 
We first need the below result that shows that the optimistic value function $\val^\dag$ in equation~\ref{eq:valbound} is indeed optimistic.

\begin{lemma} \label{lem:valbound}
On the event $P \in \cP$, it holds that $V^*_1(x_1) \leq \val^{\dag}_1(x_1)$. %
\end{lemma}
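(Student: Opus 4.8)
The plan is to prove the stronger statement $V^*_h(x) \le \val^\dag_h(x)$ for every $x \in \cS$ and every stage $h$ by backward induction on $h$, and then specialize to $h=1$, $x = x_1$. The base case is the terminal stage, where $V^*_{H+1} \equiv 0 \equiv \val^\dag_{H+1}$. Along the way I would also carry the auxiliary fact that $0 \le \val^\dag_h(x) \le H-h+1$ for all $x,h$: this follows directly from the clipping $\min\{H-h+1,\cdot\}$ in \eqref{eq:valbound}, together with the non-negativity of $r$, of $\iprod{\hP_h(\cdot|x,a)}{\val^\dag_{h+1}}$ (using $\val^\dag_{h+1}\ge 0$ inductively), and of the conjugate $D_*^\dag \ge D_* \ge 0$ (the latter because $p=p'$ is always feasible in the definition of $D_*$, giving value $0$). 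This boundedness guarantees $\val^\dag_{h+1}:\cS\to[0,H]$, which is precisely the regime in which $D_*^\dag$ is assumed to upper bound $D_*$.

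For the inductive step, assume $V^*_{h+1}\le \val^\dag_{h+1}$ entrywise. Fix $x$ and let $a^\star=\pi^*_h(x)$ be an optimal action, so that the Bellman optimality equation gives $V^*_h(x)=r(x,a^\star)+\iprod{P_h(\cdot|x,a^\star)}{V^*_{h+1}}$. Since rewards lie in $[0,1]$ and $H-h+1$ stages remain, we also have $V^*_h(x)\le H-h+1$. Because $\val^\dag_h(x)$ is a maximum over actions of $\min\{H-h+1,\,r(x,a)+\iprod{\hP_h(\cdot|x,a)}{\val^\dag_{h+1}}+D_*^\dag(\val^\dag_{h+1}|\epsilon'(x,a),\hP_h)\}$, it suffices to show that $V^*_h(x)$ is at most \emph{both} arguments of the $\min$ for the single action $a^\star$; the first argument is already handled by $V^*_h(x)\le H-h+1$.

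The crux is the second argument: after cancelling $r(x,a^\star)$, I must show $\iprod{P_h(\cdot|x,a^\star)}{V^*_{h+1}} \le \iprod{\hP_h(\cdot|x,a^\star)}{\val^\dag_{h+1}} + D_*^\dag(\val^\dag_{h+1}|\epsilon'(x,a^\star),\hP_h)$. First I use the inductive hypothesis $V^*_{h+1}\le\val^\dag_{h+1}$ and the non-negativity of the probability vector $P_h(\cdot|x,a^\star)$ to replace $V^*_{h+1}$ by $\val^\dag_{h+1}$, reducing the goal to $\iprod{P_h(\cdot|x,a^\star)-\hP_h(\cdot|x,a^\star)}{\val^\dag_{h+1}} \le D_*^\dag(\val^\dag_{h+1}|\epsilon'(x,a^\star),\hP_h)$. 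On the event $P\in\cP$ the true transition satisfies $D(P_h(\cdot|x,a^\star),\hP_h(\cdot|x,a^\star))\le\epsilon(x,a^\star)$, so $P_h(\cdot|x,a^\star)$ is a feasible competitor in the maximization defining $D_*(\val^\dag_{h+1}|\epsilon(x,a^\star),\hP_h)$; hence the left-hand side is at most $D_*(\val^\dag_{h+1}|\epsilon(x,a^\star),\hP_h)$, which is in turn at most $D_*^\dag(\val^\dag_{h+1}|\epsilon'(x,a^\star),\hP_h)$ by the defining upper-bound property of $D_*^\dag$ (applicable because $\val^\dag_{h+1}\in[0,H]$ by the auxiliary fact). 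This closes the induction, and evaluating at $h=1$, $x=x_1$ yields $V^*_1(x_1)\le\val^\dag_1(x_1)$.

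The main obstacle here is bookkeeping rather than depth. The two delicate points are that the upper-bound property of $D_*^\dag$ is only assumed for functions $f:\cS\to[0,H]$ — so the argument genuinely needs the boundedness of $\val^\dag_{h+1}$ established in the first step — and that one must verify the clipping at $H-h+1$ never discards value we need, which holds precisely because $V^*$ itself obeys the same cap $V^*_h(x)\le H-h+1$.
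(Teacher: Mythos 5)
Your proof is correct, but it takes a genuinely different route from the paper's. You prove the pointwise statement $V^*_h(x) \le \val^\dag_h(x)$ by direct backward induction on $h$, using three ingredients: the Bellman optimality equation, the fact that on the event $P \in \cP$ the true transition kernel $P_h(\cdot|x,a)$ is a feasible competitor in the maximization defining $D_*$ (so the model error term is absorbed by the conjugate), and the domination $D_*^\dag \ge D_*$ on $[0,H]$-valued functions, whose applicability you secure by carrying the boundedness of $\val^\dag$ through the induction via the clipping. The paper instead stays entirely inside its LP machinery: it augments the dual LP of Proposition~\ref{prop:dual} with the box constraint $\val_h(x) \le H-h+1$ (harmless by Proposition~\ref{prop:valbounds}), replaces the exact conjugate constraint by the inflated one, argues that this relaxation can only increase the optimal objective, and concludes via the chain $V^*_1(x_1) \le \val^+_1(x_1) \le \val^\dag_1(x_1)$, with the first inequality inherited from Proposition~\ref{prop:valbounds} and hence from strong duality (Lemma~\ref{lem:strongcomp}). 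Your argument is more elementary and self-contained — it never invokes strong duality, Proposition~\ref{prop:valbounds}, or the identification of the modified LP's solution with the recursion \eqref{eq:valbound} (a step the paper only sketches "by an argument similar to Lemma~\ref{lem:dualform}") — and it still keeps the probabilistic and algebraic parts separate, since everything is conditioned on the model-space event $P \in \cP$. What the paper's route buys is the explicit intermediate comparison $\val^\dag \ge \val^+$ between the inflated and exact optimistic values, which is the relation quoted in the main text before Theorem~\ref{thm:regub}, and a proof that remains uniformly in the model-optimistic LP language that the framework is built around; your induction, by contrast, is exactly the classical value-optimistic argument, instantiated cleanly because the confidence sets live in the primal space.
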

\begin{proof}
We consider the dual optimization problem,
\begin{align}
\underset{\val}{\text{minimize }}  \;& \val_1(x_1) \nonumber
\\ \text{subject to } \;& \val_h(x) \geq r(x,a) + \sum_y \hP_h(y|x,a) \val_{h+1}(y) + D_*\pa{\val_{h+1} \middle| \epsilon_h(x,a),\wh{P}_h(\cdot|x,a)}  \label{eq:val}
\\ & \val_h(x) \leq H-h+1 \label{eq:hbound}
\end{align}
which is the dual from Proposition~\ref{prop:dual}, where we have added the additional constraint \eqref{eq:hbound}. Note that adding this additional constraint will not effect the value of the optimal solution since by Proposition~\ref{prop:valbounds}, we know that $\val^+(x) \leq H-h+1$ for all $h=1,\dots, H, x \in\cS$. 

By definition of $D_*^\dag$, it follows that for any $\val_{h+1}$,
\begin{align*}
& r(x,a) + \sum_y \hP_h(y|x,a) \val_{h+1}(y) + D_*\pa{\val_{h+1} \middle| \epsilon_h(x,a),\wh{P}_h(\cdot|x,a)}  
 \\ & \hspace{20pt} \leq \min \bigg\{ H-h+1, r(x,a) + \sum_y \hP_h(y|x,a) \val_{h+1}(y) + D_*^\dag\pa{\val_{h+1} \middle| \epsilon'_h(x,a),\wh{P}_h(\cdot|x,a)}\bigg \} 
 \end{align*}
since all the original feasible solutions in stage $h+1$ must satisfy $\val_{h+1}(x) \leq H-h$.
Therefore, we can replace the constraint in \eqref{eq:val} by
\[ \val_h(x) \geq \min \bigg\{ H-h+1, r(x,a) + \sum_y \hP(y|x,a) \val_{h+1}(y) + D_*^\dag\pa{\val_{h+1} \middle| \epsilon'_h(x,a),\wh{P}(\cdot|x,a)} \bigg \}\]
knowing that this will only increase the optimal value of the objective function.
Since we know that by Proposition~\ref{prop:valbounds}, that the optimal solution to the original dual optimization problem satisfies $V^*_1(x_1) \leq \val^+_1(x_1)$ on the event $P \in \cP$, it must also be the case that $V^*_1(x_{1,t}) \leq \val^\dag_1(x_{1,t})$ for $\val^\dag_1(x_{1,t})$ the optimal solution of the modified dual. Note also that the solution to the modified dual problem will take the form given in \eqref{eq:valbound} by an argument similar to Lemma~\ref{lem:dualform}.
\end{proof}

\regub*
\begin{proof}
Given the result in Lemma~\ref{lem:valbound}, we know that $\val^{\dag}$ is optimistic so the proof proceeds similarly to the case where $\CB_{h,t}(x,a)$ is computed exactly. In  particular, let $\Delta_{h,t}^\dag(x_{h,t}) = \val^\dag(x_{h,t}) - V^\pi(x_{h,t})$, then,
\[ \Reg_T = \sum_{t=1}^K (V^*_1(x_{1,t}) - V^{\pi_t}_1(x_{1,t})) \leq  \sum_{t=1}^K(\val^\dag_{1,t}(x_{1,t}) - V^{\pi_t}_{1,t}(x_{1,t})) =\leq \sum_{t=1}^K \Delta_{h,t}^\dag(x_{h,t}) \]
and, observe that by the same argument as Theorem~\ref{thm:reggen},
\begin{align*}
\Delta_{h,t}^\dag(x_{h,t}) = \Delta_{h+1,t}^\dag(x_{h+1,t}) + \biprod{\wh{P}(\cdot|x_{h,t},a_{h,t})- P(\cdot|x_{h,t},a_{h,t})}{\val^\dag_{h+1,t}} + \zeta_{h+1,t}^\dag + \CB^\dag_{h,t}(x_{h,t},a_{h,t})
\end{align*}
where $\zeta_{h+1,t}^\dag$ is the martingale difference sequence $\zeta_{h+1,t}^\dag = \biprod{P(\cdot|x_{h,t},a_{h,t}}{\val_{h+1,t}^\dag - V^{\pi_t}_{h+1}} - (\val_{h+1,t}^\dag(x_{h+1,t})- V^{\pi_t}_{h+1}(x_{h+1,t})) $.
Then, on the event $P \in \cP$,
\begin{align*}
 & \hspace{-30pt} \biprod{\wh{P}_t(\cdot|x_{h,t},a_{h,t}) - P(\cdot|x_{h,t},a_{h,t})}{\val_{h+1,t}^\dag}
 \\ &\leq \max_{\wt{P} \in \Delta} \bigg \{ \biprod{\wh{P}_t(\cdot|x_{h,t},a_{h,t}) - \wt{P}}{\val_{h+1,t}^\dag} : D(\wt{P},\wh{P}_t(\cdot|x_{h,t},a_{h,t})) \leq \epsilon_{h,t}(x_{h,t},a_{h,t}) \bigg\} 
\\ & = \max_{\wt{P} \in \Delta} \bigg \{ \biprod{\wt{P}- \wh{P}_t(\cdot|x_{h,t},a_{h,t})} {-\val_{h+1,t}^\dag} : D(\wt{P},\wh{P}_t(\cdot|x_{h,t},a_{h,t})) \leq \epsilon(x_{h,t},a_{h,t}) \bigg\} 
 \\ &= D_*(- \val^\dag_{h+1,t}|\wh{P}_t(\cdot|x_{h,t},a_{h,t}), \epsilon_{h,t}(x_{h,t},a_{ht}))
 \\ &  \leq  D_*^\dag(\val^\dag_{h+1,t}|\wh{P}_t(\cdot|x_{h,t},a_{h,t}), \epsilon_{h,t}'(x_{h,t},a_{ht})) \leq  \CB_{h,t}^\dag(x_{h,t},a_{h,t})
 \end{align*}
 by definition of the upper bound $\CB_{h,t}^\dag(x,a)$.
 
 Using this, we can recurse over $h=1,\dots, H$ to get,
 \[ \Delta_{1,t}^\dag(x_{1,t}) \leq 2\sum_{h=1}^H \CB_{h,t}^\dag(x_{h,t}, a_{h,t}) + \sum_{h=1}^H \zeta_{h+1,t}^\dag \]
 so summing this over all episodes $t=1,\dots, K$ and using Azuma's inequality to bound the sum of the martingales gives the result.
\end{proof}

\subsection{Further Details of Examples} \label{app:examples}
Here we present additional results and explanations to show that many algorithms fit into our framework. The main 
purpose of this section is to demonstrate the use of our general results for constructing confidence sets and 
calculating the corresponding exploration bonuses, as well as bounding the regret. We do not aim to improve over 
state-of-the-art results or obtain tight constants, but we do note that several of the exploration bonuses we derive 
are data-dependent in a way that may possibly enable tight problem-dependent regret bounds. We refer to the works of 
\citet{dann2019policy,ZB19,simchowitz2019non} that demonstrate the power of data-dependent exploration bonuses for 
achieving such guarantees.

In several calculations below, we will use the following simple result to bound the sum of the exploration bonuses:
\begin{align} 
\sum_{t=1}^K \sum_{h=1}^H \sqrt{\frac{1}{N_{h,t}(x_{h,t},a_{h,t})}} &= \sum_{x\in \cS, a \in \cA}\sum_{t=1}^K \sum_{h=1}^H \one \{x_{h,t}=x,a_{h,t}=a\} \sqrt{\frac{1}{N_{h,t}(x_{h,t},a_{h,t})}}  \nonumber
\\ &= \sum_{x \in \cS,a \in \cA} \sum_{h=1}^H \sum_{n=1}^{N_{h,K}(x,a)} \sqrt{\frac{1}{n}} \leq  \sum_{x \in \cS,a \in \cA} \sum_{h=1}^H 2 \sqrt{N_{h,K}(x,a)} \nonumber
\\ &\leq 2 \sqrt{HSAT} \label{eq:pigeon}
\end{align}
where the last inequality follows due to the Cauchy--Schwarz inequality and the fact that $\sum_{x\in \cS,a \in \cA, h 
\in [H]} N_{h,K}(x,a) = HK = T$.
We also use the modified empirical transition probability defined for any states $x,x' \in \cS$, action $a \in \cA$, stage $h \in [H]$ and episode $t \in [K]$ as
\begin{equation} \label{eq:hpplus}
\hP^+_{h,t}(x'|x,a) = \frac{\max\{ 1, N_{h,t}(x,a,x')\}}{N_{h,t}(x,a)} 
\end{equation}
and note that this only differs from $\hP_{h,t}(x'|x,a)$ if $N_{h,t}(x,a,x')=0$. Consequently, 
\begin{equation}\label{eq:hpdiff}
|\hP^+_{h,t}(x'|x,a)  - \hP_{h,t}(x'|x,a) | = \bigg | \frac{\max\{ 1, N_{h,t}(x,a,x')\}}{N_{h,t}(x,a)} - \frac{N_{h,t}(x,a,x')}{N_{h,t}(x,a)} \bigg| \leq \frac{1}{N_{h,t}(x,a)}
\end{equation}
In several cases, we define the primal confidence sets using $\hP^+$ as the reference model rather than $\hP$ to avoid 
division by 0. 
Note that doing this results in dual formulations that involve $\hP^+$ rather than $\hP$. However, since we are still optimizing over the space of probability distributions in the primal, it holds that the optimal value of the dual objective will still be bounded by $H$.
We can also use Equation~\eqref{eq:hpdiff} to bound the empirical variance of any function $z:\cS \to [0,H]$ under 
$\hP^+$,
\begin{align} 
\wh{ \Var}^+(z) &= \sum_{y} \wh{P}^+(y) (z(y) - \siprod{\wh{P}^+}{z})^2  \leq \sum_{y} \wh{P}^+(y) \bigg( 2 (z(y) - \siprod{\wh{P}}{z})^2 +  2(\frac{HS}{N})^2\bigg)  \nonumber
\\ &\leq 2\sum_{y} \wh{P}(y)  (z(y) - \siprod{\wh{P}}{z})^2  + \frac{2HS+2(\frac{HS}{N})^2}{N}  + \frac{H^2S^2}{N^2} 
\leq 2 \wh{\Var}(z) + \frac{2HS}{N} + \frac{3H^2S^2}{N^2} \label{eq:varbound}.
\end{align}

\subsubsection{Total variation distance}  \label{sec:l1}
We start with the classic choice of the $\ell_1$ distance $D(p,p') = \onenorm{p-p'}$ which underlies the seminal UCRL2 algorithm of \citet{jaksch2010near}. Defining the confidence sets used in episode $t$ as 
\begin{align*}
 \cP_t &= \bigg\{ \wt{P} \in \Delta : \onenorm{\wt{P}_h(\cdot|x,a) - \wh{P}_t(\cdot|x,a)} \leq \epsilon_{h,t}(x,a) \quad \forall (x,a) \in \cZ, h \in [H] \bigg\}
\\ \text{ for } \quad \epsilon_{h,t}(x,a) &= \sqrt{\frac{2S \log(2SAT/\delta)}{N_{h,t}(x,a)}}
\end{align*}
we know that $P \in \cP_t$ for all $t=1,\dots, K$ with probability greater than $1-\delta$ \citep{jaksch2010near}.
Then, the conjugate distance is,
\begin{align*}
D_*(f|\epsilon,\wh{P}) &= \max_{P \in \Delta} \bigg \{ \biprod{P-\wh{P}}{f}\bigg| \onenorm{P-\wh{P}} \leq \epsilon \bigg\} = \min_{\lambda \in \bR} \max_{P \geq 0} \bigg \{ \biprod{P-\wh{P}}{f-\lambda {\bf 1}} \bigg| \onenorm{P-\wh{P}} \leq \epsilon \bigg \} 
\\ &\leq \min_{\lambda \in \bR} \max_{P \in \bR^S} \bigg \{ \biprod{P-\wh{P}}{f-\lambda {\bf 1} }\bigg| \onenorm{P-\wh{P}} \leq \epsilon \bigg \}   \leq \epsilon\min_{\lambda \in \bR} \|f-\lambda {\bf 1}\|_\infty \leq \epsilon \, sp(f)/2
\end{align*}
where we have defined $\lambda$ as the Lagrange multiplier of the constraint $\sum_x P(x) =1 = \sum_x \wh{P}(x)$, used the fact that the dual norm of the $\ell_1$ norm is the $\ell_\infty$ norm and, denoted by $sp(f)=\max_xf(x) - \min_xf(x)$ the span of $f$. Noting that a similar result holds for $D_*(-f|\epsilon,\wh{P})$, we can define $D_*^\dag(f|\epsilon,\wh{P}) = \epsilon \, sp(f)/2$, and use the exploration bonus
\[
 \CB_{h,t}^\dag(x,a) = \epsilon_{h,t}(x,a) sp(\val_{h+1,t}^\dag) /2%
\]
Since we are clipping $\val^+_h$ to be in the range $[0,H-h+1]$, we can bound $sp(\val_h^\dag) \leq H$.
Applying Theorem~\ref{thm:regub} and using the bound of Equation~\eqref{eq:pigeon} to bound the sum of the exploration bonuses 
shows that the regret of this algorithm is bounded by $\widetilde O(S\sqrt{AH^3T})$. %
This recovers the classic UCRL2 guarantees that can be deduced from the work of \cite{jaksch2010near}.

\subsubsection{Variance-weighted $\ell_\infty$ norm}  \label{sec:varl1}
We can get tighter bounds by using the empirical Bernstein inequality \cite{maurer2009empirical} to constrain the 
transition function. Here, we use $\hP^+ = \frac{\max \{ 1, N_{h,t}(x,a,y) \}}{N_{h,t}(x,a)}$ as the reference model in 
the primal confidence sets.
The constraints considered here are related to those used in the UCRL2B algorithm of \citet{improved_analysis_UCRL2B}. 
Specifically, we can apply the empirical Bernstein inequality to show that the following bound holds for all $x,a,x',h,t$ with probability  at least $1-\delta$:
\begin{align*}
\left|\hP_{h,t}^+(x'|x,a) - P_h(x'|x,a)\right|  & \leq \left|\hP_{h,t}(x'|x,a) - P_h(x'|x,a)\right|   + \left|\hP_{h,t}^+(x'|x,a) - \hP_{h,t}(x'|x,a)\right|  
\\& \hspace{-50pt} \le \sqrt{\frac{2\hP_{h,t}(x'|x,a)\pa{1- \hP_h(x'|x,a)}\log(HS^2AT/\delta)}{N_{h,t}(x,a)}} + \frac{7\log(HS^2AT/\delta)}{3N_{h,t}(x,a)} + \frac{1}{N_{h,t}(x,a)}
\\ & \hspace{-50pt}  \leq \sqrt{\frac{2\hP_{h,t}(x'|x,a)\log(HS^2AT/\delta)}{N_{h,t}(x,a)}} + \frac{7\log(HS^2AT/\delta)}{3N_{h,t}(x,a)}  + \frac{1}{N_{h,t}(x,a)}%
\\ & \hspace{-50pt}  \leq \sqrt{\frac{2\hP^+_{h,t}(x'|x,a)\log(HS^2AT/\delta)}{N_{h,t}(x,a)}} + \frac{7\log(HS^2AT/\delta)}{3N_{h,t}(x,a)}  + \frac{1}{N_{h,t}(x,a)} %
\\ & \hspace{-50pt}  \leq 6\log(HS^2AT/\delta)   \sqrt{\frac{\hP^+_{h,t}(x'|x,a)}{N_{h,t}(x,a)}}
\end{align*}  
The last inequality follows from the definition of the reference model that guarantees that %
 $N_{h,t}(x,a) \wh{P}^+_{h,t}(y|x,a) = \max\{N_{h,t}(x,a,y),1 \}\geq1$.

In what follows, we will state a confidence set inspired by the above result using the divergence measure $D(P,\hP^+) = \max_{x} \frac{(P(x) - \hP^+(x))^2}{\hP^+(x)}$, which is easily seen to be positive homogeneous and convex in both $P$ and $\hP^+$.
Defining $\epsilon_{h,t}(x,a) =   \frac{36\log^2(HS^2AT/\delta)}{N_{h,t}(x,a)}$,
we define the confidence sets used in episode $t$ as
\[ \cP_h(\cdot|x,a) = \left\{ P_h(\cdot|x,a) \in \Delta : \max_{y} \frac{(\wt{P}(y|x,a) - \wh{P}^+_{h,t}(y|x,a))^2}{\wh{P}^+_{h,t}(y|x,a)} \leq \epsilon_{h,t}(x,a) \right\}  \] %
and $ \cP = \cap_{x,a,h} \{\cP_h(\cdot|x,a)\}$. %
By the above argument, we know that $P \in \cP$ with probability greater than $1-\delta$.%

The corresponding conjugate distance can be expressed by defining $\lambda$ as the Lagrange multiplier of the constraint $\sum_x P(x) =1$ and writing
\begin{align*}
D_*(f|\epsilon,\wh{P}^+) &= \max_{P \in \Delta} \bigg\{ \biprod{P-\wh{P}^+}{f} \bigg|  \max_{x \in \cS} \frac{(P(x) - \hP^+(x))^2}{\hP^+(x)} \leq \epsilon \bigg \}
\\ & = \min_{\lambda \in \bR} \max_{P\geq 0}   \bigg\{ \biprod{P-\wh{P}^+}{f-\lambda{\bf 1}} - \lambda ( \sum_x \hP^+(x) -1) \bigg|  \max_{x \in \cS} \frac{|P(x) - \hP(x)|}{\sqrt{\hP^+(x)}} \leq \sqrt{\epsilon} \bigg \}
\\ & \leq  \min_{|\lambda| \leq H + \frac{SH}{N}} \max_{P \in \bR^S}   \bigg\{ \biprod{P-\wh{P}^+}{f-\lambda{\bf 1}}  - \lambda \sum_x (\hP^+(x) -\hP(x)) \bigg|  \max_{x \in \cS} \frac{|P(x) - \hP(x)|}{\sqrt{\hP^+(x)}} \leq \sqrt{\epsilon} \bigg \}
\\ & \leq \min_{|\lambda| \leq  H + \frac{SH}{N}} \sum_{x} \bigg| (f(x) -\lambda) \sqrt{\hP^+(x)}\bigg| \sqrt{\epsilon} + \bigg( H + \frac{SH}{N}\bigg) \frac{1}{N}
\\ &\leq \sqrt{\epsilon} \sum_{x} \sqrt{\wh{P}^+(x)} |f(x) - \wh{P}^+f| + \frac{2SH}{N}. %
\end{align*}
The same technique can be used to bound $D_*(-f|\epsilon,\wh{P}) $, so we can define $D_*^\dag(f|\epsilon,\wh{P})= \sqrt{\epsilon} \sum_x \sqrt{\wh{P}^+(x)} |f(x) - \wh{P}^+f| + \frac{2SH}{N}$ and write the inflated exploration bonus in the form
\[
 \CB_{h,t}^\dag(x,a) =\sqrt{ \epsilon_{h,t}(x,a)} \sum_{y} \sqrt{\wh{P}^+_{h,t}(y|x,a)} |\val^\dag_{h+1}(y) - \wh{P}_{h,t}^+\val^\dag_{h+1}| + \frac{2SH}{N_{h,t}(x,a)}. %
\]
By Theorem~\ref{thm:regub}, we know that in order to bound the regret of this algorithm, we need to be able to bound the sum of these exploration bonuses. For this, note that by the Cauchy--Schwarz inequality, and a similar argument to \eqref{eq:varbound},%
\begin{align*}
& \hspace{-30pt} \sqrt{ \epsilon_{h,t}(x,a)} \sum_{y} \sqrt{\wh{P}^+_{h,t}(y|x,a)} |\val^\dag_{h+1}(y)  - \wh{P}_{h,t}^+\val^\dag_{h+1}|  
\\ &\leq \sqrt{ \epsilon_{h,t}(x,a)} \sum_{y} \sqrt{\wh{P}_{h,t}(y|x,a)} |\val^\dag_{h+1}(y)  - \wh{P}_{h,t}\val^\dag_{h+1}|  + SH \sqrt{ \frac{\epsilon_{h,t}(x,a)}{N_{h,t}(x,a)}}  \bigg( 2 + \sqrt{\frac{H}{SN_{h,t}(x,a)}}\bigg) 
 \\  &=\sqrt{ \epsilon_{h,t}(x,a)} \sum_{y: P(y)>0} \sqrt{\wh{P}_{h,t}(y|x,a)} |\val^\dag_{h+1}(y) - \wh{P}_{h,t}\val^\dag_{h+1}| + 3SH  \sqrt{ \frac{\epsilon_{h,t}(x,a)}{N_{h,t}(x,a)}}  %
 \\ &\leq \sqrt{\epsilon_{h,t}(x,a)} \sqrt{ \Gamma_h(x,a)  \sum_{y: P(y)>0} \wh{P}^+_{h,t}(y|x,a) (\val^\dag_{h+1}(y) - \wh{P}_{h,t}\val^\dag_{h+1})^2} + 3SH  \sqrt{ \frac{\epsilon_{h,t}(x,a)}{N_{h,t}(x,a)}}%
 \\ &\leq \sqrt{\epsilon_{h,t}(x,a)\Gamma \wh{\Var}_{h,t}(\val^\dag_{h+1})} + 3SH \sqrt{\frac{ \epsilon_{h,t}(x,a)}{N_{h,t}(x,a)}}%
\end{align*}
where $\Gamma_h(x,a) $ is the number of next states which can be reached from state $x$ after playing action $a$ in stage $h$ with positive probability, and $\Gamma$ is a uniform upper bound on $\Gamma_h(x,a)$ that holds for all $x,a$, and $ \wh{\Var}_{h,t}$ is the empirical variance using all data from stage $h$ up to episode $t$. %
In order to bound $\CB_{h,t}^\dag(x_{h,t},a_{h,t}) \leq \sum_{t=1}^K \sum_{h=1}^H ( \sqrt{\epsilon_{h,t}(x_{h,t},a_{h,t}) \Gamma \wh{\Var}_{h,t}(\val^\dag_{h+1,t})} +  3SH \sqrt{\frac{ \epsilon_{h,t}(x,a)}{N_{h,t}(x,a)}})$, we use the Cauchy--Schwarz inequality and techniques similar to Lemma 10 in \cite{azar2017minimax} or Lemma 5 in \cite{improved_analysis_UCRL2B} to show that
\begin{align*}
&\sum_{t=1}^K \sum_{h=1}^H  \CB_{h,t}^\dag(x_{h,t},a_{h,t}) 
\\ &\leq C_1  \sqrt{\Gamma L} \sqrt{\sum_{t=1}^K \sum_{h=1}^H  \frac{1}{N_{h,t}(x_{h,t},a_{h,t})} \sum_{t=1}^K \sum_{h=1}^H \wh{\Var}_{h,t}(\val^\dag_{h+1,t})}  + C_4 SH \sqrt{L} \sum_{t=1}^K \sum_{h=1}^H \frac{1}{N_{h,t}(x_{h,t},a_{h,t})}   %
\\ &\leq C_1  \sqrt{\Gamma L} \sqrt{SA \log(T)  \bigg( \sum_{t=1}^K \sum_{h=1}^H \Var_h(V^{\pi_t}_{h+1})  +C_2 H^2 \sqrt{T\log(T)} \bigg) }  + C_4  SH \sqrt{L}  SA \log(T) %
\\ &  \leq C_1  \sqrt{\Gamma L} \sqrt{SA \log(T)  \bigg( HT +  C_3H^2\sqrt{T L} +C_2 H^2 \sqrt{T\log(T)} \bigg) } + C_4 SH \sqrt{L}  SA \log(T)  %
\\ & = \wt O( H\sqrt{\Gamma SAT})
\end{align*}
for some constants $C_1,C_2,C_3,C_4>0$, $L =\log(HS^2AT/\delta)$  and $\Var_h$ the variance under $P_h$, where the penultimate inequality follows from \cite{azar2017minimax} and the last inequality holds for $S^3 A \leq T\Gamma$.
This recovers the regret bounds of \citet{improved_analysis_UCRL2B}.

\subsubsection{Relative entropy} \label{app:kl}
Inspired by the KL-UCRL algorithm of \citet{filippi2010optimism}, we also consider the relative entropy (or 
Kullback--Leibler divergence, KL divergence) between $\wh{P}$ and $\wt{P}$ as a divergence measure. The relative 
entropy between two discrete probability distributions $p$ and $q$ is defined as 
\[
 D(p,q) = \sum_x p(x) \log \frac{p(x)}{q(x)},
\]
provided that $p(x)=0$ holds whenever $q(x)=0$. Being an $f$-divergence, the KL divergence satisfies the conditions 
necessary for our analysis: positive homogeneous and jointly convex in its arguments $(p,q)$. 
However, it is not symmetric in its arguments, which suggests that it can be used for defining confidence sets in two 
different ways, corresponding to the ordering of $P$ and $\hP$. We describe the confidence sets and the resulting exploration 
bonuses below.

\paragraph{Forward KL-Divergence.}
We first consider constraining the divergence $D(P,\hP) = \sum_y P(y) \log \bigg( \frac{P(y)}{\hP(y)} \bigg)$. 
To address the issue that the empirical transition probabilities $\hP(y)$ may be zero for some $y \in \cS$, 
we define the divergence with respect to $\hP^+$ (as defined in equation~\eqref{eq:hpplus}) and use the so-called 
\emph{unnormalized relative entropy} to account for the fact that $\hP^+$ may not be a valid probability distribution. 
Specifically, in what follows, we consider the following divergence measure:
\[ D(P,\hP) =  \sum_y P(y) \log \bigg( \frac{P(y)}{\hP^+(y)} \bigg) + \sum_y (\hP^+(y) - P(y)). \]
The following concentration result will be helpful for the construction of the confidence sets.
\begin{lemma}
With probability greater than $1-\delta$, it holds that for every episode $t$, stage $h$ and state-action pair $(x,a)$,
\[ D(P_{h}(x,a),\hP_{h,t}^+(x,a)) \leq \frac{18S \log(HSAT/\delta)}{N_{h,t}(x,a)} \] %
\end{lemma}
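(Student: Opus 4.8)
The plan is to fix a tuple $(x,a,h)$ and episode $t$, abbreviate $N=N_{h,t}(x,a)$, $N_y=N_{h,t}(x,a,y)$ and $P=P_h(\cdot|x,a)$, prove the bound for this fixed tuple with failure probability $\delta/(HSAT)$, and then union bound over the at most $HSAT$ tuples. Because $N_{h,t}(x,a)$ is data-dependent, I would run the concentration argument conditionally on the visit count, treating the next-state samples drawn at $(x,a)$ in stage $h$ as i.i.d.\ draws from $P$ and taking an additional union bound over the possible count values in $\{1,\dots,K\}$; this is the standard device by which the adaptivity of $N_{h,t}$ and the $\log T$ terms get folded into the single factor $\log(HSAT/\delta)$.

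The first substantive step is a deterministic reduction of the unnormalized relative entropy to a $\chi^2$-type quantity. Applying $\log u\le u-1$ with $u=P(y)/\hP^+(y)$ to each term gives $P(y)\log\frac{P(y)}{\hP^+(y)}\le \frac{P(y)(P(y)-\hP^+(y))}{\hP^+(y)}$, and after adding the normalization terms the cross terms telescope to
\[
 D(P,\hP^+)=\sum_y P(y)\log\frac{P(y)}{\hP^+(y)}+\sum_y\bpa{\hP^+(y)-P(y)}\le \sum_y \frac{\pa{P(y)-\hP^+(y)}^2}{\hP^+(y)}.
\]
What makes this legitimate is precisely the smoothing built into $\hP^+$: since $\hP^+(y)=\max\{1,N_y\}/N\ge 1/N>0$ for every $y$, no denominator vanishes. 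This is exactly why the unnormalized, smoothed divergence is used rather than the plain forward KL $\sum_y P(y)\log(P(y)/\hP(y))$, which is infinite as soon as some state with $P(y)>0$ is never observed. I would then bound each summand using the per-coordinate empirical Bernstein inequality already established in the variance-weighted section, $\bigl|P(y)-\hP^+_{h,t}(y|x,a)\bigr|\le 6\log(HS^2AT/\delta)\sqrt{\hP^+_{h,t}(y|x,a)/N}$, whose square divided by $\hP^+(y)$ cancels the denominator and leaves a per-state contribution of order $\log^2(HS^2AT/\delta)/N$; summing over the at most $S$ states produces a bound of order $S\log^2(HS^2AT/\delta)/N$, already establishing the result up to a single logarithmic factor.

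The main obstacle is sharpening this last sum to the single-logarithm rate $\tfrac{18S\log(HSAT/\delta)}{N}$ in the statement. The difficulty is structural: unlike the reverse entropy $D(\hP\|P)$, which concentrates at the clean $\tfrac{S\log(\cdot)}{N}$ rate by the method of types, the forward direction places the random empirical measure in the denominator of the logarithm, so the divergence is dominated by states that are under-sampled relative to their true probability. To control these I would split the state space by visit count: for states with $N_y\ge 1$ the leading Bernstein variance term already gives an $O(S\log(\cdot)/N)$ contribution, while for states with $N_y=0$ (where $\hP^+(y)=1/N$) I would invoke a missing-mass/survival argument — $\Prob{N_y=0}\le e^{-NP(y)}$ forces every unobserved state to carry mass $P(y)=O(\log(\cdot/\delta)/N)$ with high probability, so $P(y)\log(NP(y))$ is again of order $\log(\cdot/\delta)/N$ per state. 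Getting the tight constant and removing the extra logarithmic factor that the crude $\chi^2$ estimate incurs is the delicate part of the argument.
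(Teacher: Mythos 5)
Your core argument is the paper's own, almost line for line: fix $(x,a,h,t)$, use $\log u \le u-1$ to collapse the unnormalized relative entropy to the $\chi^2$-type quantity $\sum_y \bigl(P(y)-\widehat{P}^+(y)\bigr)^2/\widehat{P}^+(y)$ (the cross terms cancel exactly as you describe), control each coordinate with the empirical Bernstein inequality, use $N\widehat{P}^+(y)\ge 1$ to kill the denominators, and union bound over the $HSAT$ tuples (your extra conditioning on the visit count is the standard fix for the data-dependence of $N$ and is implicit in the paper's use of the same inequality elsewhere). The only bookkeeping difference is that the paper first splits $\bigl(P(y)-\widehat{P}^+(y)\bigr)^2 \le 2\bigl(P(y)-\widehat{P}(y)\bigr)^2 + 2\bigl(\widehat{P}(y)-\widehat{P}^+(y)\bigr)^2$ and applies Bernstein to the first term, which keeps the leading variance contribution at a single logarithm (since $\sum_y \widehat{P}(y)/\widehat{P}^+(y)\le S$) and confines $\log^2$ to the lower-order remainder, whereas your packaged bound $|P(y)-\widehat{P}^+(y)|\le 6\log(\cdot)\sqrt{\widehat{P}^+(y)/N}$ puts the squared logarithm in front of everything. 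Either way one lands at $O\bigl(S\log^2(\cdot)/N\bigr)$.

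The important point concerns your last paragraph: the ``delicate part'' you set aside is not in the paper either. After its split, the paper still faces the term $2\sum_y \bigl(6\log^2(\cdot)/N^2\bigr)/\widehat{P}^+(y) \le 12\,S\log^2(\cdot)/N$, and its final inequality silently absorbs this into $18S\log(HS^2AT/\delta)/N$; that is, the stated single-logarithm rate (and the constant $18$) only follows if one reads $\log^2$ as $O(\log)$. So what the paper actually proves is the $\log^2$ bound you obtained rigorously, and this slippage is harmless downstream: the confidence width is then taken as $\epsilon_{h,t}(x,a) = C S\log(HSAT/\delta)/N_{h,t}(x,a)$ for an unspecified constant $C$, and all regret claims are only up to logarithmic factors (compare the $\chi^2$-UCRL section, where the analogous width is honestly written with $\log^2$). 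You should therefore not invest in the missing-mass/survival argument for unobserved states: it is not needed to reproduce the paper's proof, it would be delicate to push to the stated constant, and your honest $O(S\log^2/N)$ bound already matches what the paper establishes. If anything, you identified a genuine (if minor) gap in the paper rather than in your own argument.
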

\begin{proof}
We consider a fixed $h,t,x,a$, and for ease of notation remove the dependence of $P,\hP$ on $h,t,x,a$. With probability greater than $1-\frac{\delta}{HTSA}$, it follows that
\begin{align*}
 \sum_y P(y) \log \bigg( \frac{P(y)}{\hP^+(y)} \bigg) &+ \sum_y (\hP^+(y) - P(y)) \leq \sum_y P(y) \bigg( \frac{P(y)}{\hP^+(y)}  - 1\bigg) + \sum_y (\hP^+(y) - P(y)) \tag*{(Since $\log(x) \leq x-1$ for $x>0$)}
 \\ & = \sum_y \frac{P^2(y) - P(y)\hP^+(y)}{\hP^+(y)} + \sum_y (\hP^+(y) - P(y))
 \\ & = \sum_y \frac{(P(y) - \hP^+(y))^2}{\hP^+(y)} %
 \\ & \leq 2 \sum_y \frac{(P(y) - \hP(y))^2}{\hP^+(y)}  + 2\sum_y \frac{(\hP(y) - \hP^+(y))^2}{\hP^+(y)} 
 \\ & \leq 2\sum_y \frac{2\hP(y) \log(HS^2AT/\delta)/N +6\log^2(HS^2AT/\delta)/N^2}{\hP^+(y)} + 2 \sum_y \frac{1}{N^2 \hP^+(y)}  \tag*{(By Bernstein's inequality and~\eqref{eq:hpdiff})} %
 \\ & \leq \frac{18S \log(HS^2AT/\delta)}{N} %
\end{align*}
where the last inequality follows since by definition $\hP^+(y)N \geq 1$.
Since this holds for each $h,t,x,a$ with probability greater than $1-\frac{\delta}{HTSA}$, by the union bound, it follows that it holds simultaneously for all $h,t,x,a$ with probability greater than $1-\delta$.
\end{proof}
Given the above result, we define our confidence set as
\[
\cP_{h,t}(\cdot |x,a) = \bigg \{ \wt{P}_h(\cdot|x,a) \in \Delta \bigg| \sum_{x'} \wt{P}_h(x'|x,a) \log\frac{\wt{P}_h(x'|x,a)}{\wh{P}_{h,t}^+(x'|x,a)} \le \epsilon_{h,t}(x,a) \bigg \}
\]
\[
\text{ for } \qquad  \epsilon_{h,t}(x,a) =\frac{C S\log(HSAT/\delta)}{N_{h,t}(x,a)}
\]
for some constant $C>0$. 
Using the notation $\text{KL}(p,q) = \sum_y p(y) \log(p(y)/q(y))$ to denote the normalized KL divergence, the conjugate 
of the above divergence can be written as
\begin{align*}
D_*(z|\epsilon,\wh{P}^+) 
& = \max_{\wt{P} \in \Delta} \ev{\iprod{z}{\wt{P} - \wh{P}^+}\middle| D(\wt{P},\wh{P}^+) \le \epsilon} 
 \\&= \min_{\lambda\ge 0} \max_{\tP\in \Delta} \ev{\iprod{z}{\wt{P} - \wh{P}^+} - \lambda\pa{D(\wt{P},\wh{P}^+) - \epsilon}} %
 \\ & =  \min_{\lambda\ge 0} \max_{\tP\in \Delta}  \ev{\iprod{z}{\wt{P} - \wh{P}^+} - 
\lambda\pa{\text{KL}(\wt{P},\wh{P}^+) 
 + \iprod{\bm{1}}{\hP^+ - \wt{P}} - \epsilon}} %
 \\ & =  \min_{\lambda\ge 0} \max_{\tP\in \Delta}  \ev{\iprod{z}{\wt{P} - \wh{P}^+} - 
\lambda\pa{\text{KL}(\wt{P},\wh{P}^+)
 - \epsilon'}} %
\\ 
&= \min_{\lambda\ge 0}\bigg\{ \lambda \log \sum_{x'} \wh{P}^+(x') e^{z(x)/\lambda} - \sum_{x'} \wh{P}^+(x') z(x') + 
\lambda \epsilon'\bigg\} %
\end{align*}
where we defined $\epsilon' = \epsilon + 1 - \siprod{\bm{1}}{\hP^+}$ and used the well-known Donsker--Varadhan 
variatonal formula (see, e.g., \citep[Corollary~4.15]{BLM13}) in the last line. Thus, the exploration bonus can be 
efficiently calculated by a line-search procedure to find the $\lambda$ minimizing the expression above.

A more tractable bound on the exploration bonus can be provided by noting that, for a vector $z$ with $\infnorm{z}\le 
H$, we have
\begin{align*}
D_*(z|\epsilon,\wh{P}^+) 
&= \min_{\lambda\ge 0}  \bigg \{ \lambda \log \sum_{y} \wh{P}^+(y) e^{z(y)/\lambda} - \sum_{y} \wh{P}^+(y) z(y) + \lambda 
\epsilon' \bigg\} 
\\ &= \min_{\lambda \geq 0} \bigg\{ \lambda \log \sum_{y}\wh{P}^+(y) e^{(z(y) - \siprod{\wh{P}^+}{z})/\lambda}+ \lambda 
\epsilon' \bigg\}
\\ & \le \min_{\lambda \in [0,H]} \bigg\{ \lambda \log \sum_{y}\wh{P}^+(y) e^{(z(y) - \siprod{\wh{P}^+}{z})/\lambda}+ 
\lambda \epsilon' \bigg\}
\\
&\le \min_{\lambda \in [0,H]} \bigg \{ \frac{1}{\lambda} \sum_{y} \wh{P}^+(y) \pa{z(y) - \siprod{\wh{P}^+}{z}}^2 + \lambda 
\epsilon' \bigg \}
 \\ &\le 2\sqrt{\epsilon' \sum_{y} \wh{P}^+(y) (z(y) - \siprod{\wh{P}^+}{z})^2} %
 = 2\sqrt{\epsilon' \wh{\Var}^+(z)} %
\end{align*}
where we used the inequality $\lambda \log \E^+[e^{X/\lambda}] \le \E^+[X] + \frac{1}{\lambda} \E^+[X^2]$ for $\E^+[X] = 
\sum_x \hP^+(x)x$  that holds 
as long as $|X|\le \lambda$ holds almost surely, and the result in Equation~\eqref{eq:hpdiff} several times. We also 
use the notation $\wh \Var^+(z)$ to denote the variance of $z$ under $\wh{P}^+$.
Thus, defining 
\[
\epsilon'_{h,t}(x,a) =  \epsilon_{h,t}(x,a) + \sum_y \wh{P}^+(y|x,a) - 1 \leq \epsilon_{h,t}(x,a) + \frac{S - 
\Gamma}{N_{t,h}(x,a)} = \wt O \bigg(\frac{S}{N_{h,t}(x,a)} \bigg),
\]
the exploration bonus can be bounded as $\CB_{h,t}(x,a) \leq 2\sqrt{\epsilon'_{h,t}(x,a) 
\wh{\Var}^+_{h,t}(\val^+_{h+1,t})} $, %
and using an identical argument yields the same bound for $\CB^-_{h,t}(x,a)$.

By \eqref{eq:hpdiff}, $\wh{ \Var}^+(z) \leq  2 \wh{\Var}(z) + \frac{2HS}{N} + \frac{3H^2S^2}{N^2}$
and so the exploration bonus can be bounded in the same way as in the case of variance-weighted $\ell_\infty$ constraints, plus some lower order terms that scale with $1/N$. %
The sum of these lower order terms can be straightforwardly bounded by a simple adaptation of the calculations in Equation~\eqref{eq:pigeon}. %
Overall, the sum of the confidence bounds can be bounded as
\[ \sum_{t=1}^K \sum_{h=1}^H (\CB_{h,t}(x_{h,t},a_{h,t}) + \CB_{h,t}^-(x_{h,t},a_{h,t})) \leq C_1 HS \sqrt{AT}  + C_2 H^2S^2 A\log T \]
for some $C_1,C_2 = O(\log(HSAT/\delta))$.
Hence the regret can be bounded by $\wt{O}(HS\sqrt{AT})$.

\paragraph{Reverse KL-Divergence.}
We now consider defining confidence sets in terms of the second argument of the KL divergence, corresponding the the 
original KL-UCRL algorithm proposed by \citet{filippi2010optimism,talebi2018variance}. Specifically, define,
\begin{align*}
\cP_{h,t}(\cdot |x,a) &= \bigg \{ \wt{P}_h(\cdot|x,a) \in \Delta \bigg| \sum_{x'} \wh{P}_h(x'|x,a) \log\frac{\wh{P}_h(x'|x,a)}{\wt{P}_{h,t}(x'|x,a)} \le \epsilon_{h,t}(x,a) \bigg \}
\\ \text{ for } \qquad  \epsilon_{h,t}(x,a) &=\frac{CS\log(HSAT/\delta)}{N_{h,t}(x,a)}.
\end{align*}
for some constant $C>0$. 
As shown by \citet{filippi2010optimism}, for an appropriate choice of $C$, this confidence set is guaranteed to capture the true transition function in all episodes with probability greater than $1-\delta$.

The conjugate of this distance for a fixed $x,a$ can be bounded as
\begin{align*}
D_*(z|\epsilon,\wh{P}) 
& = \max_{\wt{P} \in \Delta} \ev{\iprod{z}{\wt{P} - \wh{P}}\middle| D(\wt{P},\wh{P}) \le \epsilon} 
\\ & = \min_{\lambda \geq 0} \max_{\wt{P} \in \Delta}  \bigg\{ \iprod{z}{\wt{P} - \wh{P}} - \lambda(D(\wt{P},\wh{P}) - \epsilon) \bigg\}
\\ & \leq \min_{\lambda \geq 0} \max_{\wt{P} \in \Delta} \bigg\{ \iprod{z}{\wt{P} - \wh{P}} - \lambda(1/2 \|\wt{P} - \wh{P}\|_1^2- \epsilon) \bigg\} \tag*{(By Pinsker's inequality)}
\\ & \leq sp(z)\sqrt{2\epsilon }
\end{align*}
where the last inequality follows by an argument similar to the results for the total variation distance in 
Section~\ref{sec:l1} using the fact that the dual of the $\ell_1$ norm is the $\ell_\infty$ norm. 

Similarly, it can be shown that $D_*(-z|\epsilon,\wh{P})  \leq  sp(z) \sqrt{2\epsilon}$. Therefore, we define the confidence bounds,
\[ \CB_{h,t}^{\dag}(x,a) = sp(\val_{h+1,t}^\dag) \sqrt{2\epsilon_{h,t}(x,a)}. \]
By Theorem~\ref{thm:regub}, we know the regret can be bounded in terms of the sum of these confidence bounds. Consequently, using equation~\ref{eq:pigeon}, we see that,
\[ \sum_{t=1}^K \sum_{h=1}^H \CB_{h,t}^{\dag}(x_{h,t},a_{h,t}) \leq  HS\sqrt{2HAT} \log(HSAT/\delta). \]
Hence the regret can be bounded by $\wt O(S\sqrt{H^3AT})$.
This matches the regret bound in \citet{filippi2010optimism}. %
Using an alternative analysis essentially corresponding to a tighter bound on the conjugate distance, 
\citet{talebi2018variance} were able to prove a regret bound of $\wt{O}(\sqrt{S \sum_{h,x,a} \Var_{h-1}(V^*_h(x,a))T})$ 
for KL-UCRL where $ \Var_{h-1}(V^*_h(x,a))$ is the variance of $V^*_{h}$ after playing action $a$ from state $s$ in 
stage $h-1$. 
We conjecture that it is possible to obtain a regret bound of $\wt{O}(H\sqrt{\Gamma SAT})$ by combining the techniques of \citet{talebi2018variance} and \citet{azar2017minimax}.

\subsubsection{$\chi^2$-divergence}
We can also use the Pearson $\chi^2$-divergence to define the primal confidence sets in~\eqref{eq:primalMDPopt}.  Specifically, we consider the distance
\[D(P, \hP^+) = \sum_y \frac{(P(y) - \hP^+(y))^2}{\hP^+(y)}, \]
for $\hP^+$ defined as in equation~\eqref{eq:hpplus}
and note that similar results hold for the distance $D(P,\hP) = \sum_{y} \frac{ P^2(y)-\hP^2(y)}{\hP(y)}$.
We will use $\hP^+$ as the reference model for the primal confidence sets.
Using the empirical Bernstein inequality \cite{maurer2009empirical}, we see that with probability greater than $1-\delta$, for all episodes $t$, $a\in \cA, x \in \cS, h \in [H]$,
\begin{align*}
&D(P_h(\cdot|x,a), \wh{P}_{h,t}^+(\cdot|x,a)) = \sum_{y} \frac{(P_h(y|x,a) - \wh{P}_{h,t}^+(y|x,a))^2}{\wh{P}^+_{h,t}(y|x,a)} 
\\ & \qquad \qquad \leq  2\sum_{y} \frac{(P_h(y|x,a) - \wh{P}_{h,t}(y|x,a))^2}{\wh{P}^+_{h,t}(y|x,a)}  + 2 \sum_{y} \frac{(\hP_h(y|x,a) - \wh{P}^+_{h,t}(y|x,a))^2}{\wh{P}^+_{h,t}(y|x,a)} 
\\ &\qquad\qquad \leq \sum_{y} \bigg(\frac{2 \wh{P}_{h,t}(y|x,a)(1-\wh{P}_{h,t}(y|x,a))\log(HS^2AT/\delta)}{N_{h,t}(x,a) \wh{P}^+_{h,t}(y|x,a)}  
    + \frac{49 \log^2(HS^2AT/\delta)}{9N_{h,t}^2(x,a) \wh{P}_{h,t}^+(y|x,a)} \bigg) + \frac{2S}{N_{h,t}(x,a)}
\\ &\qquad\qquad \leq \sum_{y} \bigg(\frac{2 \wh{P}^+_{h,t}(y|x,a)\log(HS^2AT/\delta)}{N_{h,t}(x,a) \wh{P}^+_{h,t}(y|x,a)} 
    + \frac{49 \log^2(HS^2AT/\delta)}{9N_{h,t}(x,a)} \bigg)  + \frac{2S}{N_{h,t}(x,a)}
\\ &\qquad\qquad  \leq \frac{11 S \log^2(HS^2AT/\delta)}{N_{h,t}(x,a)} 
\end{align*}
where the second to last inequality follows since %
$N_{h,t}(x,a) \wh{P}^+_{h,t}(y|x,a) = \max\{1,N_{h,t}(x,a,y) \}\geq1$, and $\hP_{h,t}^+(y|x,a) \geq \hP_{h,t}(y|x,a)$.
We can then define the confidence sets as
\[ \cP_{h,t}(\cdot|x,a) = \bigg\{ \wt{P}_h \in \Delta \bigg| D(\wt{P}_h(\cdot|x,a), \wh{P}_{h,t}^+(\cdot|x,a)) \leq \epsilon_{h,t}(x,a)  \bigg \} \, \text{ for } \, \epsilon_{h,t}(x,a) =  \frac{11 S \log^2(HS^2AT/\delta)}{N_{h,t}(x,a)}. \] 
Furthermore, the conjugate $D_*(\val|\epsilon,\wh{P}^+)$ can be written as follows:
\begin{align*}
D_*(\val|\epsilon,\wh{P}^+) &= \max_{P \in \Delta} \bigg\{ \biprod{P-\wh{P}^+}{\val} : D(P,\wh{P}^+) \leq \epsilon \bigg\} 
\\ &= \min_{\lambda \in \bR} \max_{P\geq 0} \bigg\{ \biprod{P-\wh{P}^+}{\val-\lambda{\bf 1}} - \lambda (\sum_y \hP^+(y) - 1) : \bigg\|\frac{P-\hP^+}{\sqrt{\hP^+}}\bigg\|_2^2 \leq \epsilon \bigg\}
\\ & =\min_{\lambda \in \bR} \max_{P} \bigg \{ \biprod{P-\wh{P}^+}{\val-\lambda{\bf 1}} - \lambda \sum_y (\hP^+(y) - \hP(y))  : \bigg \|\frac{P-\hP^+}{\sqrt{\hP^+}} \bigg\|_2 \leq \sqrt{\epsilon} \bigg\}
\\ &=  \min_{\lambda \leq H  +\frac{SH}{N}}  \sqrt{\epsilon  \sum_{y} \wh{P}^+(y)(\val(y) -\lambda)^2 }  + \bigg(H+\frac{SH}{N}\bigg) \frac{1}{N}
\\ & \leq \sqrt{\epsilon \wh{\Var}^+(\val)} + \frac{2SH}{N}
\end{align*}
where we have used properties of the dual of the weighted $\ell_2$ norm. %
Therefore, both $\CB_{h,t}(x,a)$ and $\CB^-_{h,t}(x,a)$ can be upper-bounded by 
for $\CB_{h,t}^\dag(x,a)= \sqrt{\epsilon_{h,t}(x,a) \wh{\Var}^+_{h,t}(\val^+_{h+1,t})} $ %
and we can apply Theorem~\ref{thm:regub} to show that the regret is bounded by the sum of these exploration bonuses. Following the same steps as in Section~\ref{sec:varl1} and using the bound on the variance under $\hP^+$ in \eqref{eq:varbound}, this eventually leads to a regret bound of $\wt{O}(HS\sqrt{ AT})$. %

It is interesting to note that \citet{maillard2014hard} considered similar confidence sets using a \emph{reverse} $\chi^2$-divergence defined 
as $D(p,q) = \sum_{y} \frac{ q^2(y)-p^2(y)}{p(y)}$. Using this distance with a feasible confidence set would fit into 
our framework. However, for their regret analysis, \citet{maillard2014hard} impose the additional constraint that for 
all $x'$ such that $\wt{P}_{h,t}(x'|x,a)>0$, it must also hold that $\wt{P}_{h,t}(x'|x,a)>p_0$ for some positive $p_0$. 
Unfortunately, this constraint makes the set $\cP$ non-convex\footnote{To see this, consider $\tilde p$ and $\tilde p'$ 
satisfying the constraints, which differ only in $x$ where $\tilde p(x)=p_0$ and $\tilde p'(x) =0$. Then, nontrivial 
convex combinations of $\tilde p, \tilde p'$ no longer satisfy the constraints.} and thus their eventual approach does 
not entirely fit into our framework. Finally, we note that the bounds of \citet{maillard2014hard} replace a factor of 
$S$ appearing in our bounds by $1/p_0$, which may in an inferior bound when $p_0$ is small. Overall, we believe that 
the Pearson $\chi^2$-divergence we propose in this section can remove this limitation of the analysis of
\citet{maillard2014hard} while also retaining the strong problem-dependent character of their bounds.

\if0
 \redd{
 \subsubsection{Relative entropy}
 We can also consider the relative entropy (or Kullback--Leibler divergence, KL divergence) between $\wh{P}$ and $\wt{P}$ as distance. In this case, our confidence set becomes 
 \[
D(\wt{P}(\cdot|x,a),\wh{P}(\cdot|x,a)) = \sum_{x'} \wt{P}(\cdot|x,a) \log\frac{\wt{P}(\cdot|x,a)}{\wh{P}(\cdot|x,a)} \le \epsilon(x,a) \approx\frac{|\Sw|}{N(x,a)}.
 \]
 For a fixed $x,a$, the conjugate of this distance is given as
 \begin{align*}
 \max_{\wt{P}} \ev{\iprod{z}{\wt{P} - \wh{P}}\middle| D(\wt{P},\wh{P}) \le \epsilon} 
 &= \min_{\lambda\ge 0} \max_{\tP} \iprod{z}{\wt{P} - \wh{P}} - \lambda\pa{D(\wt{P},\wh{P}) - \epsilon}
 \\
 &= \min_{\lambda\ge 0} \lambda \log \sum_{x'} \wh{P}(x') e^{z(x)/\lambda} - \sum_{x'} \wh{P}(x') z(x') + \lambda \epsilon
 \\
&\lesssim \min_{\lambda\ge 0} \frac{1}{\lambda} \sum_{x'} \wh{P}(x') z^2(x') + \lambda \epsilon \le \sqrt{\epsilon \sum_{x'} \wh{P}(x') z^2(x')},
\end{align*}
 where we used the inequality $\lambda \log \EE{e^{X/\lambda}} \le \EE{X} + \frac{1}{\lambda} \EE{X^2}$ that holds 
as long as $X\le \lambda$ holds almost surely. Taking this into account when optimizing over $\lambda$ above gives
 \begin{align*}
 \max_{\wt{P}} \ev{\iprod{z}{\wt{P} - \wh{P}}\middle| D(\wt{P},\wh{P}) \le \epsilon} 
 &\le \min_{\lambda\in[0,H]} \frac{1}{\lambda} \sum_{x'} \wh{P}(x') z^2(x') + \lambda \epsilon 
  \le \sqrt{\epsilon \sum_{x'} \wh{P}(x') z^2(x')} + H\epsilon.
 \end{align*}
This matches the exploration bonus we're getting from empirical Bernstein confidence bounds, and also seems to match the best results for KL-UCRL by Talebi and Maillard.
}
\fi

\if0
\subsubsection{Distribution norm}
In \cite{maillard2014hard}, they define confidence sets for $P(\cdot|x,a)$ using what they call the `distribution norm'. In particular, they define the set,
\begin{align*}
\cP_t(\cdot|x,a) &= \bigg \{ \wt{P}(\cdot|x,a) \in \Delta: \sqrt{\sum_y \bigg( \frac{\wh{P}(y|x,a)^2}{\wt{P}(y|x,a)} - \wt{P}(y|x,a) \bigg)} \leq \epsilon_t(x,a) \bigg \} 
\\ \text{ for } \quad \epsilon_t(x,a) &= \min \bigg\{ \sqrt{\frac{1}{\hp_1}-1}, \sqrt{\frac{\Gamma-1}{N_t(x,a)} }+ 2 \sqrt{\frac{(2N_t(x,a) -1)\log(1/\delta)}{N_t(x,a)}\bigg( \frac{1}{\hp_1} - \frac{1}{\hp_s} \bigg)} \bigg \}. 
\end{align*}
where $\hp_1= \min_{y: \wh{P}(y|x,a)>0}  \wh{P}(y|x,a), \hp_S = \max_y \wh{P}(y|x,a)$.
Note that the distribution norm used here, $D(p,q) = \sqrt{\sum_y \bigg( \frac{q(y)^2}{p(y)} - p(y) \bigg)}$ is an instance of an $f$-divergence with $f(v) = \frac{1}{v} - v$ often known as the $\chi^2$-divergence. \todoc{check}
In \cite{maillard2014hard}, they show that the dual of this norm is the standard deviation under $\wh{P}(\cdot|x,a)$. Therefore we can use this to define $D_*(\val^+_{h+1,t}|\wh{P}_t(\cdot|x,a), \epsilon_{h,t}(x,a)) = \sqrt{\wh{\Var}_t(\val_{h+1,t}^+)}$ \todoc{check if this is equality or an upper bound on $D_*$} in our exploration bonus. 
\todoc{I imagine we can then just use their analysis but it is difficult to explicity tell what the sum of the CB is in their paper}
\fi

\if0
\subsubsection{$\val$-weighted $L2$ Primal Constraints}
\paragraph{Value-weighted $\ell_2$ norm.}
The primal optimization procedure can also be solved via backwards iteration, using the optimal values of $\val_{h+1}^+$ to constrain 
the transition function when calculating $\val_h^+$. In particular, we define the distance function $D_h(p,p') = \pa{\iprod{p-p'}{\val_{h+1}^+}}^2$ to be used in stage $h$, which leads to the exploration bonus
\[
\CB_{h,t}(x,a) = \epsilon(x,a) \max_{p\in\Delta_{\Sw}} \{ \iprod{p - \wh{P}_t(\cdot|x,a)}{\val_{h+1}} |  \iprod{p - \wh{P}_t(\cdot|x,a)}{\val_{h+1}}^2 \leq 1\} \le \epsilon(x,a).
\]
Imposing the bound $D_h\pa{\wh{P}(\cdot|x,a) - \wt{P}(\cdot|x,a)}\le sp(\val_{h+1}) \sqrt{\frac{\log(1/\delta)}{N(x,a)}}$ 
leads to a regret bound of order $H\sqrt{SAT}$ \redd{(???)}. The resulting algorithm is essentially identical to the UCB-VI and UBEV algorithms of \citet{azar2017minimax} and \citet{DLB17}.
We note that, due to the dependence between $\wh{P}$ and $\val^+$, showing that the confidence sets contain $P$ with high probability 
is rather complex, and requires recursive arguments similar to those used by \citet{azar2017minimax,DLB17}---see Lemma~\ref{lem:} in Appendix~\ref{app:}.
\fi

\section{Results for Linear Function Approximation}\label{app:linproofs}
In this section, we provide proofs of the results in the linear function approximation setting.  
Throughout the analysis, we will use the notation 
\[
C_t(\delta) = 2 H\sqrt{d \log\pa{1 + t R^2/\lambda} + \log(1/\delta)} + C_P H \sqrt{\lambda d} 
\]
where $C_P$ is such that $\|m_{h,a}(x)\|_1 \leq C_P$ for every row $m_{h,a}(x)$ of $M_{h,a}$ and $R$ is such that $\|\varphi(x)\|_2\leq R$ for all $x \in \cS$.
We also define the event 
\[
 \mathcal{E}_{h,a,t}(g,\delta) = \ev{\norm{\pa{M_{h,a} - \hM_{h,a,t}}g}_{\Sigma_{h,a,t-1}} \le C_t(\delta)}.
\]

We start by proving our key concentration result that will be used for deriving our confidence sets.
\begin{restatable}{proposition}{conc_lin}\label{lem:conc_lin}
Consider the reference model $\wh{P}_{h,a,t} = \Phi \hM_{h,a,t}$ with $\hM_{h,a,t}$ defined in Equation~\eqref{eq:Mhat}.
Then, for any $a \in \cA, h \in [H]$, episode $t$ and any fixed function $g:\Sw\ra [-H,H]$, the following holds with probability at least $1-\delta$:
\[
\norm{\pa{M_{h,a} - \hM_{h,a,t}}g}_{\Sigma_{h,a,t-1}} \le 2 H\sqrt{d \log\pa{1 + t R^2/\lambda} + \log(1/\delta)} + C_P H \sqrt{\lambda d}.
\] 
\end{restatable}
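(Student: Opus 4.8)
The plan is to control the error via an explicit bias--variance decomposition and then invoke the self-normalized tail bound for vector-valued martingales of \citet{abbasi2011improved}. Write $\theta_g := M_{h,a}g \in \real^d$. The starting observation is that Assumption~\ref{assn:real} makes the conditional mean of the next-state value linear in the features: since $P_{h,a}=\Phi M_{h,a}$, we have $\sum_{x'} P_h(x'|x,a)g(x') = \siprod{\varphi(x)}{M_{h,a}g}$, so that $\EEcc{g(x_{h+1,k})}{\cF_{k-1}} = \siprod{\varphi(x_{h,k})}{\theta_g}$ whenever $a_{h,k}=a$. I would therefore set $\eta_k := g(x_{h+1,k}) - \siprod{\varphi(x_{h,k})}{\theta_g}$, where $\cF_{k-1}$ is the history up to (but not including) the transition at step $(h,k)$. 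These $\eta_k$ form a martingale-difference sequence, and since $g$ takes values in $[-H,H]$ they are bounded by $2H$, hence conditionally sub-Gaussian with parameter of order $H$.

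First I would substitute $g(x_{h+1,k}) = \siprod{\varphi(x_{h,k})}{\theta_g} + \eta_k$ into the definition~\eqref{eq:Mhat} of $\hM_{h,a,t}g$. Using $\sum_{k=1}^{t-1}\II{a_{h,k}=a}\varphi(x_{h,k})\varphi\transpose(x_{h,k}) = \Sigma_{h,a,t-1}-\lambda I$, the deterministic part telescopes to $\Sigma_{h,a,t-1}^{-1}(\Sigma_{h,a,t-1}-\lambda I)\theta_g = \theta_g - \lambda\Sigma_{h,a,t-1}^{-1}\theta_g$, giving the clean identity
\[
\bpa{M_{h,a} - \hM_{h,a,t}}g = \lambda\,\Sigma_{h,a,t-1}^{-1}\theta_g - \Sigma_{h,a,t-1}^{-1}S_t, \qquad S_t := \sum_{k=1}^{t-1}\II{a_{h,k}=a}\,\varphi(x_{h,k})\,\eta_k .
\]
Measuring in the $\Sigma_{h,a,t-1}$-norm, the triangle inequality together with the identity $\norm{\Sigma_{h,a,t-1}^{-1}v}_{\Sigma_{h,a,t-1}} = \norm{v}_{\Sigma_{h,a,t-1}^{-1}}$ reduces the claim to bounding a bias term $\lambda\norm{\theta_g}_{\Sigma_{h,a,t-1}^{-1}}$ and a variance term $\norm{S_t}_{\Sigma_{h,a,t-1}^{-1}}$ separately.

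The bias term is routine: $\Sigma_{h,a,t-1}\succeq\lambda I$ gives $\norm{\theta_g}_{\Sigma_{h,a,t-1}^{-1}} \le \twonorm{\theta_g}/\sqrt{\lambda}$, while the row-wise bound $\onenorm{m_{h,a}(x)}\le C_P$ from Assumption~\ref{assn:real} together with $\infnorm{g}\le H$ yields $\twonorm{M_{h,a}g}\le C_P H\sqrt{d}$ (each of the $d$ coordinates is at most $C_P H$); multiplying by $\lambda$ produces exactly the $C_P H\sqrt{\lambda d}$ summand. The substance of the proof is the variance term: $S_t$ is precisely the vector-valued martingale to which the self-normalized bound applies, so with probability at least $1-\delta$, uniformly in $t$,
\[
\norm{S_t}_{\Sigma_{h,a,t-1}^{-1}}^2 \le 2\sigma^2\log\!\Bpa{\frac{\det(\Sigma_{h,a,t-1})^{1/2}\,\lambda^{-d/2}}{\delta}},
\]
with $\sigma = O(H)$. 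Bounding the log-determinant by the trace/AM--GM estimate $\det(\Sigma_{h,a,t-1})\le(\lambda+tR^2)^d$ (using $\twonorm{\varphi(x)}\le R$) converts the right-hand side into $d\log(1+tR^2/\lambda)+\log(1/\delta)$ inside the square root, reproducing the first summand up to the stated constants. The main obstacle is this concentration step: one must fix the filtration so that $\eta_k$ is genuinely a martingale difference and verify the sub-Gaussian hypothesis, after which \citet{abbasi2011improved} applies off the shelf; the telescoping and the bias estimate are elementary. It is worth flagging that the whole argument hinges on $g$ being \emph{fixed} (data-independent), which is exactly why the later constructions of confidence sets must replace this bound by a union bound over the data-dependent value-function class $\V_{h+1}$.
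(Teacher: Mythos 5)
Your proposal is correct and takes essentially the same route as the paper's proof: substituting $g(x_{h+1,k}) = \siprod{\varphi(x_{h,k})}{M_{h,a}g} + \eta_k$ into the definition of $\hM_{h,a,t}g$ produces exactly the paper's decomposition of $\Sigma_{h,a,t-1}\pa{M_{h,a}-\hM_{h,a,t}}g$ into the vector-valued martingale $\sum_{k=1}^{t-1}\II{a_{h,k}=a}\varphi(x_{h,k})\eta_k$ plus the regularization bias $\lambda M_{h,a}g$. Both arguments then conclude identically, bounding the martingale term via Theorem~1 of \citet{abbasi2011improved} (with sub-Gaussian parameter of order $H$ and the determinant--trace bound) and the bias term via $\onenorm{m_{h,a}(x)}\le C_P$ and $\infnorm{g}\le H$, giving $C_P H\sqrt{\lambda d}$.
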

\begin{proof}
We start by rewriting 
\[
 \norm{\pa{M_{h,a} - \hM_{h,a,t}}g}_{\Sigma_{h,a,t-1}} = \norm{\Sigma_{h,a,t-1} \pa{M_{h,a} - \hM_{h,a,t-1}}g}_{\Sigma_{h,a,t-1}^{-1}},
\]
and proceed by using the definitions of $\hM_{h,a,t}$, $\Sigma_{h,a,t-1}$ and $W_{h,a,t-1}$ to see that
\begin{align*}
 \Sigma_{h,a,t-1} \bpa{M_{h,a} - \hM_{h,a,t}}g  
 &= \Phi\transpose W_{h,a,t-1} \Phi M_{h,a} g + \lambda M_{h,a} g 
 	 \\ & \hspace{50pt} -  \Sigma_{h,a,t-1} \Sigma_{h,a,t-1}^{-1} \sum_{k=1}^{t-1} \II{a_{h,k} = a} \varphi(x_{h,k}) g\pa{x_{h+1,k}}  
\\
&= \Phi\transpose W_{h,a,t-1} P_{h,a} g - \sum_{k=1}^{t-1} \II{a_{h,k} = a} \varphi(x_{h,k}) g\pa{x_{h+1,k}}  + \lambda M_{h,a} g 
\\
&=\sum_{k=1}^{t-1} \II{a_{h,k} = a} \pa{\biprod{P_{h}(\cdot|x_{h,k},a_{h,k})}{g} - g(x_{h+1,k})}\varphi (x_{h,k}) + \lambda M_{h,a} g. 
\end{align*}
The first term on the right-hand side is a vector-valued martingale for an appropriately chosen filtration, since 
\[
\EEcc{\biprod{P_{h}(\cdot|x_{h,k},a_{h,k})}{g} - g(x_{h+1,k})}{x_{h,k},a_{h,k}} = 0,
\]
so the sum of these terms can be bounded by appealing to 
Theorem~1 of \citet{abbasi2011improved} as
\begin{align*}
 &\norm{\sum_{k=1}^{t-1} \II{a_{h,k} = a} \pa{\biprod{P_{h}(\cdot|x_{h,k},a_{h,k})}{g} - g(x_{h+1,k})}\varphi\transpose (x_{h,k})}_{\Sigma_{h,a,t-1}^{-1}} 
 \\
 &\qquad \qquad \qquad \qquad \qquad \qquad \qquad \qquad \qquad \qquad \le 2 H\sqrt{d \log\pa{1 + t R^2/\lambda} + \log(1/\delta)}.
\end{align*}
The proof is concluded by applying the bound
\[
 \norm{\lambda M_{h,a} g}_{\Sigma_{h,a,t-1}^{-1}} \le \sqrt{\lambda} \norm{M_{h,a} g} \le C_P H \sqrt{\lambda d},
\]
where in the last step we used the assumption that $\onenorm{m_{h,a}(x)}\le C_P$ and $\infnorm{g}\le H$.
\end{proof}
The following simple result will also be useful in bounding the sum of exploration bonuses and thus the regret of the two algorithms:
\begin{lemma}\label{lem:sumbound}
For any $h \in [H]$,
 \[
  \sum_{a\in \cA} \sum_{t=1}^K \norm{\II{a_{h,t} = a}\varphi(x_{h,t})}_{\Sigma_{h,a,t-1}^{-1}} \le 2 \sqrt{d A K\log\pa{1 + K R^2/\lambda}}.
 \]
\end{lemma}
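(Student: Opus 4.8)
The plan is to reduce the double sum to a per-action application of the elliptical potential (determinant-telescoping) lemma, with $h$ fixed throughout. First I would pull the indicator out of the norm, since $\norm{\II{a_{h,t}=a}\varphi(x_{h,t})}_{\Sigma_{h,a,t-1}^{-1}} = \II{a_{h,t}=a}\norm{\varphi(x_{h,t})}_{\Sigma_{h,a,t-1}^{-1}}$, and group the sum by action: writing $\mathcal{K}_a = \{t\in[K]: a_{h,t}=a\}$ and $K_a = |\mathcal{K}_a|$ (so $\sum_a K_a = K$), the quantity to bound becomes $\sum_a \sum_{t\in\mathcal{K}_a}\norm{\varphi(x_{h,t})}_{\Sigma_{h,a,t-1}^{-1}}$. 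The key structural observation is that $\Sigma_{h,a,t-1} = \lambda I + \sum_{k<t}\II{a_{h,k}=a}\varphi(x_{h,k})\varphi\transpose(x_{h,k})$ only accumulates outer products on rounds in $\mathcal{K}_a$, so restricted to $t\in\mathcal{K}_a$ the matrices $\{\Sigma_{h,a,t-1}\}$ form exactly the running regularized Gram matrix of the subsequence $\{\varphi(x_{h,t})\}_{t\in\mathcal{K}_a}$; this is precisely the setting required for the elliptical potential lemma.

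Next, for each fixed action $a$ I would apply the Cauchy--Schwarz inequality in time to pass from a sum of norms to a sum of squared norms,
\[
\sum_{t\in\mathcal{K}_a}\norm{\varphi(x_{h,t})}_{\Sigma_{h,a,t-1}^{-1}} \le \sqrt{K_a}\,\Bpa{\sum_{t\in\mathcal{K}_a}\norm{\varphi(x_{h,t})}^2_{\Sigma_{h,a,t-1}^{-1}}}^{1/2},
\]
and then bound the inner sum by the standard elliptical potential lemma (e.g.\ Lemma~11 of \citet{abbasi2011improved}). Concretely, the determinant identity $\det\Sigma_{h,a,t} = \det\Sigma_{h,a,t-1}\bpa{1+\norm{\varphi(x_{h,t})}^2_{\Sigma_{h,a,t-1}^{-1}}}$ together with $x\le 2\log(1+x)$ telescopes to $\sum_{t\in\mathcal{K}_a}\norm{\varphi(x_{h,t})}^2_{\Sigma_{h,a,t-1}^{-1}}\le 2\log\frac{\det\Sigma_{h,a,K}}{\det(\lambda I)}$, and bounding the log-determinant through $\det\Sigma_{h,a,K}\le(\lambda + K R^2/d)^d$ (from $\twonorm{\varphi(x)}\le R$ and the AM--GM inequality applied to the eigenvalues) gives $2\log\frac{\det\Sigma_{h,a,K}}{\det(\lambda I)}\le 2d\log\pa{1+KR^2/\lambda}$.

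Finally I would sum over actions and apply Cauchy--Schwarz once more, using $\sum_a\sqrt{K_a}\le\sqrt{A\sum_a K_a}=\sqrt{AK}$, to obtain
\[
\sum_a\sum_{t\in\mathcal{K}_a}\norm{\varphi(x_{h,t})}_{\Sigma_{h,a,t-1}^{-1}} \le \sqrt{2d\log\pa{1+KR^2/\lambda}}\sum_a\sqrt{K_a} \le \sqrt{2dAK\log\pa{1+KR^2/\lambda}},
\]
which lies within the claimed bound $2\sqrt{dAK\log(1+KR^2/\lambda)}$ since $\sqrt2\le 2$. The only genuinely technical ingredient is the elliptical potential lemma; the mild point to be careful about is that the step $x\le 2\log(1+x)$ requires $\norm{\varphi(x_{h,t})}^2_{\Sigma_{h,a,t-1}^{-1}}\le 1$, which follows from $\Sigma_{h,a,t-1}\succeq\lambda I$ and $\twonorm{\varphi}\le R$ whenever $\lambda\ge R^2$ (otherwise one works with $\min\{1,\norm{\cdot}^2\}$ and absorbs the resulting constant, which the slack between $\sqrt2$ and $2$ in the target comfortably permits). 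Everything else is routine bookkeeping with Cauchy--Schwarz.
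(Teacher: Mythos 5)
Your proposal is correct and takes essentially the same route as the paper's proof: the paper also combines the Cauchy--Schwarz inequality with the determinant-telescoping elliptical potential bound (Lemma~11 of \citet{abbasi2011improved}), the only cosmetic difference being that the paper applies Cauchy--Schwarz once globally over all pairs $(a,t)$ (using $\sum_a\sum_t \II{a_{h,t}=a}=K$) rather than per action followed by $\sum_a \sqrt{K_a}\le\sqrt{AK}$. Your closing caveat---that the unclipped potential bound $\norm{\varphi}^2_{\Sigma^{-1}}\le 2\log\bpa{1+\norm{\varphi}^2_{\Sigma^{-1}}}$ needs $\lambda\ge R^2$, or else the $\min\{1,\cdot\}$ version with an absorbed constant---is a genuine subtlety that the paper passes over silently by citing the lemma, so your treatment is, if anything, slightly more careful.
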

\begin{proof}
 The claim is directly proved by the following simple calculations:
 \begin{align*}
 \sum_{a \in \cA} \sum_{t=1}^K \norm{\II{a_{h,t} = a}\varphi(x_{h,t})}_{\Sigma_{h,a,t-1}^{-1}}
 &\le \sqrt{\sum_{a}\sum_{t=1}^K \II{a_{h,t} = a}} \sqrt{\sum_a \sum_{t=1}^K \norm{\II{a_{h,t} = a}\varphi(x_{h,t})}_{\Sigma_{h,a,t}^{-1}}^2}
 \\
 & \hspace{-20pt} \le 2 \sqrt{K \sum_a \log\pa{\frac{\textup{det}\pa{\Sigma_{h,a,K}}}{\textup{det}\pa{\lambda I}}}} 
 \le 2 \sqrt{K d A\log\pa{1 + K R^2/\lambda}},
\end{align*}
where the first inequality is Cauchy--Schwarz and the second one follows from Lemma~11 of \citet{abbasi2011improved}.
\end{proof}
Finally, the following result will be useful to bound the scale of the esimated model $\hM_{h,a,t}$ with probability $1$:
\begin{lemma}\label{lem:mhatscale}
 Consider the reference model $\wh{P}_{h,a,t} = \Phi \hM_{h,a,t}$ with $\hM_{h,a,t}$ defined in Equation~\eqref{eq:Mhat}.
Then, for any $B>0$ and any fixed function $g:\Sw\ra [-B,B]$, the following statements hold with probability $1$:
\[
\norm{\hM_{h,a,t} g} \le \frac{tBR}{\lambda} \qquad \mbox{and} \qquad \norm{\pa{M_{h,a} - \hM_{h,a,t}}g}_{\Sigma_{h,a,t-1}} \le \lambda^{-1/2} tBR + \lambda^{1/2} BC_P.
\]
\end{lemma}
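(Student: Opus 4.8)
The plan is to establish both inequalities \emph{deterministically}, i.e.\ for every realization of the data rather than with high probability; unlike in Proposition~\ref{lem:conc_lin}, no martingale concentration is invoked, and every quantity is controlled by its worst case. Throughout I would write $\Sigma := \Sigma_{h,a,t-1}$ and $S := \sum_{k=1}^{t-1}\II{a_{h,k}=a}\, g(x_{h+1,k})\,\varphi(x_{h,k})$, so that by the definition~\eqref{eq:Mhat} of $\hM_{h,a,t}$ we have $\hM_{h,a,t}\,g = \Sigma^{-1}S$.

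For the first bound I would argue directly. Since $\Sigma \succeq \lambda I$ we have $\opnorm{\Sigma^{-1}} \le 1/\lambda$, and each of the at most $t-1$ summands of $S$ has Euclidean norm at most $|g(x_{h+1,k})|\,\twonorm{\varphi(x_{h,k})} \le BR$. A triangle inequality then gives $\norm{\hM_{h,a,t}g} = \norm{\Sigma^{-1}S} \le \lambda^{-1}\norm{S} \le \lambda^{-1}(t-1)BR \le tBR/\lambda$, which is the claim.

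The second bound is the more delicate one, and identifying the right decomposition is the main obstacle. The naive split $\norm{(M_{h,a}-\hM_{h,a,t})g}_{\Sigma} \le \norm{M_{h,a}g}_{\Sigma} + \norm{\hM_{h,a,t}g}_{\Sigma}$ fails, because $\norm{M_{h,a}g}_{\Sigma}$ grows with $t$ through $\Sigma$ and so cannot be absorbed into a $t$-independent regularization term. Instead I would use the isometry $\norm{v}_{\Sigma} = \norm{\Sigma v}_{\Sigma^{-1}}$ together with the algebraic identity already derived in the proof of Proposition~\ref{lem:conc_lin}, namely
\[
\Sigma\,(M_{h,a}-\hM_{h,a,t})g = \sum_{k=1}^{t-1}\II{a_{h,k}=a}\bpa{\iprod{P_h(\cdot|x_{h,k},a_{h,k})}{g} - g(x_{h+1,k})}\varphi(x_{h,k}) + \lambda M_{h,a}g,
\]
so that $\norm{(M_{h,a}-\hM_{h,a,t})g}_{\Sigma}$ splits, via the triangle inequality in the $\Sigma^{-1}$-norm, into an ``innovation'' sum and a regularization term, with the $\Sigma$-weighting now moved onto a sum of $O(t)$ boundedly-scaled feature vectors.

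Finally I would bound the two pieces by their worst case. Each innovation coefficient satisfies $|\iprod{P_h(\cdot|x_{h,k},a_{h,k})}{g} - g(x_{h+1,k})| \le 2B$, since both $\iprod{P_h(\cdot|x_{h,k},a_{h,k})}{g}$ (a convex combination of values of $g$) and $g(x_{h+1,k})$ lie in $[-B,B]$; combined with $\norm{\varphi(x_{h,k})}_{\Sigma^{-1}} \le \lambda^{-1/2}R$ and the triangle inequality over the at most $t-1$ terms, the innovation sum is bounded by a constant multiple of $\lambda^{-1/2}tBR$, matching the stated first term up to an absolute constant. For the regularization term, exactly as in Proposition~\ref{lem:conc_lin}, $\norm{\lambda M_{h,a}g}_{\Sigma^{-1}} \le \sqrt{\lambda}\,\norm{M_{h,a}g} \le \lambda^{1/2}BC_P$, where the last step uses the row bound $\onenorm{m_{h,a}(x)}\le C_P$ together with $\infnorm{g}\le B$. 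Adding the two contributions gives the claim; all estimates here are elementary, so the only genuinely non-routine point is the choice of decomposition in the preceding paragraph.
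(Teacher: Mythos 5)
Your proposal is correct and takes essentially the same route as the paper: the first bound is argued identically, and the second uses the same recycled identity for $\Sigma_{h,a,t-1}\bpa{M_{h,a}-\hM_{h,a,t}}g$ from Proposition~\ref{lem:conc_lin} together with the isometry $\norm{v}_{\Sigma}=\norm{\Sigma v}_{\Sigma^{-1}}$ and deterministic worst-case bounds on the innovation and regularization pieces. The only differences are cosmetic: you apply the triangle inequality term by term in the $\Sigma^{-1}$-norm whereas the paper first extracts $\opnorm{\Sigma_{h,a,t-1}^{-1/2}}\le\lambda^{-1/2}$ and then works in the Euclidean norm, and your explicit factor $2B$ on the innovation coefficients (versus the paper's implicit $B$) is slack that the paper's own argument shares.
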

\begin{proof}
The first statement is proven by straightforward calculations, using the definition of $\hM_{h,a,t}$:
\begin{align*}
 \norm{\hM_{h,a,t} g} &= \norm{\Sigma_{h,a,t-1}^{-1} \sum_{k=1}^{t-1} \II{a_{h,k} = a} \varphi(x_{h,k}) g\pa{x_{h+1,k}}}
 \\&\le \norm{\Sigma_{h,a,t-1}^{-1}}_{\text{op}} \norm{\sum_{k=1}^{t-1} \II{a_{h,k} = a} \varphi(x_{h,k}) g\pa{x_{h+1,k}}}
 \le \frac{B}{\lambda} \sum_{k=1}^{t-1} \norm{\varphi(x_{h,k})} \le \frac{tBR}{\lambda},
\end{align*}
where the second inequality uses that the operator norm of $\Sigma_{h,a,t-1}^{-1}$ is at most $\lambda^{-1}$, and the triangle inequality.
As for the second inequality, we proceed as in the proof of Proposition~\ref{lem:conc_lin} and recall that
\begin{align*}
 \Sigma_{h,a,t-1} \pa{M_{h,a} - \hM_{h,a,t}}g  
&=\sum_{k=1}^{t-1} \II{a_{h,k} = a} \pa{\biprod{P_{h}(\cdot|x_{h,k},a_{h,k})}{g} - g(x_{h+1,k})}\varphi (x_{h,k}) + \lambda M_{h,a} g. 
\end{align*}
The norm of the above is clearly bounded by $tBR + \lambda BC_P$. Thus, we have
\begin{align*}
 \norm{\pa{M_{h,a} - \hM_{h,a,t}}g}_{\Sigma_{h,a,t-1}} &= \norm{\Sigma_{h,a,t-1} \pa{M_{h,a} - \hM_{h,a,t}g}}_{\Sigma_{h,a,t-1}^{-1}}
 \\ & \le \norm{\Sigma_{h,a,t-1}^{-1/2}}_{\text{op}} 
 \norm{\Sigma_{h,a,t-1} \pa{M_{h,a} - \hM_{h,a,t}}g} 
 \\
 &\le \frac{1}{\sqrt{\lambda}} \pa{tBR + \lambda BC_P} = \lambda^{-1/2} tBR + \lambda^{1/2} BC_P.
\end{align*}
This concludes the proof.
\end{proof}

\subsection{Optimism in state space through local confidence sets}
This section presents our approach for factored linear MDPs with local confidence sets, which can be seen to lead to confidence bonuses 
in the state space. We first state some structural results that will justify our algorithmic approach, explain our algorithm in more detail, 
and then present the performance guarantees.

We recall that our approach is based on solving the following optimization problem:
\begin{align*}
\underset{q \in \cQ(x_1), \omega, \wt P}{\text{maximize}} \quad & \sum_{h=1}^H \sum_a\iprod{ W_{h,a,t-1}\Phi 
\omega_{h,a}}{r_a} &
 \\
   \text{subject to} \quad & \sum_a {q}_{h+1,a} = \sum_a \wt{P}_{h,a} W_{h,a,t-1} \Phi \omega_{h,a} \qquad 
   &\forall a \in \cA, h=1,\dots, H
\\  & \Phi\transpose {q}_{h,a} = \Phi\transpose W_{h,a,t-1} \Phi \omega_{h,a} \qquad &\forall a \in 
\cA, 
h=1,\dots, H
\\& D\pa{\wt{P}_h(\cdot|x,a), \wh{P}_{h,t}(\cdot|x,a)} \leq \epsilon_{h,t}(x,a) \qquad &\forall (x,a),
\end{align*}
where $D$ is an arbitrary divergence that is positive homogeneous and convex in its arguments.
The following structural result shows that this optimization problem can be equivalently written in a dual form
that is essentially identical to the optimistic Bellman equations derived in Section~\ref{sec:tab} for the tabular setting.
\begin{restatable}{proposition}{dual_approx_1}\label{prop:dual_approx_1}
The optimization problem above is equivalent to solving the optimistic Bellman equations~\eqref{eq:bellman_opt_lin} 
with the exploration bonus defined as
\[
 \CB_{h}(x,a) = D^*\pa{\val^+_{h+1} \middle| \epsilon_h(x,a),\wh{P}_h(\cdot|x,a)}.
\]
\end{restatable}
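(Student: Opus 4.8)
The plan is to reproduce the argument of Proposition~\ref{prop:dual} almost verbatim, the only genuinely new ingredient being the feature-consistency constraints, which contribute one extra family of dual variables. Exactly as in Lemma~\ref{lem:strongcomp}, I would first convexify~\eqref{eq:linopt1} through the homogeneity reparametrization: writing $\mu_{h,a}=W_{h,a}\Phi\omega_{h,a}$ for the feature-projected occupancy and $J_{h,a}(x,\cdot)=\wt P_h(\cdot|x,a)\mu_{h,a}(x)$, positive homogeneity of $D$ rewrites the confidence constraint as $D\bpa{J_{h,a}(x,\cdot),\hP_h(\cdot|x,a)\mu_{h,a}(x)}\le\epsilon_h(x,a)\mu_{h,a}(x)$, which is jointly convex in the new variables. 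The reference point $\wt P=\hP$ is strictly feasible whenever $\epsilon>0$, so Slater's condition holds and strong duality is available.

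I would then form the Lagrangian, dualizing the flow constraints $\sum_a q_{h+1,a}=\sum_a\wt P_{h,a}\transpose W_{h,a}\Phi\omega_{h,a}$ with value-function multipliers $\val_{h+1}$ (the initial condition in $\cQ(x_1)$ producing the $\val_1(x_1)$ objective), and the feature-consistency constraints $\Phi\transpose q_{h,a}=\Phi\transpose W_{h,a}\Phi\omega_{h,a}$ with parameter multipliers $\theta_{h,a}\in\real^d$, keeping $q\ge0$ and $\wt P\in\cP$ hard. Fixing the duals and maximizing over $\wt P$ decouples across $(x,a)$: the only $\wt P$-dependent term is $\mu_{h,a}(x)\iprod{\wt P_h(\cdot|x,a)-\hP_h(\cdot|x,a)}{\val_{h+1}}$, which, when $\mu_{h,a}(x)\ge0$, maximizes to $\mu_{h,a}(x)D^*(\val_{h+1}|\epsilon_h(x,a),\hP_h(\cdot|x,a))=\mu_{h,a}(x)\CB_h(x,a)$ by the definition of the conjugate, precisely as in the step leading to~\eqref{eq:Lmax}.

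The remaining maximization is over $\omega$ and $q$. Since $\omega_{h,a}$ is unconstrained, setting the coefficient of $\omega_{h,a}$ to zero gives the stationarity condition $\Phi\transpose W_{h,a}\Phi\,\theta_{h,a}=\Phi\transpose W_{h,a}(r_a+\hP_{h,a}\val^+_{h+1}+\CB_{h,a})$. The reward-and-transition part $r_a+\hP_{h,a}\val^+_{h+1}$ lies in the range of $\Phi$ (as $r_a=\Phi\rho_a$ and $\hP_{h,a}=\Phi\hM_{h,a,t}$), so inverting the full-rank matrix $\Phi\transpose W_{h,a,t-1}\Phi=\Sigma_{h,a,t-1}-\lambda I$ reproduces it exactly and contributes $\rho_a+\hM_{h,a,t}\val^+_{h+1}=\theta^+_{h,a,t}$, the first OPB line in~\eqref{eq:bellman_opt_lin}. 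Maximizing over $q\ge0$ then forces the dual feasibility constraint $\val_h(x)\ge(\Phi\theta^+_{h,a})(x)+\CB_h(x,a)$ for every action, and the complementary-slackness analysis of Lemma~\ref{lem:dualform} makes the binding action attain equality; minimizing $\val_1(x_1)$ therefore yields the second OPB line $\val^+_h(x)=\max_a\{(\Phi\theta^+_{h,a})(x)+\CB_h(x,a)\}$ and, by strong duality, the objective value $\val^+_1(x_1)$.

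The two hardest points are intertwined. First, the convexification itself: the right-hand side $\epsilon_h(x,a)\mu_{h,a}(x)$ must be non-negative and the homogeneity identity $D(\alpha p,\alpha p')=\alpha D(p,p')$ applies only for $\alpha\ge0$, so one must argue that the feature-projected occupancy $\mu_{h,a}(x)$, unlike $q_h(x,a)$ in the tabular case, is non-negative on the support of $W_{h,a}$ (it is a $W_{h,a}$-weighted projection of $q_{h,a}$ onto the range of $\Phi$ and is not manifestly so). Second, one must show that the state-space bonus $\CB_h(x,a)$, which in general does not lie in the range of $\Phi$, survives undistorted into the value recursion rather than being projected into $\theta^+_{h,a}$; this is where the structure of the OPB solutions—the fact that $\val^+$ is itself feature-realizable-plus-bonus—must be used to verify that the projected program reduces exactly to the state-space recursion. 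The remainder is the same bookkeeping as in Proposition~\ref{prop:dual} and relies essentially on $\Phi\transpose W_{h,a}\Phi$ being invertible.
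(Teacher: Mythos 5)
Your proposal stalls at exactly the two points you flag, and both have a single common cause: you dualize the flow constraint of~\eqref{eq:linopt1} in its original form, so the model deviation $\bpa{\wt{P}_{h,a}-\wh{P}_{h,a}}\transpose$ ends up multiplying $\mu_{h,a}=W_{h,a}\Phi\omega_{h,a}$, a variable with no sign constraint (its $x$ entry is the visit count at $x$ times $\iprod{\varphi(x)}{\omega_{h,a}}$, with $\omega_{h,a}$ free). The paper's proof makes one preliminary move that your plan is missing: since $\wh{P}_{h,a}=\Phi\wh{M}_{h,a}$, the feature-consistency constraint gives $\wh{P}_{h,a}\transpose W_{h,a}\Phi\omega_{h,a}=\wh{M}_{h,a}\transpose\Phi\transpose W_{h,a}\Phi\omega_{h,a}=\wh{M}_{h,a}\transpose\Phi\transpose q_{h,a}=\wh{P}_{h,a}\transpose q_{h,a}$, and the flow constraint is rewritten so that the \emph{reference} model acts on the projected variable while the \emph{deviation} acts on $q$:
\[
\sum_a q_{h+1,a} = \sum_a \wh{M}_{h,a}\transpose\Phi\transpose W_{h,a}\Phi\omega_{h,a} + \sum_a\bpa{\wt{P}_{h,a}-\wh{P}_{h,a}}\transpose q_{h,a}.
\]
With the deviation attached to $q\ge 0$, the tabular machinery applies verbatim: the convexification $J_h(x,a,x')=q_h(x,a)\wt{P}_h(x'|x,a)$ of Lemma~\ref{lem:strongcomp} is legitimate (positive homogeneity is invoked with the non-negative multiplier $q_h(x,a)$, never with $\mu_{h,a}(x)$), and in the Lagrangian~\eqref{eq:lag_lin} the inner maximization over $\wt{P}$ yields $q_h(x,a)\,D_*\bpa{\val_{h+1}\,\middle|\,\epsilon_h(x,a),\wh{P}_h(\cdot|x,a)}$, exactly as in Proposition~\ref{prop:dual}. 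This removes your first obstruction outright rather than requiring a non-negativity argument for $\mu_{h,a}$ that is false in general.

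The same move removes your second obstruction, which in your write-up is not a deferrable technicality but a derivation that lands on the wrong recursion. Because in the paper's Lagrangian the bonus multiplies $q$ rather than $\mu$, it never enters the stationarity condition in $\omega$: that condition reads $\pa{\Phi\transpose W_{h,a}\Phi}\theta_{h,a}=\Phi\transpose W_{h,a}\pa{r_a+\wh{P}_{h,a}\val_{h+1}}$ with no $\CB$ term, while maximization over $q\ge0$ produces the constraint $\val_h(x)\ge\pa{\Phi\theta_{h,a}}(x)+\CB_h(x,a)$; this is precisely the OPB system~\eqref{eq:bellman_opt_lin}, with $\theta^+$ the pure LSTD backup and the state-space bonus appearing unprojected in the value recursion. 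In your version the bonus term is $\iprod{W_{h,a}\Phi\omega_{h,a}}{\CB_{h,a}}$, so stationarity in $\omega$ gives $\pa{\Phi\transpose W_{h,a}\Phi}\theta_{h,a}=\Phi\transpose W_{h,a}\pa{r_a+\wh{P}_{h,a}\val_{h+1}+\CB_{h,a}}$ and the $q$-constraint degenerates to $\val_h(x)\ge\pa{\Phi\theta_{h,a}}(x)$: the bonus is filtered through $\Phi\pa{\Phi\transpose W_{h,a}\Phi}^{-1}\Phi\transpose W_{h,a}$, and since $\CB_{h,a}$ generically lies outside the range of $\Phi$, this projected-bonus recursion is not equivalent to~\eqref{eq:bellman_opt_lin}; no appeal to ``the structure of the OPB solutions'' recovers the stated proposition from it. In short, the missing idea is the constraint rewriting above, which must happen \emph{before} the Lagrangian is formed; everything downstream of it in your plan (conjugate step, $\omega$-stationarity recovering $\rho_a+\wh{M}_{h,a}\val^+_{h+1}$, complementary slackness as in the tabular case) is then correct and matches the paper.
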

The proof follows from a similar reparametrization as used in the proof of Proposition~\ref{prop:dual} 
that makes the optimization problem convex, thus enabling us to establish strong duality. To maintain readability, we defer 
the proof to Appendix~\ref{app:dual_approx_1}. Consequently, the properties stated in Propositions~\ref{prop:pols} and~\ref{prop:valbounds} can also 
be shown in a straightforward fashion.

Our results are based on using the divergence measure 
\[
 D\pa{\wt{P}_{h,t}(\cdot|x,a), \wh{P}_{h,t}(\cdot|x,a)} = \sup_{g\in\V_{h+1,t}} \iprod{\tP_{h,t}(\cdot|x,a) - \hP_{h,t}(\cdot|x,a)}{g}, %
\]
whose conjugate can be directly upper-bounded by $\epsilon_{h,t}$. Since the structural results established above directly imply that Theorem~\ref{thm:reggen} continues to hold, we can easily derive a practical and effective algorithm by simply using $\epsilon_{h,t}$ as the exploration bonuses. 
Specifically, we will consider an algorithm that calculates an optimistic value function and a corresponding policy by solving the OPB~equations~\eqref{eq:bellman_opt_lin} via dynamic programming, with the confidence bonuses chosen as
\[
 \CB_{h,t}^\dag(x,a) = \alpha_{h,t} \norm{\varphi(x)}_{\Sigma_{h,a,t-1}^{-1}}
\]
for some $\alpha_{h,t}$. The shape of this confidence set is directly motivated by the following simple corollary of our general concentration result in Lemma~\ref{lem:conc_lin}:
\begin{lemma} \label{lem:conc_lin_local}
Fix $h,a$ and consider the reference model $\wh{P}_{h,a,t} = \Phi \hM_{h,a,t}$ with $\hM_{h,a,t}$ defined in Equation~\eqref{eq:Mhat}.
Then, for any fixed function $g:\Sw\ra [-H,H]$, the following holds simultaneously for all $x$ under event $\mathcal{E}_{h,a,t}(g,\delta)$:
\[
 \iprod{P_{h}(\cdot|x,a) - \hP_{h,t}(\cdot|x,a)}{g} \le C_t(\delta) \norm{\varphi(x)}_{\Sigma_{h,a,t-1}^{-1}}.
\]
\end{lemma}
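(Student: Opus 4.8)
The plan is to observe that, thanks to the factored linear structure, the left-hand side is an \emph{exactly linear} function of $\varphi(x)$, so that the bound uniform in $x$ follows from a single application of the Cauchy--Schwarz inequality in the geometry induced by $\Sigma_{h,a,t-1}$. First I would use Assumption~\ref{assn:real} together with the definition $\wh{P}_{h,a,t} = \Phi\hM_{h,a,t}$ to rewrite the per-state transition rows. Since $\Phi$ has $x$\th row $\varphi\transpose(x)$, the $x$\th row of $\Phi M_{h,a}$ is $\varphi\transpose(x) M_{h,a}$, and likewise the $x$\th row of $\Phi\hM_{h,a,t}$ is $\varphi\transpose(x) \hM_{h,a,t}$. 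This yields the identity $\iprod{P_h(\cdot|x,a) - \hP_{h,t}(\cdot|x,a)}{g} = \varphi\transpose(x)\bpa{M_{h,a} - \hM_{h,a,t}}g$, valid for every $x$ and for the fixed function $g$.

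Next I would isolate the $x$-dependence by inserting $\Sigma_{h,a,t-1}^{-1/2}\Sigma_{h,a,t-1}^{1/2}$ and applying Cauchy--Schwarz, which gives
\[
\varphi\transpose(x)\bpa{M_{h,a} - \hM_{h,a,t}}g \le \norm{\varphi(x)}_{\Sigma_{h,a,t-1}^{-1}} \cdot \bnorm{\bpa{M_{h,a} - \hM_{h,a,t}}g}_{\Sigma_{h,a,t-1}}.
\]
On the event $\mathcal{E}_{h,a,t}(g,\delta)$ the second factor is by definition at most $C_t(\delta)$, and substituting this bound yields exactly the claimed inequality. Note that this is a purely deterministic (algebraic) consequence of being on the event; the only probabilistic content — that $\mathcal{E}_{h,a,t}(g,\delta)$ holds with probability at least $1-\delta$ — is already supplied by Proposition~\ref{lem:conc_lin}.

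The key point, and the reason the statement can assert the bound \emph{simultaneously for all} $x$, is that the event $\mathcal{E}_{h,a,t}(g,\delta)$ constrains only the vector $\bpa{M_{h,a} - \hM_{h,a,t}}g$, which does not depend on $x$; all of the $x$-dependence has been pushed into the factor $\norm{\varphi(x)}_{\Sigma_{h,a,t-1}^{-1}}$ sitting outside the event. Consequently a single concentration event for the fixed $g$ controls every state at once. There is no real obstacle here beyond correctly reading off the per-state rows from the factored representation $P_{h,a}=\Phi M_{h,a}$; the substance of the argument lives entirely in Proposition~\ref{lem:conc_lin}, whose vector-valued martingale concentration bound is precisely what the event $\mathcal{E}_{h,a,t}(g,\delta)$ encapsulates.
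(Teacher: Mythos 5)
Your proof is correct and follows essentially the same route as the paper's: both rewrite the inner product as $\biprod{\varphi(x)}{\bpa{M_{h,a} - \hM_{h,a,t}}g}$ using the factored structure $P_{h,a}=\Phi M_{h,a}$ and $\hP_{h,a,t}=\Phi\hM_{h,a,t}$, apply the Cauchy--Schwarz inequality in the geometry induced by $\Sigma_{h,a,t-1}$, and bound the $x$-independent factor $\bnorm{\bpa{M_{h,a}-\hM_{h,a,t}}g}_{\Sigma_{h,a,t-1}}$ by $C_t(\delta)$ via the definition of the event $\mathcal{E}_{h,a,t}(g,\delta)$. Your added remark on why the bound holds simultaneously for all $x$ is a correct reading of what the paper leaves implicit.
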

\begin{proof}
The proof is immediate using the definition of the event $\mathcal{E}_{h,a,t}(g,\delta)$ and the Cauchy--Schwarz inequality:
 \begin{align*}
  \iprod{P_h(\cdot|x,a) - \hP_{h,t}(\cdot|x,a)}{g} &= \biprod{\varphi(x)}{\bpa{M_{h,a} - \hM_{h,a,t}} g}
 \\ & \hspace{-20pt} \le \norm{\varphi(x)}_{\Sigma_{h,a,t-1}^{-1}} \norm{\pa{M_{h,a} - \hM_{h,a,t}} g}_{\Sigma_{h,a,t-1}} \le C_t(\delta) \norm{\varphi(x)}_{\Sigma_{h,a,t-1}^{-1}}.
 \end{align*}
\end{proof}
The main challenge in the analysis will be to show that there exists an appropriate choice of $\alpha_{h,t}$ that guarantees 
that the above result holds uniformly over the value-function class $\V_{h+1,t}$ used in the definition of the confidence sets.
We note that the resulting algorithm is essentially identical to the LSVI-UCB algorithm proposed and analyzed by \citet{jin2020provably}, 
and we will accordingly refer to it by this name (that stands for ``least-squares value iteration with upper confidence bounds'').

\subsubsection{Regret Bound}
In this section we prove the regret bound of Theorem~\ref{lem:bonus_lin}, whose precise statement is as follows:
\begin{theorem}\label{thm:lin_local_reg}
With probability greater than $1-\delta$, the regret of LSVI-UCB with the choice $\lambda = 1$ and
\begin{align*}
 \alpha_{h,t} = \alpha
 &= 2 H\sqrt{d \log\pa{1 + K R^2} + \log(HA/\delta) + dA \bpa{\log(1+4HK^2R^2) + d \log(1+4R^3K^3)}} 
 \\
 &\qquad \qquad+ C_P\pa{H\sqrt{d}+1} + 1
\end{align*}
can be bounded as
\[ \Reg_T =  \wt O (A\sqrt{H^3d^3 T}).\] 
\end{theorem}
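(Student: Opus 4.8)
The plan is to derive the bound from two facts established earlier: the regret reduction of Theorem~\ref{thm:regub}, which on the event that the bonus is valid gives $\Reg_T \le 2\sum_{t=1}^K\sum_{h=1}^H \CB^\dag_{h,t}(x_{h,t},a_{h,t}) + 4H\sqrt{2T\log(1/\delta)}$, and Lemma~\ref{lem:sumbound}, which controls the resulting bonus sum. The whole argument therefore splits into (i) showing that the stated choice of $\alpha$ makes $\CB^\dag_{h,t}(x,a) = \alpha\norm{\varphi(x)}_{\Sigma_{h,a,t-1}^{-1}}$ a uniformly valid upper bound on $\bigl|\iprod{P_h(\cdot|x,a)-\hP_{h,t}(\cdot|x,a)}{\val^+_{h+1,t}}\bigr|$ (which also yields optimism through the linear analogue of Proposition~\ref{prop:valbounds}), and (ii) summing the bonuses.

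The crux, and the main obstacle, is step (i). Lemma~\ref{lem:conc_lin_local} bounds $\iprod{P_h(\cdot|x,a)-\hP_{h,t}(\cdot|x,a)}{g}$ by $C_t(\delta)\norm{\varphi(x)}_{\Sigma_{h,a,t-1}^{-1}}$ only for a \emph{fixed} $g$, whereas the function $\val^+_{h+1,t}$ plugged into the bonus is itself data-dependent. I would therefore upgrade this to a uniform bound over the class $\V_{h+1,t}$ of value functions producible by the OPB equations~\eqref{eq:bellman_opt_lin}, each of the form $V(x)=\min\bev{H,\max_a[\iprod{\theta_a}{\varphi(x)}+\alpha\sqrt{\varphi(x)\transpose\Lambda_a\varphi(x)}]}$ with $\Lambda_a\succeq 0$. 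Using Lemma~\ref{lem:mhatscale} to bound $\norm{\theta^+_{h,a,t}}=\norm{\rho_a+\hM_{h,a,t}\val^+_{h+1,t}}\le C_r + KHR/\lambda$ and $\norm{\Lambda_a}_{\text{op}}\le \lambda^{-1}=1$, I would build an $\epsilon_0$-net of $\V_{h+1,t}$ in sup-norm with $\log$-cardinality $O\bpa{dA\log(1+L/\epsilon_0)+d^2A\log(1+\alpha^2/\epsilon_0^2)}$, apply Lemma~\ref{lem:conc_lin} to each net element, and union-bound over the net, over $a\in\cA$ and $h\in[H]$ (time-uniformity in $t$ being supplied by the self-normalized martingale bound underlying Lemma~\ref{lem:conc_lin}). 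This is exactly what generates the $\log(HA/\delta)+dA(\cdots)$ contributions inside the square root defining $\alpha$.

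To return from the net to an arbitrary $V\in\V_{h+1,t}$ I would take the nearest net point $g$ with $\norm{V-g}_\infty\le\epsilon_0$ and bound the residual $\norm{\bpa{M_{h,a}-\hM_{h,a,t}}(V-g)}_{\Sigma_{h,a,t-1}}$ via the second inequality of Lemma~\ref{lem:mhatscale} with $B=\epsilon_0$, giving $\lambda^{-1/2}t\epsilon_0 R+\lambda^{1/2}\epsilon_0 C_P$; choosing $\epsilon_0=\Theta(1/(KR))$ makes this $O(1)$ and accounts for the additive $C_P(H\sqrt d+1)+1$ terms in $\alpha$ (the $C_P H\sqrt d$ piece coming from the $C_PH\sqrt{\lambda d}$ bias term of Lemma~\ref{lem:conc_lin}). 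Combining the net bound, the residual bound and Cauchy--Schwarz as in Lemma~\ref{lem:conc_lin_local} then shows that, with probability at least $1-\delta$, $\bigl|\iprod{P_h(\cdot|x,a)-\hP_{h,t}(\cdot|x,a)}{\val^+_{h+1,t}}\bigr|\le\alpha\norm{\varphi(x)}_{\Sigma_{h,a,t-1}^{-1}}$ simultaneously for all $x,a,h,t$, so the bonus is valid and $\val^+$ is optimistic.

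Finally, on this good event I would close the bound. By Lemma~\ref{lem:sumbound} with $\lambda=1$ we have $\sum_{h=1}^H\sum_{t=1}^K\norm{\varphi(x_{h,t})}_{\Sigma_{h,a_{h,t},t-1}^{-1}}\le 2H\sqrt{dAK\log(1+KR^2)}$, hence $\sum_{t,h}\CB^\dag_{h,t}\le 2\alpha H\sqrt{dAK\log(1+KR^2)}$. Reading off $\alpha=\wt O(Hd\sqrt A)$ (the dominant term under the root being $4H^2\cdot dA\cdot d$) and substituting $K=T/H$ gives a bonus sum of order $\wt O\bpa{Hd\sqrt A\cdot H\sqrt{dAK}}=\wt O(H^{3/2}d^{3/2}A\sqrt T)=\wt O(A\sqrt{H^3d^3T})$, which dominates the $O(H\sqrt{T\log(1/\delta)})$ martingale term from Theorem~\ref{thm:regub} and proves the claim. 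As noted, the delicate part is entirely the uniform concentration: getting the covering number and the $\Theta(1/(KR))$ discretization scale right is what pins down both the precise form of $\alpha$ and its $\wt O(Hd\sqrt A)$ magnitude.
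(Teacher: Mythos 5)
Your proposal is correct and follows essentially the same route as the paper's proof: a reduction via Theorem~\ref{thm:regub} to the sum of bonuses, a covering-number/union-bound argument (the paper's Proposition~\ref{prop:lin_conf} and Lemma~\ref{lem:cover1}) over the class of OPB value functions with the residual handled by Lemma~\ref{lem:mhatscale} at discretization scale $\lambda/(tR)$, and finally Lemma~\ref{lem:sumbound} together with $\alpha = \wt{O}(Hd\sqrt{A})$ to obtain $\wt{O}(A\sqrt{H^3d^3T})$. The only differences are cosmetic (e.g., your net scale $\Theta(1/(KR))$ versus the paper's $\lambda/(tR)$, and your inclusion of $C_r$ in the parameter-norm bound).
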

We note that the statement of the theorem is trivial when $\alpha > K$ so we will suppose that the contrary holds throughout the analysis.
The proof is a straightforward application of Theorem~\ref{thm:reggen}: given that $P\in\cP$, the regret is bounded by the sum of 
exploration bonuses, which itself can be easily bounded using Lemma~\ref{lem:sumbound}. Thus, the main challenge is to show that the 
transition model lies in the confidence set. To prove this, we observe that, thanks to the choice of exploration bonus,
the class of value functions $\V_{h+1,t}$ produced by the algorithm is composed of functions of the form
\[
 V^+_{t,h}(x) = \min\ev{H-h,\,\max\ev{\iprod{\varphi(x)}{\theta_{t,a,h}} + \alpha\norm{\varphi(x)}_{\Sigma_{t,a,h}^{-1}}}},
\] 
and the covering number of this class is relatively small.
We formalize this in the following proposition, which takes care of the probabilistic part of the analysis:
\begin{proposition}\label{prop:lin_conf}
Consider the reference model $\wh{P}_{h,a,t} = \Phi \hM_{h,a,t}$ with $\hM_{h,a,t}$ defined in Equation~\eqref{eq:Mhat}.
Then, for the choice of $\alpha$ in Theorem~\ref{thm:lin_local_reg}, the following holds simultaneously for all $x,a,h,t$, with probability at least $1-\delta$:
\[
 \sup_{\val\in\V_{h+1,t}}\iprod{P_h(\cdot|x,a) - \hP_{h,t}(\cdot|x,a)}{\val} \le \alpha \norm{\varphi(x)}_{\Sigma_{h,a,t-1}^{-1}}.
\] 
\end{proposition}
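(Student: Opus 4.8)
The plan is to establish the bound by a uniform (covering) argument, since the functions in $\V_{h+1,t}$ are data-dependent and Lemma~\ref{lem:conc_lin_local} only controls the error for a \emph{fixed} $g$. First I would record the explicit form of the class: by the choice of bonus, every $\val\in\V_{h+1,t}$ has the shape
\[
 \val(x) = \min\Bpa{H-h,\ \max_{a}\bev{\iprod{\varphi(x)}{\theta_a} + \sqrt{\varphi(x)\transpose A_a \varphi(x)}}},
\]
where $A_a = \alpha^2 \Sigma_{h,a,t-1}^{-1}$. The parameters lie in bounded sets: since $\theta^+_{h,a,t} = \rho_a + \hM_{h,a,t}\val^+_{h+1,t}$ and $\infnorm{\val^+_{h+1,t}}\le H$, Lemma~\ref{lem:mhatscale} gives $\norm{\theta_a}\le C_r + tHR/\lambda$, while $\Sigma_{h,a,t-1}\succeq\lambda I$ yields $\norm{A_a}_F \le \sqrt{d}\,\alpha^2/\lambda$. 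Thus the whole class is indexed by $2A$ parameter blocks confined to Euclidean and Frobenius balls of radius polynomial in $t,R,\alpha,1/\lambda$.

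Next I would build an $\varepsilon$-net $\mathcal{N}_\varepsilon$ over these balls --- covering each $\theta_a$ in Euclidean norm and each $A_a$ in Frobenius norm --- and argue that the induced value functions approximate $\V_{h+1,t}$ to sup-norm accuracy $O(R\varepsilon + R\sqrt{\varepsilon})$, using $|\sqrt{u}-\sqrt{v}|\le\sqrt{|u-v|}$ together with $|\varphi\transpose(A-A')\varphi|\le\norm{A-A'}_F\norm{\varphi}^2$. The resulting log-cardinality is $\log|\mathcal{N}_\varepsilon| = \wt{O}\bpa{dA\log(1/\varepsilon) + d^2 A\log(1/\varepsilon)}$; this is precisely the source of the $dA\bpa{\log(\cdots)+d\log(\cdots)}$ term appearing inside the square root in the definition of $\alpha$ in Theorem~\ref{thm:lin_local_reg}.

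Then I would combine the net with the fixed-$g$ concentration. Applying Lemma~\ref{lem:conc_lin_local} to each net function (fixed once the net is chosen) and taking a union bound over $\mathcal{N}_\varepsilon$ and over all $(a,h,t)$ replaces $C_t(\delta)$ by a quantity whose $\log(1/\delta)$ absorbs $\log|\mathcal{N}_\varepsilon| + \log(HA)$, matching $\alpha$. For a general $\val\in\V_{h+1,t}$ with nearest net point $\val'$, I would split
\[
\iprod{P_h(\cdot|x,a) - \hP_{h,t}(\cdot|x,a)}{\val} = \iprod{P_h(\cdot|x,a) - \hP_{h,t}(\cdot|x,a)}{\val'} + \iprod{\varphi(x)}{\bpa{M_{h,a}-\hM_{h,a,t}}(\val-\val')},
\]
bounding the first term by the union bound and the second by Cauchy--Schwarz and Lemma~\ref{lem:mhatscale} with $B = O(\varepsilon)$, giving $\le \pa{\lambda^{-1/2}tR + \lambda^{1/2}C_P}\,O(\varepsilon)\,\norm{\varphi(x)}_{\Sigma_{h,a,t-1}^{-1}}$. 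Choosing $\varepsilon$ polynomially small in $1/(RK)$ makes this discretization term contribute only an $O(1)$ additive constant to $\alpha$.

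The main obstacle is the self-referential data dependence: both the value-function class $\V_{h+1,t}$ and the matrix $\Sigma_{h,a,t-1}$ in the bonus depend on the same history against which the martingale in Lemma~\ref{lem:conc_lin} is evaluated. The covering argument is exactly what decouples these, and the two technical ingredients that make it go through are (i) the Lipschitz continuity of $\val$ in the matrix parameters $A_a$ in Frobenius norm, needed to control the covering number of the nonlinear bonus term, and (ii) the almost-sure bound of Lemma~\ref{lem:mhatscale}, which controls the discretization error at \emph{no} cost to the probability budget.
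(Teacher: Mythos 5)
Your proposal is correct and takes essentially the same route as the paper's own proof: a covering argument over the parametric value-function class (the paper's Lemma~\ref{lem:cover1}), a union bound combining the net with the fixed-function concentration of Lemma~\ref{lem:conc_lin_local}, and control of the discretization error via the almost-sure bound of Lemma~\ref{lem:mhatscale}, with the net resolution chosen of order $\lambda/(tR)$ so that this term contributes only an additive constant to $\alpha$. The differences are cosmetic: you parametrize the bonus by $A_a=\alpha^2\Sigma_{h,a,t-1}^{-1}$ rather than carrying $\alpha$ explicitly, you retain the $C_r$ contribution in the bound on $\norm{\theta_a}$, and you union bound over $t$ as well as $(h,a)$, all of which affect only logarithmic factors.
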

The proof of this statement is rather technical and borrows some elements of the analysis of \citet{jin2020provably}---we delegate the proof to 
Appendix~\ref{app:lin_conf}. 
Thus, we now have all the necessary ingredients to conclude the proof of Theorem~\ref{thm:lin_local_reg}. 
Indeed, since Proposition~\ref{prop:lin_conf} guarantees that the true model $P$ is always in the confidence set with probability $1-\delta$, 
and using the optimistic property of our algorithm that follows from Proposition~\ref{prop:dual_approx_1},
we can appeal to Theorem~\ref{thm:regub} to bound the regret in terms of the sum of exploration bonuses.
This in turn can be bounded by using Lemma~\ref{lem:sumbound} as follows:
\begin{align*}
 \sum_{h=1}^H\sum_{t=1}^K \textup{CB}_{h,t}^\dag(x_{h,t},a_{h,t}) &
 \le \sum_{h=1}^H \sum_a \sum_{t=1}^K \norm{\II{a_{h,t} = a}\varphi(x_{h,t})}_{\Sigma_{h,a,t-1}^{-1}} \alpha_{h,a,t}
 \\
 &\le 2 \alpha H \sqrt{d A K\log\pa{1 + K R^2/\lambda}} 
 = 2 \alpha \sqrt{H d A T\log\pa{1 + K R^2/\lambda}}.
\end{align*}
The proof is concluded by observing that $\alpha = \wt{O} (Hd\sqrt{A})$.

\subsection{Optimism in feature space through global constraints}
We now present our approach based on global confidence sets for the transition model $\tM$ that lead to an 
algorithm using exploration bonuses that can be expressed in the feature space. 
The main idea behind the algorithm is defining in each episode $t$, the confidence set $\mathcal{M}_t$ of models $\tM$ satisfying
\[
 D(\wt{M}_{h,a},\wh{M}_{h,a,t}) = \sup_{f\in\V_{h+1}} \bnorm{\bpa{\tM_{h,a} - \hM_{h,a,t}}f}_{\Sigma_{h,a,t-1}} \le \epsilon_{h,a,t}
\]
for an appropriate choice of $\epsilon_{h,a,t}$, and defining the function
\begin{align}\label{eq:featOP}
G_t(\tM) = \underset{q \in \cQ(x_1), \omega}{\text{max}} \quad & \sum_{h=1}^H \sum_a \iprod{W_{h,a,t-1} \Phi \omega_{h,a}}{r_a} &
 \\
 \text{subject to} \quad & \sum_a q_{h-1,a} = \sum_a \wt{M}_{h,a}\transpose \Phi\transpose W_{h,a,t-1} \Phi \omega_{h,a} \qquad &\forall a 
\in \cA, h=1,\dots, H        \nonumber
\\  & \Phi\transpose q_{h,a} = \Phi\transpose W_{h,a,t-1} \Phi \omega_{h,a} \qquad &\forall a \in \cA, 
h=1,\dots, H.             \nonumber
\end{align}
Clearly, if the true model $M$ is in the confidence set $\M_t$, we have $\max_{\tM \in \M_t} G_t(\tM) \ge G_t(M) = V^*_1(x_1)$. 
As phrased above, this optimization problem is intractable due to the large number of variables and constraints.
Our algorithm addresses this challenge by converting the above problem into a more tractable one that retains the optimistic property. 
In particular, our 
algorithm solves the parametric OPB equations~\eqref{eq:bellman_opt_lin} with 
confidence bonuses defined as 
\[
\CB^\dag_{h,t}(x,a) = \biprod{\varphi(x)}{B^\dag_{h,a,t}}
\]
for a vector $B^\dag_{h,a,t}\in\real^d$ chosen to maximize the following function over the convex set $\mathcal{B}_t = \bigl\{B: \norm{B_{h,a}}_{\Sigma_{h,a,t-1}} \le \epsilon_{h,a,t}\bigr\}$:
\begin{align}\label{eq:featOP}
G'_t(B) = \underset{q \in \cQ(x_1), \omega}{\text{max}} \quad & \sum_{h=1}^H \sum_a\iprod{W_{h,a,t-1} \Phi \omega_{h,a}}{r_a + \Phi B_{h,a}} &
 \\
 \text{subject to} \quad & \sum_a q_{h-1,a} = \sum_a \wh{M}_{h,a,t}\transpose \Phi\transpose W_{h,a,t-1} \Phi \omega_{h,a} \qquad &\forall a 
\in \cA, h=1,\dots, H        \nonumber
\\  & \Phi\transpose q_{h,a} = \Phi\transpose W_{h,a,t-1} \Phi \omega_{h,a} \qquad &\forall a \in \cA, 
h=1,\dots, H.             \nonumber
\end{align}
This definition is easily seen to be equivalent to the one given in the statement of Theorem~\ref{prop:regfeat} through 
basic LP duality (cf.~Section~\ref{sec:background}). 
Our analysis will take advantage of the fact that our exploration bonuses are linear in the feature representation, which eventually
yields value functions of the following form:
\begin{equation}\label{eq:vform}
 V^\dag_{h,t}(x) = \min\ev{H-h+1,\,\max_a\biprod{\varphi(x)}{\theta^\dag_{h,a,t}}},
\end{equation}
for some $\theta^\dag_{h,a,t}\in\real^d$, which implies that the class of functions $\V_{h+1,t}$ is simpler than in the case LSVI-UCB. 
The algorithm is justified by the following property:
\begin{proposition}\label{prop:Gbounds}
For any episode $t$, let the functions $G_t$ and $G'_t$ be defined as above and let $\mathcal{B}_t = \bigl\{B: \norm{B_{h,a}}_{\Sigma_{h,a,t-1}} \le \epsilon_{h,a,t}\bigr\}$.
Then, $\max_{B\in\mathcal{B}_t} G'_t(B) \ge \max_{\tM \in \M_t} G_t(\tM)$.
\end{proposition}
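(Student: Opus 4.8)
The plan is to avoid manipulating the linear programs defining $G_t$ and $G'_t$ directly, and instead work entirely with their equivalent dynamic-programming characterizations. By the same reparametrization-and-duality argument used in Proposition~\ref{prop:dual_approx_1} (which itself mirrors Proposition~\ref{prop:dual}), for any fixed $\tM$ the value $G_t(\tM)$ equals $\val^+_1(x_1)$, where $\val^+$ solves the OPB equations~\eqref{eq:bellman_opt_lin} with the \emph{$\tM$-dependent} bonus $\CB_h(x,a) = \biprod{\varphi(x)}{\bpa{\tM_{h,a} - \hM_{h,a,t}}\val^+_{h+1}}$; likewise $G'_t(B)$ equals $\val^+_1(x_1)$ for the OPB recursion run with the \emph{fixed} linear bonus $\CB_h(x,a) = \iprod{\varphi(x)}{B_{h,a}}$. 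I would prove the proposition by exhibiting, for an arbitrary $\tM\in\M_t$, a single $B\in\mathcal{B}_t$ with $G'_t(B) = G_t(\tM)$; taking the supremum over $\tM$ then yields the claim, the maximum on the left being attained since $\mathcal{B}_t$ is a product of compact ellipsoids and $G'_t$ is continuous.

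First I would fix $\tM\in\M_t$, let $\val^+$ denote the value functions produced by its OPB recursion, and define $B_{h,a} := \bpa{\tM_{h,a} - \hM_{h,a,t}}\val^+_{h+1}$ for every $h,a$. The first step is feasibility, $B\in\mathcal{B}_t$. Since $\val^+$ is by construction a solution of the OPB equations with a linear bonus, each $\val^+_{h+1}$ has the form~\eqref{eq:vform} and hence lies in the value-function class $\V_{h+1}$ that defines the divergence in~\eqref{eq:global_const}. Consequently the supremum defining $D$ can be instantiated at $f = \val^+_{h+1}$, giving
\[
 \norm{B_{h,a}}_{\Sigma_{h,a,t-1}} = \bnorm{\bpa{\tM_{h,a} - \hM_{h,a,t}}\val^+_{h+1}}_{\Sigma_{h,a,t-1}} \le \sup_{f\in\V_{h+1}} \bnorm{\bpa{\tM_{h,a} - \hM_{h,a,t}}f}_{\Sigma_{h,a,t-1}} = D(\tM_{h,a},\hM_{h,a,t}) \le \epsilon_{h,a,t},
\]
so that indeed $B\in\mathcal{B}_t$.

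Next I would establish $G'_t(B) = G_t(\tM)$ by a backward induction over $h = H+1, H, \dots, 1$ showing that the OPB recursion defining $G'_t(B)$ reproduces exactly the value functions $\val^+$. The terminal condition $\val^+_{H+1} = 0$ agrees trivially. Assuming the two recursions coincide at stage $h+1$, the parameter vectors $\theta^+_{h,a,t}$ in~\eqref{eq:bellman_opt_lin} depend only on $\val^+_{h+1}$ and hence agree; moreover, by the very choice of $B_{h,a}$, the fixed bonus of the $G'_t(B)$ problem equals the $\tM$-induced bonus of the $G_t(\tM)$ problem at every state, $\iprod{\varphi(x)}{B_{h,a}} = \biprod{\varphi(x)}{\bpa{\tM_{h,a} - \hM_{h,a,t}}\val^+_{h+1}}$. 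Thus the maximization defining $\val^+_h(x)$ (together with any clipping, which is applied identically in both problems) is the same, closing the induction. Evaluating at $h=1$ and $x = x_1$ gives $G'_t(B) = \val^+_1(x_1) = G_t(\tM)$, and since $\tM\in\M_t$ was arbitrary, $\max_{B\in\mathcal{B}_t} G'_t(B) \ge \sup_{\tM\in\M_t} G_t(\tM)$.

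The step I expect to be the main obstacle is the feasibility argument, and specifically pinning down the mildly self-referential role of $\V_{h+1}$: the divergence in~\eqref{eq:global_const} is a \emph{uniform} bound over all $f\in\V_{h+1}$, and the whole argument hinges on $\val^+_{h+1}$ being a member of that class so that the uniform bound can be specialized to it. This is exactly why $\V_{h+1}$ must be defined as the set of \emph{all} value functions producible by the OPB equations rather than a single function; I would make this explicit and note that linearity of the bonus keeps each $\val^+_h$ within the class~\eqref{eq:vform}, so the construction is internally consistent. The remaining equivalence between the LP and dynamic-programming forms of $G_t$ and $G'_t$ is routine given Proposition~\ref{prop:dual_approx_1}.
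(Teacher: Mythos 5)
Your proposal is correct and follows essentially the same route as the paper's own proof: you fix $\tM\in\M_t$, set $B_{h,a}=\bpa{\tM_{h,a}-\hM_{h,a,t}}\val^+_{h+1}$ (the paper's $Z_{h,a,t}$), argue feasibility $B\in\mathcal{B}_t$ from the uniform bound defining $\M_t$ specialized at $\val^+_{h+1}\in\V_{h+1}$, and verify $G'_t(B)=G_t(\tM)$ by matching the two Bellman recursions. Your write-up is somewhat more explicit than the paper's (the backward induction and the attainment of the maximum are spelled out rather than asserted), but the underlying argument is identical.
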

\begin{proof}
Let us fix a model $\tM \in \M_t$, introduce the notation $Z_{h,a,t} = \bpa{\tM_{h,a} -\hM_{h,a,t}}V_{h+1,t}$, and notice that 
$Z_t \in \mathcal{B}_t$ due to the definition of $\M_t$. 
The proof relies on expressing the values of $G_t(\tM)$ and $G'_t(B)$ through the OPB equations~\eqref{eq:bellman_opt_lin} defining them. 
Indeed, for a fixed $\tM$, the value of $G_t(\tM)$ can be expressed through standard LP duality as exposed in Section~\ref{sec:background}. 
To express $G_t(\tM)$, let $U_t$ stand for the value function defined through the system of equations
\begin{equation*}
\begin{split}
 \theta_{h,a,t} &= \rho_a + \tM_{h,a} U_{h+1,t} = \rho_a + \bpa{\tM_{h,a} -\hM_{h,a,t}}U_{h+1,t} + \hM_{h,a,t} U_{h+1,t} 
 	\\ &= \rho_a + Z_{h,a,t} + \hM_{h,a,t} U_{h+1,t},
 \\
 U_{h+1,t}(x) &= \max_{a} \iprod{\varphi(x)}{\theta_{h+1,a,t}}
\end{split}
\end{equation*}
that have to be satisfied for all $x,a,h$. Then, it is easy to see that $G_t(\tM) = U_{1,t}(x_1)$. Notice that this can be understood as the solution of 
the OPB equations~\eqref{eq:bellman_opt_lin} with exploration bonus $\CB_{h,t}(x,a) = \iprod{\varphi(x)}{Z_{h,a,t}}$. On the other hand, $G'_t(B)$ can be 
expressed as $U'_{1,t}(x_1)$ with $U'_{t}$ is defined through the system of equations
\begin{equation*}
\begin{split}
 \theta'_{h,a,t} &= \rho_a + \tM_{h,a,t} U'_{h+1,t} 
 \\
 U'_{h,t}(x) &= \max_{a} \iprod{\varphi(x)}{\theta'_{h,a,t} + B_{h,a,t}}.
\end{split}
\end{equation*}
It is then easy to verify that $G_t(\tM) = G'_t(Z)$ and, using $Z\in\mathcal{B}_t$, that $G'_t(Z) \le \max_{B\in\mathcal{B}_t} G'_t(B)$. This concludes the proof since the inequality must hold for any model $\tM \in \M_t$.
\end{proof}
Notably, the above proposition ensures that the value function $V^\dag_t$ arising from the OPB equations~\eqref{eq:bellman_opt_lin} with bonus $\CB^\dag_{h,t}(x,a) = \biprod{\varphi(x)}{B_{h,a,t}^\dag}$ is optimistic in the sense that $V^\dag_{1,t}(x_{1,t}) \ge G_t(\tM) \ge V^*_1(x_{1,t})$. This enables us to apply the general regret bound of Theorem~\ref{thm:regub} to establish a performance guarantee for the resulting algorithm. We provide this analysis in the next section.

From the above formulation, it is readily apparent that, since $G'$ is a maximum of linear functions, it is a convex function of $B$, and thus maximizing it over a 
convex set is potentially still very challenging. We note that this optimization problem is essentially identical to the one faced 
by the seminal LinUCB algorithm for linear bandits~\citep{DHK08,abbasi2011improved}, which is known to be computationally intractable for general decision sets.
This is to be contrasted with the algorithms described in previous parts of this paper, which are efficiently implementable through dynamic programming. %
Indeed, despite being of a similar form, the simplicity of these previous methods stem from the local nature of their confidence sets which was seen to lead 
to exploration bonuses that can be set independently for each state and computed via dynamic programming. This is no longer possible for the exploration bonuses used in this section, which are set through a global parameter vector $B$. Intuitively, this prevents the application of dynamic-programming methodology which heavily relies on the ability of breaking down an optimization problem into a set of local optimization problems (often referred to as the ``principle of optimality'' in this context \citep{Ber07:DPbookVol2}). It remains an open problem to find an efficient implementation of this method.

It is interesting to note that our algorithm essentially coincides with the \eleanor method proposed 
very recently by \citet{zanette2020learning}, up to minor differences. Their analysis is more 
general than ours as they considered the significantly harder case of learning with misspecified 
linear models that our analysis doesn't account for. Nevertheless, our analysis is substantially simplified
by our model-based perspective that sheds new light on the algorithm. In particular, while 
\citet{zanette2020learning} do not provide a substantial discussion of the computational challenges 
associated with \eleanor, our formulation clearly highlights the convexity of the objective 
function optimized by the algorithm and the relation with LinUCB. We believe that our model-based 
perspective can provide further insights into this challenging problem in the future, and particularly
that it will remain useful when analyzing misspecified linear models.

\subsubsection{Regret bound}
We now prove our main result regarding the algorithm: the regret bound claimed in Theorem~\ref{prop:regfeat}. In particular, 
the detailed statement of this result is as follows:
\begin{theorem}\label{thm:lin_global_reg}
 With probability greater than $1-\delta$, the regret of our algorithm with  $\lambda = 1$ 
 for 
\begin{align*}
   \epsilon_{h,a,t} 
= \epsilon =& 2 H\sqrt{d \log\pa{1 + K R^2} + dA \log(1+4K^2HR^3) +  \log\pa{\redd{HA}/ \delta}} 
 \\
 &+ \lambda^{1/2} \pa{C_P \sqrt{d} + 1 + C_P}
\end{align*}
satisfies
\[ 
\Reg_T = \wt O(dA\sqrt{H^3T}).
\]
\end{theorem}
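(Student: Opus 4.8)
The plan is to follow the same two-part template as the regret proof for the local-confidence-set algorithm (Theorem~\ref{thm:lin_local_reg}): establish (i) that the true model lies in the confidence set $\M_t$ in every episode with high probability, and (ii) a bound on the accumulated exploration bonuses. Given (i), optimism follows from Proposition~\ref{prop:Gbounds}, and the regret is then controlled through the general bound of Theorem~\ref{thm:regub}.

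First I would set up the feasibility argument. By \eqref{eq:vform}, every value function in the class $\V_{h+1,t}$ produced by the algorithm has the form $V(x) = \min\ev{H-h+1,\,\max_a\iprod{\varphi(x)}{\theta_a}}$, so it is determined by a $dA$-dimensional parameter $(\theta_a)_{a\in\cA}$. Using the OPB equations \eqref{eq:bellman_opt_lin}, Lemma~\ref{lem:mhatscale} (with $B=H$), and the constraint $\norm{B^\dag_{h,a,t}}_{\Sigma_{h,a,t-1}}\le\epsilon$ defining $\mathcal{B}_t$, I would bound the norm of each effective parameter vector $\theta^\dag_{h,a,t}$ (which folds in both the least-squares estimate $\rho_a + \hM_{h,a,t}V^\dag_{h+1,t}$ and the bonus $B^\dag_{h,a,t}$) by a quantity polynomial in $K,H,R,1/\lambda$. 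This confines all realizable parameters to a bounded ball, over which I build a $\gamma$-net $\mathcal{N}$ of log-cardinality $\log|\mathcal{N}| = \wt{O}(dA\log(\mathrm{radius}/\gamma))$.

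The main obstacle, and the crux of the whole argument, is to upgrade the single-function concentration bound of Proposition~\ref{prop:conc_lin} into one holding \emph{uniformly} over the data-dependent class $\V_{h+1,t}$. I would do this by a covering argument: apply Proposition~\ref{prop:conc_lin} to each element of $\mathcal{N}$ (each a fixed function in $[-H,H]^{\Sw}$), union-bound over $\mathcal{N}$, over the $A$ actions and $H$ stages, and then control the discretization error $\sup_{f}\bnorm{\bpa{M_{h,a}-\hM_{h,a,t}}(f - f_{\mathcal{N}})}_{\Sigma_{h,a,t-1}}$ at the nearest net point $f_{\mathcal{N}}$ using the deterministic estimate of Lemma~\ref{lem:mhatscale} together with the Lipschitz dependence of $V$ on its parameters. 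Choosing $\gamma$ of order $1/K$ renders the discretization error a lower-order additive constant, and tallying the remaining terms — the self-normalized martingale term $d\log(1+KR^2)$, the covering term $dA\log(1+4K^2HR^3)$, the union-bound term $\log(HA/\delta)$, and the regularization term $\lambda^{1/2}(C_P\sqrt{d}+1+C_P)$ — reproduces exactly the stated value of $\epsilon$ and certifies $\sup_{f\in\V_{h+1,t}}\norm{(M_{h,a}-\hM_{h,a,t})f}_{\Sigma_{h,a,t-1}}\le\epsilon$, i.e. $M\in\M_t$, simultaneously for all $t$ with probability at least $1-\delta$. This is where the feature-space value-function class pays off: unlike in LSVI-UCB, the functions in \eqref{eq:vform} carry no per-state bonus $\norm{\varphi(x)}_{\Sigma^{-1}}$, so the net covers $\real^{dA}$ only and no matrix parameter need be covered.

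Finally I would close the argument. On the event $\ev{M\in\M_t\ \forall t}$, Proposition~\ref{prop:Gbounds} yields optimism, $V^\dag_{1,t}(x_{1,t})\ge V^*_1(x_{1,t})$, while Cauchy--Schwarz gives both $\CB^\dag_{h,t}(x,a)=\iprod{\varphi(x)}{B^\dag_{h,a,t}}\le\epsilon\norm{\varphi(x)}_{\Sigma_{h,a,t-1}^{-1}}$ and $\iprod{\hP_{h,t}(\cdot|x,a)-P_h(\cdot|x,a)}{V^\dag_{h+1,t}}\le\epsilon\norm{\varphi(x)}_{\Sigma_{h,a,t-1}^{-1}}$, so Theorem~\ref{thm:regub} applies with the inflated bonus $\epsilon\norm{\varphi(x)}_{\Sigma_{h,a,t-1}^{-1}}$. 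Summing via Lemma~\ref{lem:sumbound} gives $\sum_{t,h}\CB^\dag_{h,t}(x_{h,t},a_{h,t})\le 2H\epsilon\sqrt{dAK\log(1+KR^2)}$. Substituting the scaling $\epsilon=\wt{O}(H\sqrt{dA})$ and $T=KH$ turns this into $\wt{O}(dA\sqrt{H^3T})$, which dominates the $O(H\sqrt{T\log(1/\delta)})$ martingale term from Theorem~\ref{thm:regub} and yields the claimed regret.
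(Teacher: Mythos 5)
Your proposal matches the paper's proof essentially step for step: feasibility of the true model via a covering argument over the $\theta$-parametrized value-function class of \eqref{eq:vform} (the paper's Lemma~\ref{lem:cover2} and Proposition~\ref{prop:lin_conf_global}), with discretization error controlled by Lemma~\ref{lem:mhatscale}, followed by optimism from Proposition~\ref{prop:Gbounds}, Cauchy--Schwarz to dominate the bonuses by $\epsilon\norm{\varphi(x)}_{\Sigma_{h,a,t-1}^{-1}}$, and summation via Lemma~\ref{lem:sumbound} inside Theorem~\ref{thm:regub}. Your observation that the simpler value-function class (no per-state $\norm{\varphi(x)}_{\Sigma^{-1}}$ term, hence no matrix covering) is what makes the net small is exactly the point the paper makes when contrasting Lemma~\ref{lem:cover2} with Lemma~\ref{lem:cover1}.
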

The key idea of the analysis is to use Proposition~\ref{prop:Gbounds} to establish the optimistic property of the 
algorithm and use Theorem~\ref{thm:regub} to bound the regret by the sum of exploration bonuses. The only remaining 
challenge is to prove that, with high probability, the true model lies in the confidence sets specified in Equation~\eqref{eq:global_const}.
The following proposition guarantees that this is indeed true:
\begin{proposition}\label{prop:lin_conf_global}
Consider the reference model $\wh{P}_{h,a,t} = \Phi \hM_{h,a,t}$ with $\hM_{h,a,t}$ defined in Equation~\eqref{eq:Mhat}.
Then, for the choice of $\epsilon$ in Theorem~\ref{thm:lin_global_reg}, 
the following holds simultaneously for all $a,h,t$, with probability at least $1-\delta$:
\[
 \sup_{f\in\V_{h+1,t}} \bnorm{\bpa{M_{h,a} - \hM_{h,a,t}}f}_{\Sigma_{h,a,t-1}} \le \epsilon_{h,a,t}.
\] 
\end{proposition}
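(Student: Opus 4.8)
The plan is to combine the pointwise concentration bound of Proposition~\ref{lem:conc_lin}, which controls $\bnorm{\bpa{M_{h,a}-\hM_{h,a,t}}g}_{\Sigma_{h,a,t-1}}$ for a single \emph{fixed} $g$, with a covering argument over the value-function class $\V_{h+1,t}$. The reason this is nontrivial is that $\V_{h+1,t}$ is \emph{data-dependent}: its members are produced by running the OPB equations~\eqref{eq:bellman_opt_lin} on the observed trajectory, so the realized value function is correlated with the very randomness against which we wish to concentrate, and Proposition~\ref{lem:conc_lin} cannot be applied to it directly. The covering argument breaks this circularity by reducing the supremum over $\V_{h+1,t}$ to a maximum over a fixed, data-independent $\varepsilon_0$-net, plus an almost-sure discretization term.

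First I would pin down the structure of $\V_{h+1,t}$. By~\eqref{eq:vform}, every member has the form $x\mapsto\min\{H-h+1,\max_a\iprod{\varphi(x)}{\theta_a}\}$, i.e.\ a clipped maximum of $A$ linear functions; this is the key simplification over LSVI-UCB, since here the exploration bonus $\iprod{\varphi(x)}{B_{h,a,t}}$ is linear in $\varphi(x)$ and is absorbed into $\theta_a$ rather than contributing an elliptical term. I would then bound the parameter norms: from the update $\theta_{h,a}=\rho_a+\hM_{h,a,t}U_{h+1}+B_{h,a}$ we get $\twonorm{\theta_a}\le C_r+\twonorm{\hM_{h,a,t}U_{h+1}}+\twonorm{B_{h,a}}$, where the middle term is at most $tHR/\lambda$ by the first inequality of Lemma~\ref{lem:mhatscale} (with $U_{h+1}$ bounded by $H$), and $\twonorm{B_{h,a}}\le\epsilon/\sqrt{\lambda}$ since $B\in\mathcal{B}_t$ and $\Sigma_{h,a,t-1}\succeq\lambda I$. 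Hence every member of $\V_{h+1,t}$ is described by a parameter block $(\theta_a)_{a\in\cA}$ lying in a fixed Euclidean ball of radius $\Theta=\mathrm{poly}(K,H,R,\lambda^{-1},C_r,\epsilon)$ in $\real^{dA}$. Because $\min\{H-h+1,\cdot\}$ and $\max_a$ are $1$-Lipschitz and $|\iprod{\varphi(x)}{\theta_a-\theta_a'}|\le R\twonorm{\theta_a-\theta_a'}$, two value functions whose parameters differ by at most $\varepsilon_0$ in each block differ by at most $R\varepsilon_0$ in sup-norm.

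Next I would fix an $\varepsilon_0$-net $\cN$ of this ball, clipping the net functions to $[0,H]$ so that the pointwise bound applies, of size $\log|\cN|\le dA\log(1+2\Theta/\varepsilon_0)$, and apply Proposition~\ref{lem:conc_lin} to each net element with a union bound over $\cN$, $a\in\cA$ and $h\in[H]$; the self-normalized bound of \citet{abbasi2011improved} underlying Proposition~\ref{lem:conc_lin} is anytime-valid, holding uniformly over $t$, so no separate union over $t$ is needed. This yields, simultaneously for all net elements, a bound of the form $2H\sqrt{d\log(1+KR^2/\lambda)+dA\log(1+2\Theta/\varepsilon_0)+\log(HA/\delta)}+C_PH\sqrt{\lambda d}$, matching the leading term of $\epsilon$ in Theorem~\ref{thm:lin_global_reg}. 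Finally, for an arbitrary $f\in\V_{h+1,t}$ I would pick the nearest net element $\tilde f$ and split $\bnorm{\bpa{M_{h,a}-\hM_{h,a,t}}f}_{\Sigma_{h,a,t-1}}$ by the triangle inequality into the net term just bounded plus $\bnorm{\bpa{M_{h,a}-\hM_{h,a,t}}(f-\tilde f)}_{\Sigma_{h,a,t-1}}$, where the latter is controlled with probability $1$ by the second inequality of Lemma~\ref{lem:mhatscale} applied to $g=f-\tilde f$ with $\infnorm{g}\le R\varepsilon_0$.

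The delicate point, and the step I expect to be the main obstacle, is balancing the net resolution $\varepsilon_0$: shrinking $\varepsilon_0$ reduces the almost-sure discretization term but inflates $\log|\cN|$. The resolution can be taken as small as $\varepsilon_0=1/\mathrm{poly}(K,H,R)$, which makes the discretization term a bounded constant absorbed into the additive $\lambda^{1/2}(C_P\sqrt{d}+1+C_P)$ piece of $\epsilon$, while $\log(1+2\Theta/\varepsilon_0)=O(\log(KHR))$ keeps the covering contribution at the stated $dA\log(1+4K^2HR^3)$ order. Because the discretization bound of Lemma~\ref{lem:mhatscale} is only polynomial in $t$, a polynomially small $\varepsilon_0$ suffices and the logarithmic covering cost stays benign; verifying that the constants line up exactly with the choice of $\epsilon$ in Theorem~\ref{thm:lin_global_reg} is the main remaining bookkeeping.
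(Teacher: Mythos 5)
Your proposal is correct and follows essentially the same route as the paper's own proof: a triangle-inequality split into a data-independent net term (handled by applying Proposition~\ref{lem:conc_lin} with a union bound over an $\varepsilon$-net of the clipped-max-linear value-function class, whose covering number is $dA\log(1+O(\mathrm{poly}(K,H,R)/\varepsilon))$ exactly as in Lemma~\ref{lem:cover2}) plus an almost-sure discretization term controlled by Lemma~\ref{lem:mhatscale}, with a polynomially small resolution ($\varepsilon=\lambda/(tR)$ in the paper) and a final union bound over $h,a$ only, since the self-normalized bound is anytime-valid. Your parameter-norm accounting is in fact slightly more careful than the paper's (which takes $\beta=tHR/\lambda$ and ignores the $\rho_a$ and $B_{h,a}$ contributions), but this changes nothing structurally.
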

The proof relies on a covering argument similar to the one we used for proving Proposition~\ref{prop:lin_conf}, exploiting 
the fact that the value function class $\V_{h+1,t}$ is composed of slightly simpler functions. The proof is deferred to 
Appendix~\ref{app:lin_conf_global}.
Thus, we can conclude the proof of Theorem~\ref{thm:lin_global_reg} as follows. Taking advantage of the fact that the algorithm 
follows the optimal policy corresponding to the solution of the OPB~equations~\eqref{eq:bellman_opt_lin}, we can use 
the general guarantee of Theorem~\ref{thm:regub} and bound the regret of the algorithm as the sum of the exploration bonuses. 
Noticing that the bonuses can be upper-bounded as
\[
 \textup{CB}_{h,t}^\dag(x,a) = \biprod{\varphi(x)}{B^\dag_{h,a,t}} \le 
 \norm{\varphi(x_{h,t})}_{\Sigma_{h,a,t-1}^{-1}} \bnorm{B^\dag_{h,a,t}}_{\Sigma_{h,a,t-1}} \le \norm{\varphi(x_{h,t})}_{\Sigma_{h,a,t-1}^{-1}} \epsilon_{h,a,t},
\]
where the last step follows from the fact that $B^\dag_{h,a,t} \in \mathcal{B}_t$, the sum of confidence bonuses can be bounded by appealing to 
Lemma~\ref{lem:sumbound}:
\begin{align*}
 \sum_{h=1}^H\sum_{t=1}^K \textup{CB}_{h,t}^\dag(x_{h,t},a_{h,t}) &
 \le \sum_{h=1}^H \sum_a \sum_{t=1}^K \norm{\II{a_{h,t} = a}\varphi(x_{h,t})}_{\Sigma_{h,a,t-1}^{-1}} \epsilon_{h,a,t}
 \\
 &\le 2 \epsilon H \sqrt{d A K \log\pa{\frac{1 + K R^2/\lambda}{\delta}}} 
 = 2 \epsilon \sqrt{H d A T \log\pa{\frac{1 + K R^2/\lambda}{\delta}}}.
\end{align*}
Setting $\lambda = 1$ and noticing that $\epsilon = \wt O(H\sqrt{dA})$ concludes the proof of Theorem~\ref{thm:lin_global_reg}.

\subsection{Technical proofs}\label{app:technical}
\subsubsection{Proof of Proposition~\ref{prop:dual_approx_1}}\label{app:dual_approx_1}
We first note that, since $\wh{P}_{h,a} = \Phi \wh{M}_{h,a}$ and using the second constraint, the first 
constraint in the optimization problem can be rewritten as
\[
 \sum_a {q}_{h+1,a} =\sum_a \wh{M}_{h,a} \Phi W_{h,a}\Phi\omega_{h,a} + \sum_a \pa{\wt{P}_{h,a} - \wh{P}_{h,a}} 
{q}_{h,a}.
\]

Using this, we use a similar argument to Lemma~\ref{lem:strongcomp} to show that strong duality and the KKT conditions hold. We reparameterize by defining $J_h(x,a,x') = q_h(x,a) \wt P_h(x'|x,a)$ and observe that the last constraint in \eqref{eq:linopt1} is can be written as $D(J_h(x,a, \cdot) , \wh P_h(\cdot|x,a) \sum_{x'}J_h(x,a,x')) \leq \epsilon_h(x,a) \sum_{x'} J_h(x,a,x')$ which is convex in $J$. It can also be easily observed that the first two constraints, and the objective are linear in $q,J,\omega$. Thus strong duality holds, and the optimal value of the reparameterized optimization problem is equal to the optimal value of the corresponding Lagrangian dual problem. As in the proof of Lemma~\ref{lem:strongcomp}, by using the reverse reparameterization, we can see that the value of the Lagrangian of the modified problem is equal to that of the original problem in \eqref{eq:linopt1}.  Hence, strong duality holds for \eqref{eq:linopt1}. It then follows that the KKT conditions also hold for this problem.

Given strong duality, we can find the dual of the problem in \eqref{eq:linopt1} by considering the Lagrangian.
The partial Lagrangian of the optimization problem without the last constraint of the primal can be 
written as
\begin{align}
 \cL(q,\kappa,\omega; \val, \theta) 
 =& \sum_{h,a} \iprod{W_{h,a}\Phi\omega_{h,a}}{r_a + \wh{P}_{h,a} 
\val_{h+1} - \Phi \theta_{h,a}} \nonumber
\\
&+\sum_{x,a,h} q_h(x,a) \bigg(\pa{\Phi \theta_{h,a}}(x) + \sum_y \kappa_h(x,a,y) \val_{h+1}(y)  - 
\val_h(x) \bigg) + \val_1(x_1), \label{eq:lag_lin}
\end{align}
for $\kappa_h(x,a,y) =\wt{P}_h(y|x,a)-\wh{P}_h(y|x,a)$. 
Then, by strong duality, the optimal value of the primal is equal to 
\[ \min_{\val,\theta}  \max_{\substack{q \geq 0, \wt P \in \cP \\ \omega, \kappa}} \cL(q,\kappa,\omega; \val, \theta).  \]
Observing that $q_h(x,a) \geq 0$ and using the definition of $\kappa_h(x,a,\cdot)$, we can consider the inner maximization over $\wt P_h(\cdot|x,a) \in \cP_h(x,a)$. We get,

\begin{align*}
\max_{ \wt{P}_h(\cdot|x,a) \in \cP_h(x,a)} \sum_y(\wt P_h(y|x,a) - \wh P_h(y|x,a)) \val_{h+1}(y)  = D_*(\val_{h+1} | \wh{P}, \epsilon)
\end{align*}
by definition of the conjugate.
Substituting this back into \eqref{eq:lag_lin}, we can find the dual from this Lagrangian by a similar technique to Proposition~\ref{prop:dual}. In particular, observe that the objective function will be given by $\val_1(x_1)$. To define the constraints, note that if $\max_{\omega} \sum_{h,a} \iprod{W_{h,a}\Phi\omega_{h,a}}{r_a + \wh{P}_{h,a} \val_{h+1} - \Phi \theta_{h,a}} <\infty$, it must be the case that $\iprod{W_{h,a}\Phi}{r_a + \wh{P}_{h,a} \val_{h+1} - \Phi \theta_{h,a}} =0$, and likewise if $\max_{q>0} \sum_{x,a,h} q_h(x,a) (\pa{\Phi \theta_{h,a}}(x) +D_*(\val_{h+1}|\epsilon_{h,a}, \wh P_h(\cdot|x,a))  - \val_h(x)) <\infty$, it must be the case that $\pa{\Phi \theta_{h,a}}(x) +D_*(\val_{h+1}|\epsilon_{h,a}, \wh P_h(\cdot|x,a))  - \val_h(x)\leq 0$.

Thus the dual optimization problem can be written,
\begin{align*}
\underset{\val}{\text{minimize }}  \;& \val_1(x_1) \label{eq:dualtab}
 \\ 
  \text{Subject to } \;& \val_h(x) \geq \pa{\Phi \theta_{h,a}}(x) + D_*\pa{\val_{h+1} \middle| \epsilon_h(x,a),\wh{P}_h(\cdot|x,a)} \qquad &\forall  (x,a) \in \cZ, h \in [H] \nonumber
   \\& \pa{\Phi W_{h,a} \Phi} \theta_{h,a} =   \Phi\transpose W_{h,a}\pa{r_a + \wh{P}_{h,a}\val_{h+1}} \qquad &\forall a \in \cA, h \in [H].  \nonumber
\end{align*}

It is easily seen that the solution to this can be found by solving the optimistic parametric Bellman equations in \eqref{eq:bellman_opt_lin} with $\CB_{h,t}(x,a) =  D_*\pa{\val_{h+1,t} \middle| \epsilon_{h,t}(x,a),\wh{P}_{h,t}(\cdot|x,a)}$  via backwards recursion.\qed

\subsubsection{The proof of Proposition~\ref{prop:lin_conf}}\label{app:lin_conf}
The proof follows from a construction proposed by \citet{jin2020provably}: it relies on taking a union bound over an appropriately chosen 
covering  of the class of value functions in stage $h+1$ that can be ever produced by solving the optimistic Bellman equations~\eqref{eq:bellman_opt_lin}.
For this purpose, we need the following technical result 
that bounds the covering number of this set:
\begin{lemma} \label{lem:cover1}
Let $\cN(\V,\varepsilon)$ be the $\varepsilon$-covering number of the set $\V$ with respect to the distance $\|V-V'\|_\infty = \sup_{x \in \cS} |V(x) - V'(x)|$. Then, for any stage $h=1,\dots, H$ and episode $t$,
\[ \log(\cN(\V_{h+1,t}, \varepsilon)) \leq Ad \log(1+4tHR/(\lambda \varepsilon)) + d^2A \log(1+4R\alpha/(\lambda \varepsilon^2)) \]
where $R$ is such that $\|\varphi(x)\|_2 \leq R \, \forall x \in \cS$, $\lambda$ is such that the minimum eigenvalue, $\lambda_{\min}(\Sigma_{h,a,t})\geq \lambda \,\, \forall a \in \cA, h\in [H], t \in [K]$.
\end{lemma}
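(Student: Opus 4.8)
The plan is to follow the standard covering-number construction for linear value-function classes, adapted to the parametric form of the functions produced by LSVI-UCB. First I would write down the explicit form of every $V\in\V_{h+1,t}$. By the optimistic Bellman equations~\eqref{eq:bellman_opt_lin} together with the clipping and the bonus $\CB^\dag_{h,t}(x,a)=\alpha\norm{\varphi(x)}_{\Sigma_{h,a,t-1}^{-1}}$, each such function has the shape
\[
V(x)=\min\ev{H-h,\ \max_a\bev{\iprod{\varphi(x)}{\theta_a}+\alpha\sqrt{\varphi(x)\transpose \Lambda_a\varphi(x)}}},
\]
so it is completely determined by the $A$ vectors $\theta_a\in\real^d$ and the $A$ symmetric matrices $\Lambda_a=\Sigma_{h,a,t-1}^{-1}$. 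Hence $\V_{h+1,t}$ embeds into a parametric family indexed by $(\theta_a,\Lambda_a)_{a\in\cA}$, and it suffices to cover this parameter set in a suitable metric.

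Second, I would bound the ranges of the parameters. Since $\theta_a=\rho_a+\hM_{h,a,t}V_{h+1}$ with $\infnorm{V_{h+1}}\le H$, Lemma~\ref{lem:mhatscale} gives $\twonorm{\theta_a}\le C_r+tHR/\lambda$, and because $\Sigma_{h,a,t-1}\succeq\lambda I$ we have $\Lambda_a\preceq\lambda^{-1}I$, so $\norm{\Lambda_a}_F\le\sqrt{d}/\lambda$. Third, I would establish the Lipschitz dependence of $V$ on the parameters in the $\infty$-norm. The outer $\min$ and the $\max_a$ are both non-expansive, so it is enough to control each action's summand: for the linear part, $\abs{\iprod{\varphi(x)}{\theta_a-\theta_a'}}\le R\twonorm{\theta_a-\theta_a'}$, while for the bonus I would use $\abs{\sqrt{u}-\sqrt{v}}\le\sqrt{\abs{u-v}}$ together with $\abs{\varphi(x)\transpose(\Lambda_a-\Lambda_a')\varphi(x)}\le R^2\norm{\Lambda_a-\Lambda_a'}_F$, yielding a bonus perturbation of order $\alpha R\sqrt{\norm{\Lambda_a-\Lambda_a'}_F}$. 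This square-root is precisely what forces precision of order $\varepsilon^2$ on the matrix variables.

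Finally, I would assemble the cover as a product of Euclidean $\varepsilon$-nets. For the $\theta_a$'s, $A$ balls of radius $\approx tHR/\lambda$ in $\real^d$ covered to accuracy proportional to $\varepsilon$ contribute a $\log$-cardinality of order $Ad\log(1+4tHR/(\lambda\varepsilon))$; for the $\Lambda_a$'s, $A$ balls of radius $\approx\sqrt{d}/\lambda$ in the $d^2$-dimensional space of symmetric matrices covered to accuracy proportional to $\varepsilon^2$ contribute $d^2A\log(1+4R\alpha/(\lambda\varepsilon^2))$. Taking logarithms of the product of these two covering numbers and absorbing constants yields the claimed bound. The main obstacle is the nonlinear bonus term: obtaining the clean $\log(1+4R\alpha/(\lambda\varepsilon^2))$ dependence requires the $\sqrt{\cdot}$ Lipschitz estimate and the correct matrix-norm bound, and one must verify that the clipping and the maximum over actions do not inflate the $\infty$-distance between value functions, which holds because both operations are non-expansive.
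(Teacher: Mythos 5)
Your proposal is correct and follows essentially the same route as the paper's proof (which itself adapts Lemma D.6 of \citet{jin2020provably}): parametrize $\V_{h+1,t}$ by $(\theta_a,\Sigma_a^{-1})_{a\in\cA}$, bound the parameter ranges via Lemma~\ref{lem:mhatscale}, exploit that the clipping and the maximum over actions are non-expansive together with the $R$-Lipschitz linear part and the square-root Lipschitz bonus part (which is what forces $\varepsilon^2$-accuracy on the matrix variables), and multiply the covering numbers of the Euclidean parameter balls. If anything, your write-up is slightly more careful than the paper's in two spots—you include the $C_r$ contribution to $\twonorm{\theta_a}$ and correctly use $\norm{\Lambda_a}_F\le\sqrt{d}/\lambda$ rather than $1/\lambda$—but these only affect constants inside the logarithms, not the structure of the argument.
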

The proof of Lemma~\ref{lem:cover1} is similar to that of Lemma D.6 of \cite{jin2020provably}, and exploits that the class $\V_{h+1,t}$ is 
parametrized smoothly by $\theta$ and $\Sigma$. We relegate the proof to Appendix~\ref{app:cover1}.
As for the proof of Proposition~\ref{prop:lin_conf}, let us fix any $h,a$, $\varepsilon>0$ and any $\val\in\V_{h+1,t}$, and let $\tV$ be in the $\varepsilon$-covering of $\V_{h+1,t}$ defined in Lemma~\ref{lem:cover1} such that
$\|\val - \tV\|_\infty \leq \varepsilon$. Then, we have
\begin{align*}
 &\iprod{P_h(\cdot|x,a) - \hP_{h,t}(\cdot|x,a)}{\val} 
 \\
 &\qquad\qquad= \iprod{P_h(\cdot|x,a) - \hP_{h,t}(\cdot|x,a)}{\tV} + \iprod{P_h(\cdot|x,a) - \hP_{h,t}(\cdot|x,a)}{\val - \tV}.
 \end{align*}
 The second term can be bounded by introducing the notation $\tg = \val- \tV$ and writing
\begin{align*}
\iprod{P_h(\cdot|x,a) - \hP_{h,t}(\cdot|x,a)}{\tg} &= \iprod{\varphi(x)}{\pa{M_h - \hM_{h,a,t}}\tg} 
\le \norm{\varphi(x)}_{\Sigma_{h,a,t}^{-1}} \norm{\pa{M_h - \hM_{h,a,t}}\tg}_{\Sigma_{h,a,t}}
\\
&\le \varepsilon \pa{\lambda^{-1/2} tR + \lambda^{1/2}  C_P}\norm{\varphi(x)}_{\Sigma_{h,a,t}^{-1}},
\end{align*}
where we used Lemma~\ref{lem:mhatscale} with $B= \varepsilon$ in the last step. 
As for the first term, we use a union bound over all $\tV$ in the $\varepsilon$-covering of $\V_{h+1,t}$ and 
Lemma~\ref{lem:conc_lin_local}. Denoting the covering number as $\cN_\varepsilon$ and setting $\delta' = \delta/HA$, we can see that for any $\wt V$ in the $\varepsilon$-covering, with probability greater than $1-\delta'$, we have
\begin{align*}
&\iprod{P_h(\cdot|x,a) - \hP_{h,t}(\cdot|x,a)} {\wt V} \leq  \|\varphi(x)\|_{\Sigma_{h,a,t}^{-1}} C_t(\delta'/\cN_\varepsilon),
\end{align*}
which can be further bounded as
\begin{align*}
 &C_t(\delta'/\cN_\varepsilon) - C_P H \sqrt{\lambda d}= 2 H\sqrt{d \log\pa{1 + t R^2/\lambda} + \log\pa{\mathcal N_\varepsilon / \delta'}} 
 \\
 &\ \le 2 H\sqrt{d \log\pa{1 + t R^2/\lambda} + \log(1/\delta') + dA \log(1+4HtR/(\lambda \varepsilon)) + d^2A \log(1+4R\alpha/(\lambda \varepsilon^2))}
 \\
 &\ \le 2 H\sqrt{d \log\pa{1 + t R^2/\lambda} + \log(1/\delta') + dA \log(1+4Ht^2R^2/\lambda^2 ) + d^2A \log(1+4R^3Kt^2/\lambda^3 )},
\end{align*}
where we set $\varepsilon = \lambda/(tR)$ and used the condition $\alpha \le K$ in the last step. With the same choice of $\varepsilon$,
we also have
\[
 \varepsilon \pa{\lambda^{-1/2} tR + \lambda^{1/2}  C_P} \le \lambda^{1/2} \pa{1 + C_P}.
\]
Noticing that the sum of the two latter terms is bounded by $\alpha$ and taking a union bound over all $h,a$ concludes the proof.
\qed

\subsubsection{The proof of Lemma~\ref{lem:cover1}}\label{app:cover1}
We first note that, due to the definition of the parameter vectors $\theta_{h,a,t}^+$ as the solution of the OPB~equations~\eqref{eq:bellman_opt_lin} with
$\bnorm{V_{h+1,t}^+}_\infty \le H$, we have 
\[
\bnorm{\theta_{h,a,t}^+} \le \frac{tHR}{\lambda} \stackrel{\text{def}}{=} \beta,
\]
where the inequality follows from Lemma~\ref{lem:mhatscale}. To preserve clarity of writing, we omit explicit references to $t$ below.
By design of the algorithm, we can see that the value functions can be written 
with the help of the function $U_{h,\theta,\Sigma}$ defined as
\[
 U_{h,\theta,\Sigma}(x) = \min\ev{ H-h+1, \max_{a \in \cA}\ev{ \iprod{\varphi(x)}{\theta_{h,a}} + \alpha \|\varphi(x)\|_{\Sigma^{-1}_{h,a}} }}
\]
for some $\alpha>0$.
Indeed, the class of value functions can be written as
\[
 \V_h = \ev{ U_{h,\theta,\Sigma} : \ \ \max_a \norm{\theta_{h,a}} \le \beta,\ \  \max_a \opnorm{\Sigma_{h,a}^{-1}}\le 1/\lambda}.
\]

We show below that $U_{h,\theta,\Sigma}$ is a smooth function of the parameters $\theta_{h,a}$ and $\Sigma_{h,a}^{-1}$, which will allow us to prove a tight bound on the covering number of the class $\V_{h}$.
Indeed, letting $V_h = U_{h,\theta,\Sigma}$ and $\tV_h = U_{h,\ttheta,\wt\Sigma}$ for an arbitrary set of parameters $\theta,\Sigma,\ttheta,\wt\Sigma$, we have
\begin{align*}
\|V_h - \wt V_h\|_\infty &= \sup_{x \in \cS} \bigg|  \min\{ H-h+1, \max_{a \in \cA}\{ \varphi(x)\transpose \theta_{h,a} + \alpha \| \varphi(x)\|_{\Sigma^{-1}_{h,a}} \} \}  
	\\ & \hspace{100pt} -  \min\{ H-h+1, \max_{a \in \cA}\{ \varphi(x)\transpose \wt \theta_{h,a} + \alpha \| \varphi(x)\|_{\wt \Sigma^{-1}_{h,a}} \} \}  \bigg |
\\ & \leq \sup_{x \in \cS} \bigg|\max_{a \in \cA}\{ \varphi(x)\transpose \theta_{h,a} + \alpha \| \varphi(x)\|_{\Sigma^{-1}_{h,a}} \}  - \max_{a \in \cA}\{ \varphi(x)\transpose \wt \theta_{h,a} + \alpha \| \varphi(x)\|_{\wt \Sigma^{-1}_{h,a}} \} \bigg |
\\ & \leq \sup_{x \in \cS, a \in \cA} \bigg| \varphi(x)\transpose \theta_{h,a} + \alpha \| \varphi(x)\|_{\Sigma^{-1}_{h,a}}  - \varphi(x)\transpose \wt \theta_{h,a} + \alpha \| \varphi(x)\|_{\wt \Sigma^{-1}_{h,a}}  \bigg| 
\\ & \leq \sup_{x \in \cS,a \in \cA} \bigg| \varphi(x)\transpose (\theta_{h,a} - \wt \theta_{h,a})  + \sqrt{ \varphi(x) \transpose (\alpha \Sigma_{h,a}^{-1} - \alpha \wt \Sigma_{h,a}^{-1}) \varphi(x)} \bigg|
\\ & \leq \sup_{a \in \cA} R\| \theta_{h,a} - \wt \theta_{h,a}\|_2 + \sup_{a \in \cA} R\|\alpha \Sigma_{h,a}^{-1} - \alpha \wt \Sigma_{h,a}^{-1} \|_{\text{op}}
\\ & \leq  \sup_{a \in \cA} R \| \theta_{h,a} - \wt \theta_{h,a}\|_2 + \sup_{a \in \cA} R \alpha\| \Sigma_{h,a}^{-1} - \wt \Sigma_{h,a}^{-1} \|_F
\end{align*}
since $\|\varphi(x)\|_2 \leq  R$  and we have used $\|A\|_{\text{op}}$ to denote the operator norm and $\|A\|_F$ the Frobenius norm of a matrix $A$.

We then note that the $\varepsilon/2$-covering number of the set $\Theta = \{ (\theta_a)_{a \in \cA}: \theta_a \in \bR^d, \, \sup_{a \in \cA}\|\theta_a\|_2 \leq \beta \}$ is bounded by $(1+4\beta/\varepsilon)^{Ad}$, and that $\varepsilon/2$-covering number of the set $\Gamma = \{  (\Sigma_a)_{a \in \cA}: \Sigma_a \in \bR^{d\times d}, \,\sup_{a \in \cA} \|\Sigma_a\|_F \leq 1/\lambda \}$ is bounded by $(1+4/(\lambda \varepsilon^2))^{d^2A}$. %
These results follow due to the standard fact that the $\varepsilon$-covering number of a ball in $\bR^d$ with radius $R>0$ with $\ell_2$ distance is bounded by $(1+2R/\varepsilon)^d$, 
and that $\Theta$ and $\Gamma$ are $(dA)$-dimensional and $(d^2A)$-dimensional, respectively.

From the above discussion, we can conclude that for any $V_h \in \V_h$, there is a $\wt V_h$ parameterized by $\wt \theta_h$ in the $\varepsilon/2$-covering of $\Theta_h$, and $\wt \Sigma_h$ in the $\varepsilon/2$-covering of $\Gamma_h$ such that,
\[ \|V_h - \wt V_h\|_\infty \leq R \varepsilon/2 + R\alpha \varepsilon/2. \]
By rescaling of the covering numbers, we can see that the logarithm of the $\varepsilon$-covering number of $\V_h$ can be bounded by
\begin{align*}
\log(\cN(\V_h,\varepsilon)) &\leq \log(\cN(\Theta_h, \varepsilon/(2R))) + \log(\cN(\Gamma_h, \varepsilon/(2\alpha R)) 
\\ &\leq Ad \log(1+4\beta R/\varepsilon) + d^2A \log(1+4R\alpha/(\lambda \varepsilon^2)).
\end{align*}
Substituting in $\beta=\frac{tHR}{\lambda}$ gives the result.

\qed

\subsubsection{The proof of Proposition~\ref{prop:lin_conf_global}}\label{app:lin_conf_global}
The proof is similar to that of Proposition~\ref{prop:lin_conf}, in that it also relies on a covering argument to prove uniform 
convergence over the set of potential value functions.
The following technical result bounds the covering number of this set:
\begin{lemma} \label{lem:cover2}
Let $\cN(\V,\varepsilon)$ be the $\varepsilon$-covering number of some set $\V$ with respect to the distance $\|V-V'\|_\infty = \sup_{x \in \cS} |V(x) - V'(x)|$. Then, for any stage $h=1,\dots, H$ and episode $t$,
\[ \log(\cN(\V_{h+1,t}, \varepsilon)) \leq dA \log(1+4tHR^2/(\varepsilon\lambda)).\]
\end{lemma}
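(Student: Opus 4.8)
The plan is to mirror the covering argument from the proof of Lemma~\ref{lem:cover1}, but to exploit the crucial simplification that in the global-constraint setting the value-function class $\V_{h+1,t}$ is parametrized \emph{only} by $d$-dimensional vectors, one per action. Indeed, as recorded in Equation~\eqref{eq:vform} and in the proof of Proposition~\ref{prop:Gbounds}, every value function produced by the algorithm has the form $V(x) = \min\{H-h+1,\,\max_a \iprod{\varphi(x)}{\theta_{h,a}}\}$, where the exploration bonus $\iprod{\varphi(x)}{B_{h,a,t}}$ has been folded into the parameter $\theta_{h,a} = \rho_a + \hM_{h,a,t} V_{h+1} + B_{h,a,t}$. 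This is precisely where the present argument departs from that of Lemma~\ref{lem:cover1}: there, the bonus $\alpha\norm{\varphi(x)}_{\Sigma^{-1}}$ was a nonlinear function of $\Sigma^{-1}$ and forced an additional covering of the $d\times d$ matrices $\Sigma_{h,a}^{-1}$, contributing the extra $d^2A$ term; here the bonus is linear in the features, so no matrix covering is needed and only the $dA$-dimensional parameter $(\theta_a)_{a\in\cA}$ must be covered.

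First I would bound the norm of the parameter. Using Assumption~\ref{assn:real} to bound $\twonorm{\rho_a}\le C_r$, Lemma~\ref{lem:mhatscale} (with $B=H$, since $\infnorm{V_{h+1}}\le H$) to bound $\norm{\hM_{h,a,t}V_{h+1}}\le tHR/\lambda$, and the constraint $B_{h,a,t}\in\mathcal{B}_t$ together with $\lambda_{\min}(\Sigma_{h,a,t-1})\ge\lambda$ to bound $\norm{B_{h,a,t}}\le\epsilon_{h,a,t}/\sqrt{\lambda}$, the triangle inequality yields $\max_a\norm{\theta_{h,a}}\le\beta$ for some $\beta = O(tHR/\lambda)$. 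Thus the relevant parameter set is the $(dA)$-dimensional ball $\Theta = \{(\theta_a)_{a\in\cA}: \max_a\norm{\theta_a}\le\beta\}$.

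Next I would establish that the map from parameters to value functions is Lipschitz in the sup-norm. Since the clipping $\min\{H-h+1,\cdot\}$ is $1$-Lipschitz and the maximum over actions can be bounded termwise, writing $V$ and $\tV$ for the value functions induced by $\theta$ and $\ttheta$ gives
\[
 \norm{V-\tV}_\infty \le \sup_{x,a}\bigl|\iprod{\varphi(x)}{\theta_{h,a}-\ttheta_{h,a}}\bigr| \le R\max_a\norm{\theta_{h,a}-\ttheta_{h,a}},
\]
using $\twonorm{\varphi(x)}\le R$. Consequently any $(\varepsilon/R)$-cover of $\Theta$ induces an $\varepsilon$-cover of $\V_{h+1,t}$. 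Finally I would invoke the standard fact that the $\eta$-covering number of a radius-$\beta$ ball in $\bR^{dA}$ is at most $(1+2\beta/\eta)^{dA}$, take $\eta=\varepsilon/R$, and substitute $\beta=O(tHR/\lambda)$ to obtain $\log\cN(\V_{h+1,t},\varepsilon)\le dA\log(1+4tHR^2/(\varepsilon\lambda))$, after absorbing the lower-order contributions to $\beta$ into the constant.

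The argument is essentially routine, and I expect no serious obstacle: the only point requiring care is verifying that the combined parameter bound $\beta$ is genuinely dominated by the $tHR/\lambda$ term, so that the constant $4$ in the final logarithm is correct; this amounts to checking that $C_r$ and $\epsilon_{h,a,t}/\sqrt{\lambda}$ are lower-order (for $\lambda=1$ we have $\epsilon_{h,a,t}=\wt{O}(H\sqrt{dA})$, which is indeed dominated by the linear-in-$t$ term). The conceptual work has already been done in isolating the value-function form~\eqref{eq:vform}; the rest simply follows the template of Lemma~\ref{lem:cover1} with the matrix-covering step deleted.
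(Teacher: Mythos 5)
Your proposal is correct and follows essentially the same route as the paper's own proof: both parametrize $\V_{h+1,t}$ by the per-action vectors $\theta_{h,a}$, bound $\max_a \norm{\theta_{h,a}}$ by $\beta = O(tHR/\lambda)$ via Lemma~\ref{lem:mhatscale}, establish the Lipschitz estimate $\norm{V - \tV}_\infty \le R \max_a \norm{\theta_{h,a} - \ttheta_{h,a}}$, and invoke the standard ball-covering bound to obtain $dA \log(1 + 4\beta R/\varepsilon)$. If anything, you are slightly more careful than the paper, which silently omits the $\rho_a$ and $B_{h,a,t}$ contributions to the parameter norm that you explicitly argue are lower order.
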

To reduce clutter, we defer the proof to Appendix~\ref{app:cover2}. To proceed, we fix $h,a$, $\varepsilon > 0$ and an arbitrary 
$V\in\V_{h+1,t}$, and consider a $\tV$ in the covering defined above such that $\bnorm{V-\tV}_\infty\le \varepsilon$. Then, by the triangle inequality, 
we have
\begin{align*}
 \bnorm{\bpa{M_{h,a} - \hM_{h,a,t}}V}_{\Sigma_{h,a,t-1}} &\le \bnorm{\bpa{M_{h,a} - \hM_{h,a,t}}\tV}_{\Sigma_{h,a,t-1}} + \bnorm{\bpa{M_{h,a} - \hM_{h,a,t}}\bpa{V - \tV}}_{\Sigma_{h,a,t-1}}
 \\
 &\le \bnorm{\bpa{M_{h,a} - \hM_{h,a,t}}\tV}_{\Sigma_{h,a,t-1}} + \varepsilon \pa{\lambda^{-1/2} tR + \lambda^{1/2}  C_P},
\end{align*}
where we used Lemma~\ref{lem:mhatscale} with $B= \varepsilon$ in the last step. 
Setting $\delta' = \delta / (HA)$, the first term can be bounded with probability at least $1-\delta'$ 
by exploiting that $\tV$ is in the covering, and using the union bound to show that for every such $\tV$, we simultaneously have
\begin{align*}
 &\bnorm{\bpa{M_{h,a} - \hM_{h,a,t}}\tV}_{\Sigma_{h,a,t-1}} \le C_t(\delta'/\mathcal N_\varepsilon) = 2 H\sqrt{d \log\pa{1 + t R^2/\lambda} + \log\pa{\mathcal N_\varepsilon / \delta'}} + C_P H \sqrt{\lambda d}
 \\
 &\qquad\le  2 H\sqrt{d \log\pa{1 + t R^2/\lambda} + dA \log(1+4tHR^2/(\varepsilon\lambda)) +  \log\pa{ 1/ \delta'}} + C_P H \sqrt{\lambda d}
\end{align*}
Putting the two bounds together and setting $\varepsilon = \lambda/(tR)$ gives
\begin{align*}
 \bnorm{\bpa{M_{h,a} - \hM_{h,a,t}}V}_{\Sigma_{h,a, t-1}} \le& 2 H\sqrt{d \log\pa{1 + t R^2/\lambda} + dA \log(1+4t^2HR^3/\lambda^2) +  \log\pa{HA/ \delta}} 
 \\
 &+ \lambda^{1/2} \pa{C_P H \sqrt{d} + 1 + C_P}.
\end{align*}
This is clearly upper-bounded by the chosen value of $\epsilon$. Taking a union bound over all $h,a$ concludes the proof. \qed

\subsubsection{The proof of Lemma~\ref{lem:cover2}}\label{app:cover2}
The proof is similar to that of Lemma~\ref{lem:cover1}, although simpler due to the simpler form of the value functions in this case.
We stary by noting that, due to the definition of the parameter vectors $\theta_{h,a,t}^+$ as the solution of the OPB~equations~\eqref{eq:bellman_opt_lin} with
$\bnorm{V_{h+1,t}^+}_\infty \le H$, we have 
\[
\bnorm{\theta_{h,a,t}^+} \le \frac{tHR}{\lambda} \stackrel{\text{def}}{=} \beta,
\]
where the inequality follows from Lemma~\ref{lem:mhatscale}. 
Given the definition of the algorithm, it is easy to see that the value functions can be with the help of the function $U_{h,\theta}$ defined as
\[
 U_{h,\theta}(x) = \min\ev{ H-h+1, \max_{a \in \cA} \iprod{\varphi(x)}{\theta_{h,a}}},
\]
in the form $V_{h,t} = U_{h,\theta}$ for some $\theta$ with norm bounded by $\beta$. Thus, the set of value functions can be written as
\begin{align*}
 \V_h = \ev{ U_{h,\theta} :\ \ \norm{\theta} \le \beta }.
 \end{align*}
We show below that $U$ is a smooth function of $\theta$, which will allow us to prove a tight bound on the covering number of the class $\V_{h}$. Indeed, this can be seen by
\begin{align*}
 \infnorm{U_{h,\theta} - U_{h,\theta'}} &\le \sup_{x\in\Sw} \left| \max_{a \in \cA} \iprod{\varphi(x)}{\theta_{h,a}} - \max_{a \in \cA} \iprod{\varphi(x)}{\theta_{h,a}'}\right|
 \le \sup_{x\in\Sw} \max_{a\in\cA} \left|\iprod{\varphi(x)}{\theta_{h,a} - \theta_{h,a}'}\right| 
 \\
 &\le R \max_a \norm{\theta_{h,a} - \theta_{h,a}'}.
\end{align*}
Thus, the $\varepsilon/2$-covering number of the set $\Theta = \{ (\theta_a)_{a \in \cA}: \theta_a \in \bR^d, \, \sup_{a \in \cA}\|\theta_a\|_2 \leq \beta \}$ is bounded by $(1+4\beta/\varepsilon)^{Ad}$, which follows from the standard fact that the $\varepsilon$-covering number of a ball in $\bR^d$ with 
radius $c>0$ in terms of the $\ell_2$ distance is bounded by $(1+2c/\varepsilon)^d$. Thus, we have that for any $V_h \in \V_h$, 
there exists a $\wt V_h$ parameterized by $\wt \theta_h$ in the $\varepsilon/2$-covering of $\Theta_h$ such that,
\[ \|V_h - \wt V_h\|_\infty \leq R \varepsilon/2. \]
By rescaling of the covering numbers, we can see that the logarithm of the $\varepsilon$-covering number of $\V_h$ can be bounded by
\begin{align*}
\log(\cN(\V_h,\varepsilon)) &\leq \log(\cN(\Theta_h, \varepsilon/(2R))) \leq dA \log(1+4\beta R/\varepsilon),
\end{align*}
giving the result.

\end{document}